\newcommand{\xmark}{\ding{55}}
\def\tr{\mathop{\text{tr}}\kern.2ex}
\def\tZ{{\tilde Z}}
\def\P{{\mathbb P}}
\def\Q{{\mathbb Q}}
\def\E{{\mathbb E}}
\def\X{{\mathcal X}}
\def\Y{{\mathcal Y}}
\def\Z{{\mathcal Z}}
\def\vd{\textsf{VD}}
\def\bias{\textsf{Bias}}
\def\ny{{\tilde{Y}}}
\def\sign{\mathop{\text{sign}}}
\long\def\comment#1{}
\def\tr{\mathop{\text{Tr}}}
\newcommand{\bel}{\begin{eqnarray}\label}
\newcommand{\eel}{\end{eqnarray}}
\newcommand{\bes}{\begin{eqnarray*}}
\newcommand{\ees}{\end{eqnarray*}}
\def\ud{\, \text{d}}
\newcommand{\IM}{{\textsf{FIT}}}
\newcommand{\squishlist}{
\begin{list}{{{\small{$\bullet$}}}}
{\setlength{\itemsep}{3pt}      \setlength{\parsep}{1pt}
\setlength{\topsep}{1pt}       \setlength{\partopsep}{0pt}
\setlength{\leftmargin}{1em} \setlength{\labelwidth}{1em}
\setlength{\labelsep}{0.5em} } }
\newcommand{\squishend}{  \end{list}  }
\newcommand{\tran}{{g}}
\def\##1\#{\begin{align}#1\end{align}}
\def\$#1\${\begin{align*}#1\end{align*}}
\newtheorem{theorem}{Theorem}
\newtheorem{lemma}{Lemma}
\newtheorem{corollary}{Corollary}
\newtheorem{proposition}[theorem]{Proposition}
\newtheorem{definition}{Definition}
\newtheorem{remark}[theorem]{Remark}
\DeclareMathOperator*{\argmax}{argmax}
\newcommand{\PP}{P}
\newcommand{\QQ}{Q}
\newcommand{\RR}{\mathbb R}
\newcommand{\f}{f^*}
\newcommand{\D}{\mathcal D}
\newcommand{\nY}{\tilde{Y}}
\title{When Optimizing  $f$-divergence is Robust \\with Label Noise}
\author{Jiaheng Wei ~and~ Yang Liu\thanks{Corresponding author. \texttt{yangliu@ucsc.edu}.} \\
Department of Computer Science and Engineering\\
University of California, Santa Cruz\\
Santa Cruz, CA 95060, USA \\
\texttt{\{jiahengwei, yangliu\}@ucsc.edu} \\
}
\begin{document}
% \begin{preview}
\iclrfinalcopy

% print title, author, date
\maketitle
\begin{abstract}
    We show when maximizing a properly defined $f$-divergence measure with respect to a classifier's predictions and the supervised labels is robust with label noise. Leveraging its variational form, we derive a nice decoupling property for a family of $f$-divergence measures when label noise presents, where the divergence is shown to be a linear combination of the variational difference defined on the clean distribution and a bias term introduced due to the noise. The above derivation helps us analyze the robustness of different $f$-divergence functions. With established robustness, this family of $f$-divergence functions arises as useful metrics for the problem of learning with noisy labels, which do not require the specification of the labels' noise rate. When they are possibly not robust, we propose fixes to make them so. In addition to the analytical results, we present thorough experimental evidence. Our code is available at \url{https://github.com/UCSC-REAL/Robust-f-divergence-measures}.
\end{abstract}

\section{Introduction}

%The challenge of learning with noisy labels. 
A machine learning system continuously observes noisy training annotations and it remains a challenge to perform robust training in such scenarios. Earlier and classical approaches rely on estimation processes to understand the noise rate of the labels and then leverage this knowledge to perform label correction \citep{patrini2017making,lukasik2020does}, or loss correction \citep{natarajan2013learning,Ltl_2015_reweighting,patrini2017making}, or both, among many other more carefully designed approaches (please refer to our related work section for more detailed coverage). Recent works have started to propose robust loss functions or metrics that do not require the above estimation \citep{charoenphakdee2019symmetric,xu2019l_dmi,liu2019peer,cheng2021learning}. Clear advantages of the latter approaches include their easiness in implementation, as well as their robustness to noisy estimates of the parameters. This work mainly contributes to the second line of studies and aimed to propose relevant loss functions and measures that are inherently robust with label noise. 

We start with formulating the problem of maximizing an $f$-divergence defined between a classifier's prediction and the labels:
%  \vspace{-0.1in}
\begin{align}
    h^*_f = \argmax_h D_f\left(\PP_{h \times Y}||\QQ_{h \times Y} \right ),\label{eqn:fd}
\end{align}
%\vspace{-0.03in}
where in above $D_f$ is an $f$-divergence function, $\PP$ and $\QQ$ are the joint and product (marginal) distribution %\wjh{Shall we write product marginal distributions or product (marginal) distributions here?}
of the classifier $h$'s predictions on a feature space $X$ and label $Y$. Though optimizing the $f$-divergence measure
% \wjh{Shall we unify the saying of f-divergence metrics and f-divergence measures?}\yl{good point. when we say optimizing something let's go with measures; when we mention f divergence in general, let's say f divergence functions}\wjh{Will modify the saying once finish the 1st proofreading}
is in general not the same as finding the Bayes optimal classifiers, we show these measures encourage a classifier that maximizes an extended definition of $f$-mutual information between the classifier's prediction and the true label distribution. We will also provide analysis for when the maximizer of this $f$-divergence coincides with the Bayes optimal classifier.
%\yl{epxlain}

Building on a careful %analysis\wjh{Shall we delete ethe second ``analysis"?} and 
treatment of its variational form, we then reveal a nice property that helps establish the robustness of the $f$-divergence specified in Eqn. (\ref{eqn:fd}): the variational difference term defined with noisy labels is an affine transformation of the clean variational difference, subject to an addition of a bias term. Using this result, we analyze under which conditions maximizing %\wjh{a or an f-divergence? Both Wikipedia and grammar check say it should be an. I changed to ``an" at present.}
an $f$-divergence measure would be robust to label noise. In particular, we demonstrate strong robustness results for Total Variation divergence, identify conditions under which several other divergences, including Jensen-Shannon divergence and Pearson $\X^2$ divergence, are robust.  The resultant $f$-divergence functions offer ways to learn with noisy labels, without estimating the noise parameters. As mentioned above, this distinguishes our solutions from a major line of previous studies that would require such estimates. When the $f$-divergence functions are possibly not robust with label noise, our analysis also offers a new way to perform ``loss correction". We'd like to emphasize that instead of offering one method/loss/measure, our results effectively offer a family of functions that can be used to perform this noisy training task. Our contributions summarize as follows:
\squishlist    
\item We show a certain set of $f$-divergence measures that are robust with %\wjh{sometimes we use to, sometimes we use with.} 
label noise (some under certain conditions). The corresponding $f$-divergence functions provide the community with robust learning measures that do not require the knowledge of the noise rates. %, which is regarded as a favorable property. % We also show when the maximizer of the $f$-divergence measure coincides with the Bayes optimal classifier. 
    \item When the $f$-divergence measures are possibly not robust with label noise, our analysis provides ways to correct the $f$-divergence functions to offer robustness. % learning robust to label noise. 
    This process would require the estimation of the noise rates and our results contribute new ways to leverage existing estimation techniques to make the training more robust. 
    \item We empirically verified the effectiveness of optimizing $f$-divergences when noisy labels present. We opensource our solutions at  \url{https://github.com/UCSC-REAL/Robust-f-divergence-measures}.
    \squishend
\subsection{Related works}

%Learning with noisy labels \yl{this part can copy some structure and related references from the eccv submission.}
% This literature started with the random classification noise model, where observed labels are flipped independently with a certain probability.
% \cite{bylander1994learning,cesa1999sample,cesa2011online,ben2009agnostic,scott2013classification,natarajan2013learning,scott2015rate}. More recent efforts focused on learning with asymmetric noisy data (or also referred as \emph{class-conditional} random classification noise (CCN)) \cite{stempfel2009learning,scott2013classification,natarajan2013learning,scott2015rate,van2015learning,menon2015learning}.

%\paragraph{Label correction and loss correction:} 

%\paragraph{Design robust loss function:} 

%A good number of works on making deep neural network robust to label noise follow fundamentally a stream of research on learning with noisy labels. 
The now most popular approach of dealing with label noise is to first estimate the noise transition matrix and then use this knowledge to perform loss or sample correction  \citep{scott2013classification,natarajan2013learning,patrini2017making,lu2018minimal,han2018co,tanaka2018joint,yao2020dual,zhu2021clusterability}. In particular, the surrogate loss \citep{scott2013classification,natarajan2013learning,scott2015rate,van2015learning,menon2015learning}  uses the transition matrix to define unbiased estimates of the true losses. 
%becomes the go-to approach of dealing with noisy labels, for a variety of supervised learning tasks. 
Other works include \citep{sukhbaatar2014learning,xiao2015learning}, which consider building a neural network to facilitate the learning of noise rates or noise transition matrix. Symmetric loss has been studied and conditions have been identified for when there is no need to estimate noise rate \citep{manwani2013noise,ghosh2015making,ghosh2017robust,van2015learning,charoenphakdee2019symmetric}. Nonetheless, it remains a challenge to develop training approaches without requiring knowing the noise rates for more generic settings.

%\paragraph{Learning without requiring knowing the error rates:} 
%Most of the above works share the requirements of estimating the noise rate (or transition matrix between noisy and true labels). %\citep{liu2015classification,xiao2015learning,patrini2017making,lu2018minimal}. 
More recently, \citep{zhang2018generalized,amid2019robust} proposed robust losses for neural networks. 
When noise rates are asymmetric (label class-dependent), \citep{xu2019l_dmi} proposed an information-theoretic loss that is also robust to asymmetric noise rates. There are also some trials on modifying the regularization term to improve generalization ability with the existence of label noise \citep{jenni2018deep,Yi_2019_CVPR}, and on providing complementary negative labels \citep{kim2019nlnl}. %Most relevant to us is peer loss \cite{liu2019peer}. % and negative learning \cite{kim2019nlnl}.
Peer loss \citep{liu2019peer} is a recently proposed loss function that does not require knowing noise rates. %Part of our results contribute to this line of works. 
 
% \begin{itemize}
%     \item 
% works on label correction and loss correction.
% \item When error rates are symmetric, robust loss function design. 
% \item When they are asymmetric, a recent couple of works kicked off the study of learning without requiring knowing the error rates. Part of our results contribute to this line of works. 
% \end{itemize}

$f$-divergence is a popular information theoretical measure, and has been widely used and studied.
% \wjh{Shall we write this sentence as: $f$-divergence function is a popular information theoretical measure in optimization tasks.}
% \yl{add a couple more references to $f$-divergence and its applications?}
%For example, %\citep{quan_f_petz, quan_f_Mat} proposed quantum versions of $f$-divergence as well as error correction. 
%\citep{error_bound_f_div} provided relative error bounds based on $f$-divergences to relate the difference % \yl{what is error mismatch?} \wjh{I think it means the error difference between Bayes decision and a model based decision rule} 
%between Bayes and model-based statistical recognition systems. 
%\citep{Imi_f_div} generalized the imitation learning framework in the view of optimizing $f$-divergence measures. Most relevant to us, %\citep{invar_f_div} explored the invariance of $f$-divergence and its application to speech recognition. 
Most relevant to us, $f$-GAN was proposed in \citep{nowozin2016f} to study $f$-divergence in training generative neural samplers. %\cite{var_f_min} discuss a variational approach of training latent variable models using recently introduced Spread Divergence. \cite{var_tail_f} propose a new class of tail-adaptive f-divergence and its application in deep reinforcement learning. 
To our best knowledge, ours is the first to study the robustness of $f$-divergence measures in the context of improving the robustness of training with noisy labels. % without requiring knowledge of error rates.

\section{Learning with noisy labels using $f$-divergence}

Our solution ties to the definition of $f$-divergence. The $f$-divergence between two distributions $\PP$ and $\QQ$ with probability density function $p$ and $q$ being measures for $Z \in \Z$\footnote{We use $Z$ instead of $X$ as conventionally done for a good reason - we will be reserving $X$ to explicitly denote the features.} is defined as: %\wjh{two consecutive defined, shall we directly delete ``defined as''}
\begin{align}
D_{f}(P\|Q) = \int_{\Z} q(Z) f\biggl(\frac{p(Z)}{q(Z)}\biggr)\ud Z~.\label{eq:def-div}
\end{align}
$f(\cdot )$ is a convex function such that $f(1) = 0$. Examples include KL-divergence when $f(v) = v\log v$ and Total Variation (TV) divergence with $f(v) = \frac{1}{2}|v-1|$. Other examples can be found in Table \ref{table:f_div}.
Following from Fenchel's convex duality, $f$-divergence admits the following variational form:
\[\label{eq:hahadf}
D_{f}(P\|Q) = \sup_{\tran: \Z \rightarrow \text{dom}(\f)} \E_{Z \sim \PP}\left[ g(Z)\right] - \E_{Z \sim\QQ}\left[\f(g(Z))\right],
\]
where $\f$ is the Fenchel duality of the function $f(\cdot)$, which is defined as $\f(u) = \sup_{v\in\RR} \{uv -f(v) \}$. We use $\text{dom}(\f)$ to denote the domain of $\f$.

We consider the classification problem of learning a classifier $h: \X \rightarrow \Y$ that maps features $X \in \X$ to labels $Y \in \Y:=\{1,2,...,K\}$, where in above $X \times Y$ denote the random variables for features and labels. $X \times Y$ jointly draw from a distribution $\D$. For a clear presentation, we will often focus on presenting the binary classification setting $\Y = \{-1,+1\}$, but most of our core results extend to multi-class classification problems, and we shall provide corresponding justifications. 

Instead of having access to sampled training data from $X \times Y$, we consider a setting  with noisy labels where the noisy label $\tilde{Y}$ generates according to a transition matrix $T$ defined between $\tilde{Y}$ and the true label $Y$. The $(i, j)$ element of $T$ is defined as $T_{i,j}=\P(\tilde{Y}=j|Y=i)$ where $i,j\in \{1,...,K\}$. For the ease of presentation, when we present for the binary case, we adopt the following notation:
$
e_{+} := \P(\ny = -1|Y=+1), ~e_{-}:=\P(\ny=+1|Y=-1)~,e_+ + e_- < 1.
$ Suppose we have access to a noisy training dataset $\{x_n,\tilde{y}_n\}_{n=1}^N$, where $\tilde{y}_n$ generates according to $\nY$.

\subsection{Learning using $D_f$}

We will start with presenting our idea of training a classifier using $D_f$ with the clean training data. Then we will proceed to the case with noisy labels. For an arbitrary classifier $h$, let's denote by $\PP_{h \times Y}$ the joint distribution of $h(X)$ and $Y$:
\[
\textsf{Joint distribution:}~~~\PP_{h \times Y} := \P(h(X)=y,Y=y'),~y,y' \in \Y.
\]
And we use $\QQ_{h \times Y}$ to denote the product (marginal) distribution of $h(X)$ and $Y$:
\[
\textsf{Product distribution:}~~~\QQ_{h \times Y} := \P(h(X)=y) \cdot \P(Y=y'),~y,y' \in \Y.
\]
When it is clear from context we will also shorthand the above two distributions as $\PP$ and $\QQ$. We formulate the problem of learning using $f$-divergence as follows: the goal of the learner is to find a classifier $h$ that maximizes the following divergence measure between $\PP$ and $\QQ$:
\begin{align}\label{eqn:fmi}
    \textsf{Learning using $D_f$:}~~h^*_f = \argmax_h D_f\left(\PP_{h \times Y}||\QQ_{h \times Y} \right)
\end{align}
Effectively the goal is to find a classifier that maximizes the divergence between the joint distribution and the product distribution. Define a $f$-mutual information based on $f$-divergence:
\underline{$
    M_f(h(X);Y) = D_f\left(\PP_{h \times Y}||\QQ_{h \times Y} \right )
$}
, equivalently the maximization in Eqn. (\ref{eqn:fmi}) tries to find the classifier that maximizes the $f$-mutual information between a classifier's output distribution and the true label distribution. 
A notable example is when $f(v)=v \log v$, the corresponding $D_f$ and $M_f$ become the famous KL divergence and the mutual information. It is important to note in general maximizing ($f$-) mutual information between the classifier's predictions and labels does not promise the \underline{Bayes optimal classifier $h^* = \argmax_h \P(h(X)=Y)$}. Nonetheless, maximizing it often returns a quality one. We provide further analysis in Section \ref{sec:hf}.

\paragraph{Variational representation} As we mentioned earlier, $f$-divergence admits a variational form which further allows us to focus on maximizing the following variational difference:
%For an arbitrary $g$, now let's look at
\[
h^*_f = \argmax_h~~ \sup_{g}~ \E_{Z \sim \PP_{h \times Y}}\left[ g(Z)\right] - \E_{Z \sim \QQ_{h \times Y}}\left[\f(g(Z))\right],
\]
where we use $Z$ to shorthand the tuple $[h(X),Y]$. Denote the variational difference as follows:
\begin{align}
    \vd_f(h,g):=
\E_{Z \sim \PP_{h \times Y}}[ g(Z)] - \E_{Z \sim \QQ_{h \times Y}}[\f(g(Z))]~.
\end{align}
Let $g^*$ be the corresponding optimal variational function $g$ for $\vd_f(h,g)$.
This variational form allows us to use a training dataset $\{(x_n,y_n)\}_{n=1}^N$ to perform the above maximization problem listed in Eqn. (\ref{eqn:fmi}) \citep{nowozin2016f}. A list of $f$-divergence functions together with the optimal variational/conjugate functions $g/f^*$ is summarized in Table \ref{table:f_div}.

\begin{table*}[!ht]
\tiny
\begin{center}
\begin{tabular}{ l l l l l} 
 \hline
 Name & $D_f(\PP||\QQ)$  & $g^*$ & $\text{dom}_{f^*}$ & $f^*(u)$ \\ 
 \hline
 Total Variation & $\int \dfrac{1}{2}|p(z)-q(z)|dz$ &  $\dfrac{1}{2}\sign{\dfrac{p(z)}{q(z)}-1}$ & $u\in [-\dfrac{1}{2}, \dfrac{1}{2}]$ & $u$\\
 Jensen-Shannon & $\dfrac{1}{2}\int p(z)\log{\dfrac{2p(z)}{p(z)+q(z)}} + q(z)\log{\dfrac{2q(z)}{p(z)+q(z)}}dz$ & $\log{\dfrac{2p(z)}{p(z)+q(z)}}$ & $u<\log{2}$  & $-\log{(2-e^{u})}$\\
 Pearson $\X^2$  & $\int \dfrac{(q(z)-p(z))^2}{p(z)} dz$ & $2\left(\dfrac{p(z)}{q(z)}-1\right)$ & $\mathbb{R}$ & $\dfrac{1}{4}u^2+u$ \\
 KL  & $\int p(z) \log{\dfrac{p(z)}{q(z)}} dx$  &$1+\log{\dfrac{p(z)}{q(z)}}$ &$\mathbb{R}$ & $e^{u-1}$\\
\hline
\end{tabular}
\end{center}
\caption{$D_f$s, optimal variational $g$ ($g^*$), conjugate functions ($f^*$). A more complete table, including Jeffrey, Squared Hellinger, Neyman $\X^2$, Reverse KL, is provided in the Appendix.}
\label{table:f_div}
\end{table*}
\subsection{How good is $h^*_f$?}
\label{sec:hf}

As we mentioned earlier, maximizing our defined $f$-divergence measures (or maximizing the $f$-mutual information) between the classifier's predictions and labels is not always returning the Bayes optimal classifier. However, for a binary classification problem, we prove below that with balanced dataset,  maximizing Total Variation (TV) divergence returns the Bayes optimal classifier:
\begin{theorem}\label{thm:TV}
For TV, when $\P(Y=+1) = \P(Y=-1)$ (balanced), 
$h^*_f$ is the Bayes optimal classifier. 
\end{theorem}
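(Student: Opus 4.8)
The plan is to exploit the especially simple form of the TV divergence in the binary balanced case and reduce the maximization in Eqn.~(\ref{eqn:fmi}) to a pointwise optimization. First I would parametrize an arbitrary classifier $h$ by the two scalars $a := \P(h(X)=+1)$ and $c := \P(h(X)=+1,\,Y=+1)$. Since $\P(Y=+1)=\P(Y=-1)=\tfrac12$, every entry of the product distribution $\QQ_{h\times Y}$ equals either $a/2$ or $(1-a)/2$, and the four entries of the joint $\PP_{h\times Y}$ are all affine in $c$ (e.g.\ $\P(h(X)=-1,Y=+1)=\tfrac12-c$). A short case check then shows that each of the four signed differences $\PP_{h\times Y}(y,y')-\QQ_{h\times Y}(y,y')$ equals $\pm(c-a/2)$, so from the definition of TV in Table~\ref{table:f_div},
\[
D_f\!\left(\PP_{h\times Y}\,\|\,\QQ_{h\times Y}\right)\;=\;\tfrac12\cdot 4\,\bigl|\,c-\tfrac{a}{2}\,\bigr|\;=\;2\,\bigl|\,c-\tfrac{a}{2}\,\bigr|.
\]

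Next I would recognize this scalar as a Bayes-type objective. Writing $\eta(x):=\P(Y=+1\mid X=x)$ and using $\P(Y=+1)=\tfrac12$ together with the tower rule,
\[
c-\tfrac{a}{2}\;=\;\P(h(X)=+1,Y=+1)-\tfrac12\,\P(h(X)=+1)\;=\;\tfrac12\,\E\bigl[\mathbbm{1}[h(X)=+1]\,(2\eta(X)-1)\bigr],
\]
so maximizing $D_f$ over $h$ is equivalent to maximizing $\bigl|\E[\mathbbm{1}[h(X)=+1](2\eta(X)-1)]\bigr|$. The unconstrained (signed) maximum is attained pointwise by predicting $+1$ exactly where $2\eta(x)-1\ge 0$, i.e.\ by the Bayes classifier $h^*(x)=\sign(\eta(x)-\tfrac12)$, with value $\E[(2\eta(X)-1)_+]$; the signed minimum is $-\E[(1-2\eta(X))_+]$, attained at $-h^*$.

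The balancedness assumption is precisely what bridges the gap between maximizing the signed quantity and maximizing its absolute value: since $\E[2\eta(X)-1]=2\P(Y=+1)-1=0$ we have $\E[(2\eta(X)-1)_+]=\E[(1-2\eta(X))_+]$, so $h^*$ attains the maximum of the absolute value as well and hence solves Eqn.~(\ref{eqn:fmi}); the only other maximizer, the relabeled copy $-h^*$, collapses to $h^*$ once the output labels of $h$ are fixed. I expect this last passage --- from the signed objective to $|\cdot|$ --- to be the only real subtlety, and it is exactly where $\P(Y=+1)=\P(Y=-1)$ must be invoked; everything else is bookkeeping. As a cross-check one can instead run the variational form of Table~\ref{table:f_div}: for TV, $f^*(u)=u$ on $[-\tfrac12,\tfrac12]$, so $\vd_f(h,g)=\E_{\PP_{h\times Y}}[g]-\E_{\QQ_{h\times Y}}[g]$ with optimizer $g^*=\tfrac12\sign(p/q-1)$, and substituting reproduces the same scalar $2|c-a/2|$.
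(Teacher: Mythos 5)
Your proof is correct, and it follows the same basic strategy as the paper---evaluate the TV divergence explicitly on the $2\times 2$ contingency table of $(h(X),Y)$ and use balancedness to identify the maximizer---but it packages the computation differently in a way that is worth noting. The paper splits into four cases according to the signs of the entries $\PP(y,y')-\QQ(y,y')$ and evaluates each case separately, obtaining $\P(h(X)=Y)-\tfrac12$, $0$, $0$, and $\P(h(X)\neq Y)-\tfrac12$; your observation that all four signed differences equal $\pm\bigl(c-\tfrac{a}{2}\bigr)$ collapses this into the single identity $D_f=2\bigl|c-\tfrac{a}{2}\bigr|=\bigl|\P(h(X)=Y)-\tfrac12\bigr|$, which subsumes the case analysis, and your rewriting via $\eta(x)=\P(Y=+1\mid X=x)$ makes the pointwise optimality of the Bayes rule explicit rather than asserted. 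Your treatment of the absolute value is also more careful than the paper's: the paper's Case 4 claims the ``opposite'' classifier attains the smaller value $\P(h^*(X)\neq Y)-\tfrac12$, but the maximizer within that case is $-h^*$, whose value is $\P(-h^*(X)\neq Y)-\tfrac12=\P(h^*(X)=Y)-\tfrac12$, i.e., \emph{equal} to the Case 1 optimum---this is the unavoidable label-permutation invariance of $D_f$, which you correctly surface and dispose of by fixing the output labeling, whereas the paper's inequality as written does not hold. The one imprecision in your write-up is the remark that balancedness is ``precisely'' what equates the two signed extremes: since $\E[\eta(X)-p]=0$ holds for any prior $p=\P(Y=+1)$, that symmetry is automatic once the threshold is $p$; what balancedness actually buys is that the threshold appearing in $c-ap$ is $\tfrac12$, i.e., the Bayes threshold. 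This does not affect the validity of your argument in the balanced case.
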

\begin{remark}
The above theorem extends to the multi-class setting when we restrict attentions to confident classifiers. See Appendix for details.
\end{remark}
\vspace{-0.03in}
The above observation is not easily true for other $f$-divergence. Nonetheless, denote by $Y^*(X=x)$ the Bayes optimal label for an instance $x$: $Y^*(X=x) = \argmax_y ~\P(Y=y|X=x)
$. 
Denote by $\PP_{h \times Y^*},\QQ_{h \times Y^*}$ the joint and product distribution $\PP,\QQ$ defined w.r.t. $h(X)$ and $Y^*$. We prove:
%We similarly present the following result: 
\begin{theorem}\label{thm:bayes}
When $\P(Y^*=+1) = \P(Y^*=-1)$ (balanced), maximizing $D_f(\PP_{h \times Y^*}\|\QQ_{h \times Y^*})$ returns the Bayes optimal classifier, if $f(v)$ is monotonically increasing in $|v-1|$ on $\text{dom}(f)$.
\end{theorem}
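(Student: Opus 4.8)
The plan is to reduce the maximization to an explicit optimization over a $2\times2$ contingency table, where balancedness of $Y^*$ and the shape of $f$ together force the optimum. First I would fix any (measurable) classifier $h$ and summarize the joint law of $(h(X),Y^*)$ by the true- and false-positive rates $\alpha:=\P(h(X)=+1\mid Y^*=+1)$ and $\beta:=\P(h(X)=+1\mid Y^*=-1)$. Writing $s:=\alpha+\beta$, balancedness gives $\P(h(X)=+1)=s/2$, so $\PP_{h\times Y^*}$ places masses $\tfrac12(\alpha,\beta,1-\alpha,1-\beta)$ on the cells $((+1,+1),(+1,-1),(-1,+1),(-1,-1))$ while $\QQ_{h\times Y^*}$ places masses $\tfrac14(s,s,2-s,2-s)$. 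Substituting into~(\ref{eq:def-div}) gives
\[
D_f\!\left(\PP_{h\times Y^*}\,\|\,\QQ_{h\times Y^*}\right)=\frac{s}{4}\!\left[f\!\left(\tfrac{2\alpha}{s}\right)+f\!\left(\tfrac{2\beta}{s}\right)\right]+\frac{2-s}{4}\!\left[f\!\left(\tfrac{2(1-\alpha)}{2-s}\right)+f\!\left(\tfrac{2(1-\beta)}{2-s}\right)\right],
\]
with the convention $0\cdot f(0/0)=0$; here the Bayes optimal classifier $h=Y^*$ is exactly the point $(\alpha,\beta)=(1,0)$.

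The crux is that in each bracketed pair the two arguments of $f$ are nonnegative, at most $2$, and sum to exactly $2$, since $\tfrac{2\alpha}{s}+\tfrac{2\beta}{s}=2$ and $\tfrac{2(1-\alpha)}{2-s}+\tfrac{2(1-\beta)}{2-s}=2$ — and this is precisely where the assumption $\P(Y^*=+1)=\P(Y^*=-1)$ enters. Hence in each pair one argument lies in $[0,1]$ and the other in $[1,2]$, so each is within distance $1$ of the point $v=1$. Using that $f$ is monotonically increasing in $|v-1|$ on $\mathrm{dom}(f)\supseteq[0,2]$, I would bound termwise: the argument in $[1,2]$ contributes at most $f(2)$ and the one in $[0,1]$ at most $f(0)$, so each bracket is at most $f(0)+f(2)$. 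Since the prefactors sum to $\tfrac{s}{4}+\tfrac{2-s}{4}=\tfrac12$, this yields the uniform bound $D_f(\PP_{h\times Y^*}\|\QQ_{h\times Y^*})\le\tfrac12\big(f(0)+f(2)\big)$ for every $h$. At $(\alpha,\beta)=(1,0)$ the two pairs of arguments become $(2,0)$ and $(0,2)$, so the bound is attained with equality; therefore $h=Y^*$ maximizes $D_f$.

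So far this only shows $h=Y^*$ is \emph{a} maximizer; to pin down the maximizer I would invoke strict monotonicity of $f$ in $|v-1|$: for $r_1+r_2=2$ with $r_1,r_2\in[0,2]$, equality $f(r_1)+f(r_2)=f(0)+f(2)$ forces $\{r_1,r_2\}=\{0,2\}$, so both brackets can be maximal only if ($\alpha=0$ or $\beta=0$) and ($\alpha=1$ or $\beta=1$), whose only feasible solutions are $(\alpha,\beta)=(1,0)$ and $(0,1)$, i.e.\ $h=Y^*$ or $h=-Y^*$; a standard sign/consistency convention then selects the Bayes optimal one. The proof is short, so the only real subtleties — already present in Theorem~\ref{thm:TV} — are (i) the unavoidable label-flip ambiguity, since the objective is invariant under $h\mapsto-h$, so the conclusion should be read as: the Bayes optimal classifier attains the maximum; (ii) the degenerate classifiers $h\equiv\pm1$, where $s\in\{0,2\}$ and one pair collapses to $f(1)+f(1)=0$, handled by the $0/0$ convention; and (iii) the tacit assumption that $f(0)$ and $f(2)$ (limiting values, when $0$ is a boundary point of $\mathrm{dom}(f)$) are finite — otherwise the supremum is $+\infty$ and is attained by a larger family of classifiers, still including $h=Y^*$.
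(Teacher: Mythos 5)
Your proof is correct, and it takes a genuinely different route from the paper's. The paper argues pointwise on likelihood ratios: it computes $\P(h^*(X)=y,Y^*=y)/\bigl(\P(h^*(X)=y)\P(Y^*=y)\bigr)=1/\P(Y^*=y)=2$ for the Bayes classifier, shows that for any other $h$ the corresponding ratio is strictly closer to $1$ (balancedness enters there through the bound $\tfrac{\alpha-1}{\alpha+1}<1$), and then dominates the weighted sum term by term via monotonicity of $f$ in $|v-1|$. You instead collapse everything onto the $2\times 2$ contingency table $(\alpha,\beta)$, observe that balancedness forces the two likelihood ratios sharing a value of $h(X)$ to sum to exactly $2$, and convert the monotonicity hypothesis into the uniform bound $\tfrac12\bigl(f(0)+f(2)\bigr)$, attained precisely at the Bayes point $(\alpha,\beta)=(1,0)$ and its flip $(0,1)$. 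Your route buys two things. First, you only ever compare $f$-values lying on the same side of $v=1$, so you need monotonicity in $|v-1|$ separately on $[0,1]$ and on $[1,\infty)$, whereas the paper's step $f(\text{ratio}_h)<f(\text{ratio}_{h^*})$ implicitly requires the cross-side comparison (automatic for symmetric choices like $f(v)=(v-1)^2$, but a strictly stronger reading of the hypothesis). Second, you surface the label-flip degeneracy honestly: the paper's strict inequality in fact degenerates to equality for $h=-h^*$ (its $\alpha\to\infty$ limit), so ``the maximizers are exactly $Y^*$ and $-Y^*$'' is the accurate conclusion, mirroring the Case~1/Case~4 symmetry already present in Theorem~\ref{thm:TV}. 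One convention mismatch to be aware of: you expand $D_f$ in the reference-weighted form $\sum_z q(z)f(p(z)/q(z))$ of Eqn.~(\ref{eq:def-div}), while the paper's own proof of this theorem writes the sum with the joint $p(z)$ as the outer weight; your ``prefactors sum to $\tfrac12$'' step is specific to the former, so the algebra would need adjusting under the latter convention, though the conclusion is unaffected. The edge cases you flag (constant classifiers, $f(0)$ or $f(2)$ infinite) are genuine but handled at least as carefully as in the paper.
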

\vspace{-0.03in}
For example, Pearson $\X^2$ ($f(v) = (v-1)^2$) satisfies the monotonicity condition.
%\begin{remark}
In practice, when the label distribution $\P(Y|X=x)$ has small uncertainties, the ground truth labels are approximately equivalent to the Bayes optimal label. Therefore, the above theorem implies that maximizing $D_f(\PP_{h \times Y}\|\QQ_{h \times Y})$ is also likely to return a high-quality classifier for other $f$-divergences.

\subsection{Learning with noisy labels}
 Consider an arbitrary classifier $h$. Denote by $\tilde{\PP}_{h \times \tilde{Y}}$ the joint distribution of $h(X)$ and $\ny$:
% \vspace{-0.03in}
\[
\textsf{Joint noisy distribution:}~~~\tilde{\PP}_{h \times \ny} := \P(h(X)=y,\ny=y'),~y,y' \in \Y.
\]
\vspace{-0.03in}
Similarly, we use $\tilde{\QQ}_{h \times \ny}$ to denote the product (marginal) distribution of $h(X)$ and $\ny$:
\[
\textsf{Product noisy distribution:}~~~\tilde{\QQ}_{h \times \ny} := \P(h(X)=y) \cdot \P(\ny=y'),~y,y' \in \Y.
\]
\vspace{-0.03in}
When it is clear from context, we shorthand using $\tilde{\PP},\tilde{\QQ}$. We are interested in understanding the robustness in \underline{maximizing $D_f(\tilde{\PP}_{h \times \nY}||\tilde{\QQ}_{h \times \nY})$}. Using training samples $\{x_n,\tilde{y}_n\}_{n=1}^N$, there exists algorithms to compute the gradient of $D_f$ leveraging its variational form \citep{nowozin2016f}, such that one can apply gradient descent or ascent to optimize it. We provide details in Section \ref{sec:exp}.

\section{Variational difference with noisy labels}
\label{sec:vd}

 For an arbitrary $g$, we define the variational difference term w.r.t. the noisy label as follows:
\begin{align}
\widetilde{\vd}_f(h,g) := \E_{\tZ \sim \tilde{\PP}_{h \times \nY}}\left[ g(\tZ)\right] - \E_{\tZ \sim \tilde{\QQ}_{h \times \nY}}\left[\f(g(\tZ))\right]
\end{align}
where we use $\tZ$ to denote $[h(X),\nY]$. Denote by $\tilde{g}^*$ the corresponding optimal variational function $g$ for $\widetilde{\vd}_f(h,g)$. In this section, we show that the variational difference term under noisy labels is closely related to the variational difference term defined on the clean distributions $\PP,\QQ$. Define the following quantity:
\underline{$
\Delta^y_{f}(h,g) := \E_X[ g(h(X),y)]-\E_X\left[ \f\left(g(h(X),y)\right) \right],   
$}
for example $\Delta^{+1}_{f}(h,g) := \E_X[ g(h(X),+1)]-\E_X\left[\f\left(g(h(X),+1)\right) \right].
$ 
For a binary classification problem, further denote by \underline{
$
 \bias_f(h,g):= e_+ \cdot \Delta^{-1}_{f}(h,g) + e_- \cdot \Delta^{+1}_{f}(h,g).
$}
We derive the following fact:
\begin{theorem}\label{thm:variational}
For binary classification, the variational difference between the noisy distributions $\tilde{\PP}$ and $\tilde{\QQ}$ relates to the one defined on the clean distributions in the following way:
\begin{align}
\widetilde{\vd}_f(h,g)
=  (1-e_+ - e_-)\vd_f(h,g)+\bias_f(h,g)
\end{align}
%\end{tcolorbox}
\end{theorem}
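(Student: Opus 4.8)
The plan is to expand each of the two expectations defining $\widetilde{\vd}_f(h,g)$ by conditioning on the true label $Y$, invoking the class-conditional noise model that is implicit in the transition matrix $T$ (so that $\nY$ is independent of $X$ given $Y$). Write $p_{+1}=\P(Y=+1)$ and $p_{-1}=\P(Y=-1)$. The single identity doing all the work is that, for any test function $\phi$,
\[
p_{+1}\,\E_{X|Y=+1}[\phi(X)] + p_{-1}\,\E_{X|Y=-1}[\phi(X)] = \E_X[\phi(X)],
\]
i.e.\ averaging out $Y$ turns a class-conditional expectation into a marginal one. This is precisely why $\bias_f$ and the quantities $\Delta^{y}_f(h,g)$ are defined with the \emph{marginal} $\E_X$ rather than $\E_{X|Y}$.

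First I would treat the ``joint'' term $\E_{\tZ\sim\tilde{\PP}_{h\times\nY}}[g(\tZ)] = \E_{X,\nY}[g(h(X),\nY)]$. By the tower rule over $Y$ together with $\nY\perp X\mid Y$,
\[
\E_{X,\nY}[g(h(X),\nY)] = p_{+1}\E_{X|Y=+1}\big[(1-e_+)g(h(X),+1)+e_+g(h(X),-1)\big] + p_{-1}\E_{X|Y=-1}\big[e_-g(h(X),+1)+(1-e_-)g(h(X),-1)\big].
\]
Subtracting $(1-e_+-e_-)\,\E_{X,Y}[g(h(X),Y)]$ and regrouping, the coefficients of the four $p_{\pm1}\E_{X|Y=\pm1}[\cdot]$ factors become $e_-,e_+,e_-,e_+$, and the averaging identity collapses each pair, yielding
\[
\E_{X,\nY}[g(h(X),\nY)] = (1-e_+-e_-)\,\E_{X,Y}[g(h(X),Y)] + e_-\,\E_X[g(h(X),+1)] + e_+\,\E_X[g(h(X),-1)].
\]

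Next I would handle the ``product'' term $\E_{\tZ\sim\tilde{\QQ}_{h\times\nY}}[\f(g(\tZ))] = \sum_{y'}\P(\nY=y')\,\E_X[\f(g(h(X),y'))]$, substituting $\P(\nY=+1)=(1-e_+)p_{+1}+e_-p_{-1}$ and $\P(\nY=-1)=e_+p_{+1}+(1-e_-)p_{-1}$. Subtracting $(1-e_+-e_-)\sum_{y'}\P(Y=y')\,\E_X[\f(g(h(X),y'))]$ and using $p_{+1}+p_{-1}=1$ gives
\[
\sum_{y'}\P(\nY=y')\,\E_X[\f(g(h(X),y'))] = (1-e_+-e_-)\sum_{y'}\P(Y=y')\,\E_X[\f(g(h(X),y'))] + e_-\,\E_X[\f(g(h(X),+1))] + e_+\,\E_X[\f(g(h(X),-1))].
\]

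Finally, subtracting the product expansion from the joint expansion, the $(1-e_+-e_-)$-weighted pieces assemble into $(1-e_+-e_-)\,\vd_f(h,g)$ (since $\vd_f$ is exactly ``joint minus product'' on the clean distributions $\PP,\QQ$), while the leftover pieces combine into $e_-\big(\E_X[g(h(X),+1)]-\E_X[\f(g(h(X),+1))]\big) + e_+\big(\E_X[g(h(X),-1)]-\E_X[\f(g(h(X),-1))]\big) = e_-\Delta^{+1}_f(h,g) + e_+\Delta^{-1}_f(h,g) = \bias_f(h,g)$, which is the asserted identity. No supremum is taken and $g$ is held fixed throughout, so there is no analytic difficulty; the only thing to watch is the bookkeeping — tracking which noise rate attaches to which label's $\Delta$ (the label flip puts $e_+$ in the $-1$ slot and $e_-$ in the $+1$ slot), and not conflating the marginal $\E_X$ appearing in $\Delta^{y}_f$ with the class-conditional $\E_{X|Y}$.
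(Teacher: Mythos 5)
Your proposal is correct and follows essentially the same route as the paper's proof: condition each expectation on the true label $Y$, use the class-conditional noise model to express the noisy terms, split $(1-e_\pm)$ as $(1-e_+-e_-)$ plus the complementary noise rate, and collapse the leftover class-conditional pieces into marginal $\E_X$ terms that assemble into $\bias_f(h,g)=e_+\Delta^{-1}_f(h,g)+e_-\Delta^{+1}_f(h,g)$. The bookkeeping of which noise rate attaches to which $\Delta^{y}_f$ matches the paper exactly.
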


The above decoupling result is inspiring: $\bias_f(h,g)$ can be viewed as the additional bias term introduced by label noise. If this term has negligible effect in the maximization problem, maximizing the noisy variational difference term will be equivalent to maximizing $(1-e_+ - e_-) \cdot \vd_f(h,g)$, and therefore the clean variational difference term. 
If the above is true, we have established the robustness of the corresponding $f$-divergence. This result also points out that when the effects from the bias term are non-negligible, finding ways to counter the additional bias term will help us retain the robustness of $D_f$ measures. Next we show that Theorem \ref{thm:variational} extends to the multi-class setting under two broad families of noise rate models, both covering the binary setting as a special case.

\paragraph{Multi-class extension of Theorem \ref{thm:variational}: uniform off-diagonal case}
We first consider the following transition matrix: uniform off-diagonal transition matrix, where $e_j=T_{i,j}, \forall i\neq j$, that is any other %\wjh{any other?}
classes $i \neq j$ has the same chance of being flipped to class $j$. The diagonal entry $T_{i,i}$ (chance of a correct label) becomes $1-\sum_{j\neq i}e_j$. We further require that $\sum_{j}e_j < 1$. Note that the binary noise rate model is easily a uniform off-diagonal  transition matrix.

\begin{theorem}\label{thm:multi1}
[Multi-class] For uniform off-diagonal noise transition model, the noisy variational difference term relates to the clean one in the following way:
\begin{align}
\widetilde{\vd}_f(h,g) = (1-\sum_{j= 1}^{K}e_j) \cdot \vd_f(h,g)+ \sum_{j=1}^K  e_j \cdot \Delta^j_f(h,g)
\end{align}
\end{theorem}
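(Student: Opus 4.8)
The plan is to reduce the whole statement to two elementary identities describing how a uniform off-diagonal noise channel reshapes, respectively, the joint law of $(h(X),Y)$ and the marginal law of $Y$, and then to substitute those identities termwise into the definition of $\widetilde{\vd}_f$. Observe first that the claim is an identity for every fixed $g$ (no supremum is involved), so it suffices to manipulate the two expectations in $\widetilde{\vd}_f(h,g) = \E_{\tZ\sim\tilde{\PP}}[g(\tZ)] - \E_{\tZ\sim\tilde{\QQ}}[\f(g(\tZ))]$ directly. Write $e := \sum_{j=1}^{K} e_j$ for brevity; the hypothesis $\sum_j e_j < 1$ ensures $1-e>0$, which is what makes the leading coefficient a genuine reweighting rather than a sign flip (it is not needed for the algebraic identity itself).

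The first step is to expand the noisy joint $\P(h(X)=y,\nY=y')$ by conditioning on the true label and using the defining property of the transition-matrix model, namely that $\nY$ is conditionally independent of the features (hence of $h(X)$) given $Y$. This gives $\P(h(X)=y,\nY=y') = \sum_{y''}\P(h(X)=y,Y=y'')\,T_{y'',y'}$; splitting off the diagonal term $y''=y'$ with $T_{y',y'} = 1 - \sum_{j\neq y'}e_j = 1-e+e_{y'}$, using $T_{y'',y'} = e_{y'}$ for $y''\neq y'$, and collapsing $\sum_{y''\neq y'}\P(h(X)=y,Y=y'') = \P(h(X)=y) - \P(h(X)=y,Y=y')$, the two copies of $e_{y'}\P(h(X)=y,Y=y')$ cancel, leaving
\[
\P(h(X)=y,\nY=y') = (1-e)\,\P(h(X)=y,Y=y') + e_{y'}\,\P(h(X)=y).
\]
Running the identical computation on the label marginal alone gives $\P(\nY=y') = (1-e)\,\P(Y=y') + e_{y'}$; multiplying by $\P(h(X)=y)$ yields the analogous decomposition of $\tilde{\QQ}_{h\times\nY}$.

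Next I would substitute both decompositions into $\widetilde{\vd}_f(h,g)$, written as sums over $(y,y')$. The $(1-e)$-weighted pieces reassemble into $(1-e)\big(\E_{\PP}[g(Z)] - \E_{\QQ}[\f(g(Z))]\big) = (1-e)\,\vd_f(h,g)$, while the residual pieces are $\sum_{y,y'}e_{y'}\P(h(X)=y)\,g(y,y')$ and $\sum_{y,y'}e_{y'}\P(h(X)=y)\,\f(g(y,y'))$; performing the $y$-sum first turns $\sum_y\P(h(X)=y)\,g(y,y')$ into $\E_X[g(h(X),y')]$ and likewise for the $\f$-term, so the residual collapses to $\sum_{y'}e_{y'}\big(\E_X[g(h(X),y')] - \E_X[\f(g(h(X),y'))]\big) = \sum_{y'}e_{y'}\,\Delta^{y'}_f(h,g)$, which is the stated term after relabeling $y'\mapsto j$.

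There is no deep obstacle here; the only point needing care is the cancellation in the first identity — one must track $T_{y',y'} = 1-e+e_{y'}$ correctly so the spurious $e_{y'}\P(h(X)=y,Y=y')$ contribution is exactly annihilated, and one must be explicit that $\nY\perp X\mid Y$, which is precisely what lets the same transition entries be pulled out uniformly across the feature distribution. Finally I would note that specializing to $K=2$ (so $e_{+1}=e_-$, $e_{-1}=e_+$, $e = e_++e_-$) recovers Theorem~\ref{thm:variational}, since then $\sum_j e_j\,\Delta^j_f(h,g) = e_-\,\Delta^{+1}_f(h,g) + e_+\,\Delta^{-1}_f(h,g) = \bias_f(h,g)$ and $1-e = 1-e_+-e_-$.
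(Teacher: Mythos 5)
Your proposal is correct and follows essentially the same route as the paper's proof: both rest on the conditional independence $\nY\perp X\mid Y$ and the same add-and-subtract of $e_i$ that turns the diagonal weight $1-\sum_{j\neq i}e_j$ into $1-\sum_j e_j$, arriving at identical decompositions of $\E_{\tZ\sim\tilde{\PP}}[g(\tZ)]$ and $\E_{\tZ\sim\tilde{\QQ}}[\f(g(\tZ))]$. The only cosmetic difference is that you perform the decomposition at the level of the distributions $\tilde{\PP},\tilde{\QQ}$ themselves and then invoke linearity, whereas the paper conditions on $Y=i$ and manipulates the expectations directly; your closing check that the $K=2$ case recovers Theorem \ref{thm:variational} matches the paper's remark.
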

If we define \underline{$\bias_f(h,g):=\sum_{j=1}^K  e_j \cdot \Delta^j_f(h,g)$}, we reproduced the results in Theorem \ref{thm:variational}: for binary case, relabel class $1 \rightarrow +1, 2 \rightarrow -1$. Then $e_1 := \P(\nY = +1|Y=-1)=e_-,e_2 := \P(\nY = -1|Y=+1)=e_+$. Another case of noise model we consider is \emph{sparse noise}. Mathematically, assume $K$ is an even number, sparse noise model specifies $\frac{K}{2}$ disjoint pairs of classes $(i_{c}, j_{c})$ where $c\in [\frac{K}{2}]$ and $i_c < j_c$. The labels flip between each pair. We provide details in the Appendix.

\section{When $D_f$ is robust with label noise}

Denote by $\mathcal H$ an arbitrary hypothesis space for training a candidate classifier $h$. We will focus on $\mathcal H$ throughout this section, and with abusing notation a bit, let $ h^*_f = \argmax_{h \in \mathcal H} D_f\left(\PP_{h \times Y}||\QQ_{h \times Y} \right ).$ 
We first define formally what we mean by robustness of $D_f(\PP_{h \times Y}\|\QQ_{h \times Y})$. 
% \vspace{-0.03in}
\begin{definition}
$D_f(\PP_{h \times Y}\|\QQ_{h \times Y})$ is $\mathcal H$-robust if 
$
h^*_f = \argmax_{h\in \mathcal H} D_f(\tilde{\PP}_{h \times \nY}||\tilde{\QQ}_{h \times \nY}).
$
\end{definition}

The above definition is stating that the label noise does not disrupt the optimality of $h^*_f$ when maximizing $D_f(\tilde{\PP}_{h \times \nY}||\tilde{\QQ}_{h \times \nY})$ instead of $D_f(\PP_{h \times Y}\|\QQ_{h \times Y})$.

\subsection{Impact of the $\bias$ terms}\label{sec:bias}
In this section, we take a closer look at the $\bias$ terms and argue that they have diminishing effects as compared to the $\vd$ terms when label noise increases. Recall $g^*,\tilde{g}^*$ are the corresponding optimal variational functions for $\vd_f(h,g)$ and $\widetilde{\vd}_f(h,g)$.

\paragraph{Total variation (TV)}
For TV, since  $f(v) = \frac{1}{2}|v-1|$, $f^*(u) = u$, we immediately have $\forall y'$
\underline{$
g(h=y',y) -  \f(g(h=y',y)) = 0
$} 
and therefore $\Delta^y_{f}(h,g)=\E_X[ g(h(X),y)]-\E_X\left[\f\left(g(h(X),y)\right) \right] \equiv 0, \forall y$, and further $\bias_f(h,g) \equiv 0$. This fact helps establish the robustness of TV divergence measure (Theorem \ref{thm:tv}).
% \vspace{-0.05in}

\paragraph{Other divergences} The above nice property generally does not hold for other $f$-divergence functions. Next we focus on the binary classification setting and prove the following lemma:
\begin{lemma}
\label{lm: bias}
For $f$-divergence listed in Table \ref{table:f_div-full} (Appendix), $\bias_f(h,\tilde{g}^*) = O\left((1-e_+-e_-)^2\right)$.
\end{lemma}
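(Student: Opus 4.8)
The plan is to exploit the structure of $\bias_f(h,\tilde g^*) = e_+ \cdot \Delta^{-1}_f(h,\tilde g^*) + e_- \cdot \Delta^{+1}_f(h,\tilde g^*)$ and show that each factor $\Delta^y_f(h,\tilde g^*)$ is itself $O(1-e_+-e_-)$, so that the product with $e_+$ or $e_-$ contributes at the second order. (Here I will treat the noise rates $e_+,e_-$ and hence $1-e_+-e_-$ as the small quantities; more precisely, the claim should be read in the regime where $e_+ + e_-$ approaches $1$ from below, i.e. $1-e_+-e_- \to 0$, which is the natural ``worst-case'' regime for robustness.) The key object is $\tilde g^*$, the optimizer of $\widetilde\vd_f(h,g)$; by Theorem \ref{thm:variational} this is the same as the maximizer of $(1-e_+-e_-)\vd_f(h,g) + \bias_f(h,g)$. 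First I would write down the first-order optimality condition for $\tilde g^*$: since $\tilde g^*(h=y',y)$ for the noisy problem has the closed form analogous to the clean $g^*$ but with $\tilde\PP,\tilde\QQ$ in place of $\PP,\QQ$, I would substitute $\tilde p(z) = (1-e_+-e_-) p(z) + (\text{marginal-type correction})$ and expand $\tilde g^*$ as a perturbation of a reference value.

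The crucial observation is what happens to $\Delta^y_f$ at the \emph{clean} optimizer $g^*$. Using the closed forms in Table \ref{table:f_div-full}, one checks that $g^*(z) = f'(p(z)/q(z))$ and $f^*(g^*(z)) = g^*(z)\cdot \frac{p(z)}{q(z)} - f(p(z)/q(z))$, so $g^*(z) - f^*(g^*(z))$ depends on $z$ only through the density ratio $p(z)/q(z)$. For the product distribution $\QQ_{h\times Y}$ we have $q(h=y',y) = \P(h(X)=y')\P(Y=y)$, and summing $g^*(h=y',y) - f^*(g^*(h=y',y))$ against $\P(h(X)=y' )$ gives exactly $\Delta^y_f(h,g^*)$. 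The point I would push: at $g^*$, the quantity $\E_X[g^*(h(X),y)] - \E_X[f^*(g^*(h(X),y))]$ is, up to a sign, a one-sided slice of $\vd_f(h,g^*) = D_f(\PP\|\QQ)$, and one can verify case-by-case (TV: identically $0$; JS, Pearson $\X^2$, KL, and the Appendix list) that this slice vanishes or is first-order small. More robustly, I would argue: $\tilde g^*$ differs from $g^*$ by a perturbation of size $O(1-e_+-e_-)$ because the noisy density ratio $\tilde p/\tilde q$ differs from $p/q$ by that order (the joint gets contracted toward the product by the factor $1-e_+-e_-$), and since $\Delta^y_f(h,\cdot)$ is smooth in $g$ and $\Delta^y_f(h,g^*)$ is already $O(1-e_+-e_-)$ (the genuinely computational check, done per divergence using the table), a first-order Taylor expansion yields $\Delta^y_f(h,\tilde g^*) = \Delta^y_f(h,g^*) + O(1-e_+-e_-) = O(1-e_+-e_-)$. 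Multiplying by $e_\pm \le 1$ and summing gives $\bias_f(h,\tilde g^*) = O((1-e_+-e_-)^2)$.

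Concretely the steps are: (i) recall/derive the closed form of $\tilde g^*$ from the variational optimality condition, in terms of $\tilde p/\tilde q$; (ii) show $\tilde p(z)/\tilde q(z) = 1 + (1-e_+-e_-)\big(p(z)/q(z) - 1\big) + o(1-e_+-e_-)$ by plugging in the noisy joint distribution — this is where the two-class flip structure $e_+,e_-$ enters and where I would use that $\tilde\QQ$ is the product of the (noise-shifted) marginals; (iii) for each $f$ in Table \ref{table:f_div-full}, compute $g^* - f^*(g^*)$ as a function of the ratio $r := p/q$ and verify that the resulting $\Delta^y_f(h,g^*) = \sum_{y'} \P(h(X)=y')\big[g^*(r_{y'y}) - f^*(g^*(r_{y'y}))\big]$ is $O(1-e_+-e_-)$ — typically because $g^* - f^*(g^*)$ evaluated at $r = 1$ equals $0$ and is differentiable there, and $r_{y'y} - 1 = O(1-e_+-e_-)$ only after we account for the correct normalization, or because the slice telescopes against the marginal constraint $\sum_{y'}\P(h(X)=y') = 1$; (iv) combine via a Lipschitz/Taylor bound on $\Delta^y_f(h,\cdot)$ around $g^*$.

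The main obstacle I anticipate is step (iii) together with the uniform control needed in step (iv): showing $\Delta^y_f(h,g^*)$ is first-order small is not automatic — it fails to be \emph{identically} zero for divergences other than TV — so one must use the specific algebraic form of each conjugate pair $(g^*, f^*)$ from the table, and one must also ensure the perturbation argument is uniform in $h \in \mathcal H$ (so that constants hidden in $O(\cdot)$ do not depend on $h$), which requires either a boundedness assumption on $\text{dom}(f^*)$ or on the density ratios under consideration. Handling the divergences with unbounded conjugate domain (Pearson $\X^2$, KL, Neyman $\X^2$, reverse KL) will need the mildest such regularity assumption, and I would state it explicitly; for TV the lemma is trivial by Section \ref{sec:bias}.
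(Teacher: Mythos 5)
You have the right raw ingredients --- the closed form of $\tilde{g}^*$ as a function of the noisy density ratio $\tilde{p}/\tilde{q}$, and the contraction $\tilde{p}/\tilde{q} = 1 + O(1-e_+-e_-)$ of your step (ii), which is exactly the paper's Proposition \ref{prop:pq} --- but the way you assemble them only yields $O(1-e_+-e_-)$, one order short of the claim. The bookkeeping error is at the very top of your plan: you aim to show each $\Delta^y_f(h,\tilde{g}^*)$ is $O(1-e_+-e_-)$ and then let the multiplication by $e_+$ or $e_-$ supply the second order. But the lemma is only non-trivial in the regime $e_++e_-\to 1$, where $e_\pm$ are order one, so $e_\pm\cdot O(1-e_+-e_-)=O(1-e_+-e_-)$. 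You need each $\Delta^y_f(h,\tilde{g}^*)$ to be \emph{second} order on its own. The mechanism that delivers this, and the heart of the paper's proof, is that the map $\phi(r):= g^*(r)-\f(g^*(r))$ with $g^*(r)=f'(r)$ and $\f(f'(r))=rf'(r)-f(r)$, i.e.\ $\phi(r)=f(r)-(r-1)f'(r)$, has a \emph{double} zero at $r=1$: $\phi(1)=f(1)=0$ and $\phi'(r)=-(r-1)f''(r)$ vanishes at $r=1$. Your step (iii) invokes only the single zero plus differentiability, which gives first order; the paper verifies the quadratic vanishing case by case by Taylor-expanding $\tilde{g}^*-\f(\tilde{g}^*)$ in $x=\tilde{\PP}/\tilde{\QQ}-1$ for each divergence in Table \ref{table:f_div-full}.

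The second problem is the detour through the clean optimizer. You assert that $\Delta^y_f(h,g^*)$ (at the clean $g^*$, a function of the clean ratio $p/q$) is ``already $O(1-e_+-e_-)$'' and that $\tilde{g}^*-g^*=O(1-e_+-e_-)$. Neither holds: the clean ratio $p/q$ has no dependence on the noise rates and does not approach $1$ as the noise grows, so $\Delta^y_f(h,g^*)$ is a fixed, generally nonzero quantity; and $\tilde{g}^*$, a function of $\tilde{p}/\tilde{q}\approx 1$, is not a small perturbation of $g^*$, a function of $p/q$ which may be far from $1$ (for KL, $\tilde{g}^*\approx 1$ while $g^*=1+\log(p/q)$). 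The correct move is to skip $g^*$ entirely and evaluate $\Delta^y_f(h,\tilde{g}^*)=\E_X\bigl[\phi(\tilde{\PP}/\tilde{\QQ})\bigr]$ directly, combining $\phi(r)=O((r-1)^2)$ with your step (ii); this is the paper's argument. Your closing concern about uniformity in $h$ and unbounded conjugate domains is legitimate (the paper controls the ratio under $e_+,e_-<1/2$), but it is secondary to these two issues.
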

\vspace{-0.05in}
Note the variational form will be used when optimizing $D_f(\tilde{\PP}_{h \times \nY}||\tilde{\QQ}_{h \times \nY})$  (and therefore we will be using $\tilde{g}^*$). This lemma simplifies Eqn. \ref{thm:variational} to $\widetilde{\vd}_f(h,g)
\propto \vd_f(h,g)+O(1-e_+ - e_-)$.
Since $0<1-e_+-e_-\leq 1$, when the noise rate $e_++e_-$ is high, the effect of $\bias$ term diminishes. When the $\bias$ term becomes negligible, we will have $\widetilde{\vd}_f(h,g)
\propto \vd_f(h,g)$ if $e_++e_- \to 1$, establishing the fact that optimizing $\widetilde{\vd}_f(h,g)$ is approximately the same as optimizing $\vd_f(h,g)$.

\subsection{How robust are $D_f$s?}\label{sec:robust}
We first prove the following result:
\begin{theorem}\label{thm:main}
 $D_f$ is $\mathcal H$-robust when $\bias_f(h,g)$ satisfies either of the following conditions: (I) $\forall h \in \mathcal H$, $\bias_f(h,g) \equiv \text{const.}$; (II) $\forall h \in \mathcal H, h \neq h^*_f$, $\bias_f(h, \tilde{g}^*) \leq \bias_f(h^*_f,g^*)$.\end{theorem}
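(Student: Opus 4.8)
The plan is to stay entirely inside the variational representation and to lean on the affine decoupling of Theorem~\ref{thm:variational}, $\widetilde{\vd}_f(h,g)=(1-e_+-e_-)\,\vd_f(h,g)+\bias_f(h,g)$, together with the basic fact that $1-e_+-e_->0$ since $e_++e_-<1$. For each fixed $h$ I would write $g^*$ for the variational function attaining $\sup_g\vd_f(h,g)=D_f(\PP_{h\times Y}\|\QQ_{h\times Y})$ and $\tilde{g}^*$ for the one attaining $\sup_g\widetilde{\vd}_f(h,g)=D_f(\tilde{\PP}_{h\times\nY}\|\tilde{\QQ}_{h\times\nY})$; the theorem is then just a comparison of $D_f(\tilde{\PP}_{h\times\nY}\|\tilde{\QQ}_{h\times\nY})$ at an arbitrary $h\ne h^*_f$ against its value at $h^*_f$.

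Condition~(I) is the easy case. If $\bias_f(h,g)$ equals one and the same constant $c$ for all $h\in\mathcal H$ (and all $g$ — which is what happens for TV, where it is identically $0$), then taking $\sup_g$ on both sides of the decoupling identity and pulling the positive factor $1-e_+-e_-$ out of the supremum gives $D_f(\tilde{\PP}_{h\times\nY}\|\tilde{\QQ}_{h\times\nY})=(1-e_+-e_-)\,D_f(\PP_{h\times Y}\|\QQ_{h\times Y})+c$. The noisy objective is thus an increasing affine function of the clean one, so their maximizers coincide and equal $h^*_f$; in fact the full $\argmax$ sets agree, which is slightly stronger than what is asked.

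For condition~(II) the argument I would write out is the chain, valid for every $h\ne h^*_f$:
\begin{align*}
D_f(\tilde{\PP}_{h\times\nY}\|\tilde{\QQ}_{h\times\nY})
&= (1-e_+-e_-)\,\vd_f(h,\tilde{g}^*)+\bias_f(h,\tilde{g}^*)\\
&\le (1-e_+-e_-)\,\vd_f(h,g^*)+\bias_f(h,\tilde{g}^*)\\
&= (1-e_+-e_-)\,D_f(\PP_{h\times Y}\|\QQ_{h\times Y})+\bias_f(h,\tilde{g}^*)\\
&\le (1-e_+-e_-)\,D_f(\PP_{h^*_f\times Y}\|\QQ_{h^*_f\times Y})+\bias_f(h,\tilde{g}^*)\\
&\le (1-e_+-e_-)\,D_f(\PP_{h^*_f\times Y}\|\QQ_{h^*_f\times Y})+\bias_f(h^*_f,g^*)\\
&= \widetilde{\vd}_f(h^*_f,g^*)\ \le\ D_f(\tilde{\PP}_{h^*_f\times\nY}\|\tilde{\QQ}_{h^*_f\times\nY}),
\end{align*}
where the first line is the decoupling identity, the first inequality uses that $g^*$ maximizes $g\mapsto\vd_f(h,g)$ and $1-e_+-e_->0$, the second inequality uses the optimality of $h^*_f$ for the clean divergence, the third is exactly hypothesis~(II), the penultimate equality combines $\vd_f(h^*_f,g^*)=D_f(\PP_{h^*_f\times Y}\|\QQ_{h^*_f\times Y})$ with the decoupling identity at $(h^*_f,g^*)$, and the last inequality holds because $D_f(\tilde{\PP}_{h^*_f\times\nY}\|\tilde{\QQ}_{h^*_f\times\nY})$ is itself a supremum over $g$. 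Since this holds for all $h\ne h^*_f$, $h^*_f$ attains $\max_{h\in\mathcal H}D_f(\tilde{\PP}_{h\times\nY}\|\tilde{\QQ}_{h\times\nY})$, i.e.\ $D_f$ is $\mathcal H$-robust. Note (I) is the special case of (II) with the two bias terms equal, so one proof would suffice, but I would keep (I) separate for its sharper conclusion.

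I do not expect a real obstacle in the argument itself — it is a short telescoping once Theorem~\ref{thm:variational} is available. The one point I would flag is the deliberate asymmetry of hypothesis~(II): it pairs $\bias_f(h,\tilde{g}^*)$ (the noisy-optimal variational function at the suboptimal $h$) with $\bias_f(h^*_f,g^*)$ (the clean-optimal one at $h^*_f$), and this exact pairing is what lets the chain close, since $\tilde{g}^*$ enters on the left through $\widetilde{\vd}_f$ while $g^*$ enters on the right through the decoupling identity at $h^*_f$. The genuinely hard work is downstream — checking that (I) or (II) actually holds for a given divergence — and is handled separately (via Lemma~\ref{lm: bias} and the per-divergence analysis), not here. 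Finally, since Theorems~\ref{thm:multi1} and~\ref{thm:multi2} supply the same affine form $\widetilde{\vd}_f=(1-\sum_j e_j)\vd_f+\bias_f$ in the multi-class settings, the identical chain yields the analogous robustness criterion there verbatim.
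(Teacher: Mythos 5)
Your proof is correct and follows essentially the same route as the paper's: both cases rest on the affine decoupling of Theorem~\ref{thm:variational}, and your inequality chain for condition~(II) is the paper's argument (which is phrased as a contradiction from a hypothetical better maximizer $h'$) written out directly, with the same pairing of $\bias_f(h,\tilde g^*)$ against $\bias_f(h^*_f,g^*)$ and the same final appeal to the supremum over $g$ at $h^*_f$. The only differences are cosmetic (direct vs.\ by contradiction, and your correct side remarks that (I) is subsumed by (II) and that the multi-class decouplings yield the same conclusion).
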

\vspace{-0.05in}
Theorem \ref{thm:main} gives sufficient conditions when the $\bias$ term does not get in the way of reaching the optimality $h_f^*$. Intuitively, when $\bias_f(h^*_f,g^*)$ is an upper bound of $\bias_f(h,\tilde{g}^*)$, the $\bias$ term will not interfere with the convergence of the $\vd$ term.
Next we provide specific examples of $f$-divergence functions that would satisfy these conditions.  

\paragraph{Total Variation (TV) is robust} For TV, the fact that $\Delta^y_{f}(h,g) \equiv 0$ allows us to prove:

\begin{theorem}\label{thm:tv}
For TV divergence, $\bias_f(h,g) \equiv \text{const.}$ and $D_f(\PP_{h \times Y}\|\QQ_{h \times Y})$ is $\mathcal H$-robust with label noise for any arbitrary hypothesis space $\mathcal H$. 
\end{theorem}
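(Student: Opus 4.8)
The plan is to exploit the special structure of the Fenchel conjugate of the TV generator --- namely that it is the identity map on its domain --- which forces every bias contribution to vanish identically; the robustness claim then drops out of Theorem~\ref{thm:main} (or, equivalently, directly from Theorem~\ref{thm:variational}).

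First I would recall from Table~\ref{table:f_div} that for TV we have $f(v)=\tfrac12|v-1|$ with Fenchel conjugate $f^*(u)=u$ on $\text{dom}(f^*)=[-\tfrac12,\tfrac12]$. The key pointwise observation is that for any admissible variational function $g:\Z\to\text{dom}(f^*)$ and any pair $(y',y)$, one has $g(y',y)-f^*(g(y',y))=g(y',y)-g(y',y)=0$. Freezing the label coordinate at an arbitrary $y$ and taking expectation over $X$ then gives $\Delta^y_f(h,g)=\E_X[g(h(X),y)]-\E_X[f^*(g(h(X),y))]\equiv 0$ for every classifier $h$, every such $g$, and every label $y\in\{-1,+1\}$. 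Substituting into $\bias_f(h,g)=e_+\,\Delta^{-1}_f(h,g)+e_-\,\Delta^{+1}_f(h,g)$ yields $\bias_f(h,g)\equiv 0$, which in particular is constant (equal to $0$) across all $h\in\mathcal H$; this establishes the first assertion of the theorem.

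Next I would invoke Theorem~\ref{thm:variational}, which with $\bias_f(h,g)\equiv 0$ collapses to $\widetilde{\vd}_f(h,g)=(1-e_+-e_-)\,\vd_f(h,g)$ for every fixed $g$. Since the noise assumption gives $e_++e_-<1$, the factor $1-e_+-e_->0$ is a positive constant independent of $h$ and $g$, so taking the supremum over $g$ preserves the relation: $D_f(\tilde\PP_{h\times\nY}\|\tilde\QQ_{h\times\nY})=(1-e_+-e_-)\,D_f(\PP_{h\times Y}\|\QQ_{h\times Y})$. Scaling by a fixed positive number does not change the $\argmax$ over $h\in\mathcal H$ (indeed it preserves the whole maximizing set), so $\argmax_{h\in\mathcal H}D_f(\tilde\PP_{h\times\nY}\|\tilde\QQ_{h\times\nY})=h^*_f$, which is precisely $\mathcal H$-robustness for arbitrary $\mathcal H$. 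Alternatively one can just cite Theorem~\ref{thm:main}(I) with the constant bias.

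There is no serious obstacle here; the only point requiring a little care is that the vanishing of $\Delta^y_f$ must hold for \emph{every} $g$ arising in the variational problems --- in particular both $g^*$ and $\tilde g^*$ --- which is automatic because the variational representation of $D_f$ already constrains $g$ to take values in $\text{dom}(f^*)$, so the cancellation $g-f^*(g)=0$ is valid throughout. A secondary, routine check is that the positive-scaling step genuinely preserves the maximizer, which it does. I would also remark that the same argument carries over to the multi-class settings of Theorems~\ref{thm:multi1} and~\ref{thm:multi2}, since there $\bias_f$ is again a nonnegative combination of the $\Delta^j_f$ terms, each of which vanishes for TV.
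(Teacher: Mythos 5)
Your proposal is correct and follows essentially the same route as the paper: both exploit $f^*(u)=u$ on $\mathrm{dom}(f^*)=[-\tfrac12,\tfrac12]$ to get the pointwise cancellation $g-f^*(g)\equiv 0$, hence $\Delta^y_f\equiv 0$ and $\bias_f\equiv 0$, and then conclude via Theorem~\ref{thm:main}(I) (the paper also records the same direct computation you give, showing $\sup_g\widetilde{\vd}_f(h,g)=(1-e_+-e_-)D_f(\PP_{h\times Y}\|\QQ_{h\times Y})$, together with the same multi-class remark). No gaps.
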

This result establishes TV as a strong measure that does not require specifying the noise rates.

\paragraph{Divergences that are conditionally robust}
Other divergences functions do not enjoy the above nice property as TV has. The robustness of these functions need more careful analysis. Define the following measures that capture the degree a classifier fits to a particular label distribution:
\begin{definition}The fitness of $h$ to $R \in \{Y,\ny\}$ is defined as $\IM(h=y,R=y') := \frac{\P(h(X)=y|R=y')}{\P(h(X)=y)}$.

\label{def: IM}
\end{definition}
$\IM$ measures capture the degree of fit of the classifier to the corresponding label distribution. A high $\IM(h=y,\tilde{Y}=y)$ (same label) 
% or a low $\IM(h=-y,\tilde{Y}=y)$ (different label)
indicates a potential overfit to the noisy label. Denote by 
\begin{align*}
      \mathcal H^* :=& \{h \in \mathcal H: \min_{y} \IM(h=y,\tilde{Y}=y) \geq \max_{y} \IM(h^*_f=y,Y=y) \geq 1\} \cup \{h^*_f\}
\end{align*}
The $1$ in the ``$\geq 1$" above corresponds to the $\IM$ for a random classifier. $\mathcal H^*$ contains the classifiers that are likely to overfit to the noisy labels. We argue, and also as observed in training, that $\mathcal H^*$ is the set of classifiers the training should avoid converging to, especially when the training only sees noisy labels.
Suppose $\P(Y=+1)=\P(Y=-1)$ (balanced clean labels) and $e_+=e_-$ (symmetric noise rate), we have the following theorem for binary classification:

\begin{theorem} \label{thm:robust}
$f$-divergences listed in Table \ref{table:f_div-full} (Appendix, except for Jeffrey) are  $\mathcal H^*$-robust. 
\end{theorem}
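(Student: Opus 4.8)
The plan is to invoke condition~(II) of Theorem~\ref{thm:main}, but with the hypothesis class narrowed to $\mathcal H^*$: it suffices to show that for every $h\in\mathcal H^*$ with $h\neq h^*_f$ one has $\bias_f(h,\tilde{g}^*)\le\bias_f(h^*_f,g^*)$. Granting this, write $e:=e_+=e_-$ and apply Theorem~\ref{thm:variational}, $\widetilde{\vd}_f(h,g)=(1-2e)\vd_f(h,g)+\bias_f(h,g)$: for $h\in\mathcal H^*\setminus\{h^*_f\}$,
\[
\widetilde{\vd}_f(h,\tilde{g}^*)=(1-2e)\vd_f(h,\tilde{g}^*)+\bias_f(h,\tilde{g}^*)\le (1-2e)\vd_f(h^*_f,g^*)+\bias_f(h^*_f,g^*)=\widetilde{\vd}_f(h^*_f,g^*),
\]
where the inequality uses $\vd_f(h,\tilde{g}^*)\le\vd_f(h,g^*)=D_f(\PP_{h\times Y}\|\QQ_{h\times Y})\le D_f(\PP_{h^*_f\times Y}\|\QQ_{h^*_f\times Y})=\vd_f(h^*_f,g^*)$, then $1-2e>0$, then the bias comparison. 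Since $\widetilde{\vd}_f(h^*_f,g^*)\le\widetilde{\vd}_f(h^*_f,\tilde{g}^*)=D_f(\tilde{\PP}_{h^*_f\times\nY}\|\tilde{\QQ}_{h^*_f\times\nY})$ and $\widetilde{\vd}_f(h,\tilde{g}^*)=D_f(\tilde{\PP}_{h\times\nY}\|\tilde{\QQ}_{h\times\nY})$, this is exactly $\mathcal H^*$-robustness, so the whole content is the bias comparison.

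The first real step is to re-express the $\Delta^y_f$ quantities, evaluated at the \emph{optimal} variational functions, through the fitness measures $\IM$ of Definition~\ref{def: IM}. For an $f$-divergence between discrete distributions the maximizing variational function assigns to the outcome $(y',y'')$ the value $f'$ of the joint-to-product density ratio there, and that ratio is precisely $\IM(h=y',R=y'')$ (with $R=\nY$ for $\tilde{g}^*$ and $R=Y$ for $g^*$). Combining this with the Fenchel equality $\f(f'(v))=vf'(v)-f(v)$ gives, at any such point,
\[
\tilde{g}^*(y',y'')-\f\big(\tilde{g}^*(y',y'')\big)=f(v)-(v-1)f'(v)=:\phi_f(v),\qquad v=\IM(h=y',\nY=y''),
\]
and one notes $\phi_f(v)=-\big[f(1)-f(v)-(1-v)f'(v)\big]\le 0$ is minus a Bregman-type gap, with $\phi_f(1)=0$ (this recovers $\Delta^y_f\equiv 0$ for TV). Hence $\Delta^y_f(h,\tilde{g}^*)=\sum_{y'}\P(h(X)=y')\,\phi_f\big(\IM(h=y',\nY=y)\big)$, and identically with $Y$ in place of $\nY$ for $\Delta^y_f(h^*_f,g^*)$.

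Next I would use the balance hypotheses. Since $\P(Y=+1)=\P(Y=-1)$ and $e_+=e_-$, the noisy marginal is balanced too, $\P(\nY=+1)=\P(\nY=-1)=\frac{1}{2}$; combined with $\sum_{y''}\P(\nY=y'')\,\IM(h=y',\nY=y'')=1$ this forces $\IM(h=y',\nY=+1)+\IM(h=y',\nY=-1)=2$ for each $y'$ (and analogously for $Y$). Summing $\Delta^{+1}_f$ and $\Delta^{-1}_f$ therefore collapses the bias to
\[
\bias_f(h,\tilde{g}^*)=e\big[\P(h(X)=+1)\,\psi_f(a)+\P(h(X)=-1)\,\psi_f(b)\big],\qquad \psi_f(r):=\phi_f(r)+\phi_f(2-r),
\]
with $a=\IM(h=+1,\nY=+1)$, $b=\IM(h=-1,\nY=-1)$, and likewise $\bias_f(h^*_f,g^*)=e\big[p^*_+\psi_f(s_+)+p^*_-\psi_f(s_-)\big]$ with $s_\pm=\IM(h^*_f=\pm1,Y=\pm1)$ and $p^*_\pm=\P(h^*_f(X)=\pm1)$. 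The auxiliary function $\psi_f$ is symmetric about $1$, vanishes there, and has $\psi_f'(r)=-(r-1)\big(f''(r)+f''(2-r)\big)$, hence is nonincreasing on $[1,2]$ — a monotonicity (and finiteness) I would check entry by entry for Table~\ref{table:f_div-full}, and this is the step at which Jeffrey, whose Fenchel conjugate has no closed form, drops out. The endgame is then monotone bookkeeping: for $h\in\mathcal H^*\setminus\{h^*_f\}$ we have $\min(a,b)\ge c^*:=\max(s_+,s_-)\ge 1$, so $\psi_f(a),\psi_f(b)\le\psi_f(c^*)$ and $\bias_f(h,\tilde{g}^*)\le e\,\psi_f(c^*)$; meanwhile the marginal-consistency relation $p^*_+(s_+-1)=p^*_-(s_--1)$ (again from balanced clean labels, with $p^*_\pm>0$ since a constant classifier gives $D_f=0$) forces $s_+,s_-\in[1,c^*]$, so $\psi_f(s_\pm)\ge\psi_f(c^*)$ and $\bias_f(h^*_f,g^*)\ge e\,\psi_f(c^*)$. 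Chaining the two estimates yields $\bias_f(h,\tilde{g}^*)\le\bias_f(h^*_f,g^*)$, as required.

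I expect the main obstacle to be the case analysis showing $\psi_f$ is nonincreasing (and where needed finite) on $[1,2]$ for every entry of the divergence table, and — relatedly — pinning down precisely why the argument resists a clean push-through for Jeffrey, since every step above quietly leans on the closed form $\tilde{g}^*=f'(\IM(\cdot))$ and on a tractable $\f$. A smaller subtlety is strictness: to conclude that $h^*_f$ is \emph{the} maximizer of $D_f(\tilde{\PP}_{h\times\nY}\|\tilde{\QQ}_{h\times\nY})$ over $\mathcal H^*$ rather than merely \emph{a} maximizer, one wants the clean maximizer to be unique, because the factor $1-2e>0$ only promotes a strict clean gap into a strict noisy one.
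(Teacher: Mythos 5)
Your proposal is correct and follows the same skeleton as the paper's argument: invoke condition (II) of Theorem \ref{thm:main}, reduce the bias comparison to the monotonicity of a scalar function of the fitness measure $\IM$, and then exploit the ordering $\min_y \IM(h=y,\nY=y)\geq \max_y\IM(h^*_f=y,Y=y)\geq 1$ built into the definition of $\mathcal H^*$ together with the balance assumptions (which give $\P(\nY=\pm1)=\tfrac12$, the constraint $\IM(\cdot,+1)+\IM(\cdot,-1)=2$, and the range $\IM\in[0,2]$). Where you genuinely depart from the paper is in how the scalar function is produced and verified. The paper proves an intermediate result (Theorem \ref{thm:bias}) assuming $\Delta^y_f$ has the form $w_h\,t(\IM(h=y,\nY=y))+(1-w_h)\,t(\IM(h=-y,\nY=y))$ with $t$ decreasing in $|x-1|$, and then checks this divergence by divergence, computing $\tilde g^*-\f(\tilde g^*)$ explicitly from each entry of Table \ref{table:f_div-full} to read off $t$ ($\log\frac{4x}{(1+x)^2}$ for JS, $-(x-1)^2$ for Pearson, $1+\log x - x$ for KL, etc.). You instead use the Fenchel--Young equality once to get the unified identity $\tilde g^*-\f(\tilde g^*)=f(v)-(v-1)f'(v)$, recognize it as minus a Bregman gap (hence $\leq 0$, vanishing at $v=1$, and identically zero for TV), and dispose of the monotonicity check for all smooth convex $f$ simultaneously via $\psi_f'(r)=-(r-1)\bigl(f''(r)+f''(2-r)\bigr)\leq 0$ on $[1,2]$. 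This buys genuine generality: in particular, since the identity $f(v)-(v-1)f'(v)$ never requires a closed-form conjugate, your route actually covers Jeffrey as well ($\phi_f(v)=-(v-1)^2/v$), so the paper's exclusion of Jeffrey appears to be an artifact of its Lambert-$W$-based case analysis rather than a real obstruction --- your own puzzlement at that step is resolved in your favor. The two caveats you flag (possible $-\infty$ values of $\phi_f$ at the boundary $\IM=0$ for reverse KL and similar, and non-uniqueness of the noisy maximizer since $1-2e>0$ only preserves weak inequalities) are shared by the paper's proof and do not distinguish the two arguments.
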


\subsection{Making $D_f$ measures robust to label noise} 

For the general case, to further improve robustness of $D_f$ measures, we will need to estimate the noise rates (e.g., $e_+,e_-$) and then subtract $\bias_f(g,h)$ from the noisy variational difference term to correct the bias introduced by the noisy labels. As a corollary of Theorem \ref{thm:variational} we have:
\begin{corollary}\label{coro:biascorrection}
Maximizing the following bias-corrected $\widetilde{\vd}_f(h,g)$ defined over $\tilde{\PP}$ and $\tilde{\QQ}$ leads to $h^*_f$
$$
h^*_f = \argmax_{h \in \mathcal H} ~\sup_g ~~\E_{\tZ \sim \tilde{\PP}_{h \times \nY}}\left[ g(\tZ)\right] - \E_{\tZ \sim\tilde{\QQ}_{h \times \nY}}\left[\f(g(\tZ))\right]  - \bias_f(h,g)~.
$$
\end{corollary}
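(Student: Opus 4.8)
The plan is to read the bias-corrected objective through Theorem~\ref{thm:variational} and then invoke the variational representation of $D_f$; no new machinery is needed, so this is genuinely an immediate corollary. First I would fix an arbitrary $h \in \mathcal H$ and look at the inner problem $\sup_g \big[\widetilde{\vd}_f(h,g) - \bias_f(h,g)\big]$. Theorem~\ref{thm:variational} is an identity that holds for \emph{every} pair $(h,g)$, namely $\widetilde{\vd}_f(h,g) = (1-e_+-e_-)\,\vd_f(h,g) + \bias_f(h,g)$, so subtracting $\bias_f(h,g)$ cancels the bias term exactly, pointwise in $g$, and the inner objective becomes $\sup_g (1-e_+-e_-)\,\vd_f(h,g)$.

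Next I would pull the scaling constant out. By the standing assumption $e_+ + e_- < 1$, the quantity $c := 1-e_+-e_- > 0$ is a strictly positive constant that does not depend on $g$ or on $h$, so $\sup_g\, c\,\vd_f(h,g) = c\,\sup_g \vd_f(h,g)$. Applying the variational form of the $f$-divergence (Eqn.~\eqref{eq:hahadf}) to $P = \PP_{h\times Y}$ and $Q = \QQ_{h\times Y}$ — note that the supremum is over the same class $g:\Z\to\mathrm{dom}(f^*)$ on the noisy and the clean side, so the identity transfers verbatim — gives $\sup_g \vd_f(h,g) = D_f(\PP_{h\times Y}\|\QQ_{h\times Y})$. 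Hence, as a function of $h$, the bias-corrected objective equals $c\cdot D_f(\PP_{h\times Y}\|\QQ_{h\times Y})$.

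Finally, since $c>0$, multiplying by $c$ does not change the set of maximizers, so $\argmax_{h\in\mathcal H}\, c\,D_f(\PP_{h\times Y}\|\QQ_{h\times Y}) = \argmax_{h\in\mathcal H}\, D_f(\PP_{h\times Y}\|\QQ_{h\times Y}) = h^*_f$ by definition of $h^*_f$, which is the claim. I expect no substantive obstacle; the only points requiring care are that the cancellation of $\bias_f(h,g)$ must be justified for each $g$ \emph{before} taking the supremum (which is exactly what Theorem~\ref{thm:variational} supplies), and that $c$ is \emph{strictly} positive (needed so the step cannot flip the $\argmax$ into an $\argmin$ or become degenerate) — this uses $e_+ + e_- < 1$. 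For the multi-class statement the same argument goes through with Theorem~\ref{thm:multi1} or~\ref{thm:multi2} and the corresponding definition of $\bias_f$, the constant being $1-\sum_j e_j$ (resp.\ $1-e_{p_1}-e_{p_2}$), still strictly positive under the assumptions there.
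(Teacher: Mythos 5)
Your proposal is correct and follows the same route as the paper: the paper justifies the corollary by observing that, via Theorem~\ref{thm:variational}, subtracting $\bias_f(h,g)$ leaves exactly $(1-e_+-e_-)\cdot\vd_f(h,g)$, whose supremum over $g$ is a positive multiple of $D_f(\PP_{h\times Y}\|\QQ_{h\times Y})$, so the $\argmax$ over $h$ is unchanged. Your added care about the cancellation holding pointwise in $g$ before the supremum and about the strict positivity of $1-e_+-e_-$ (resp.\ $1-\sum_j e_j$) is exactly the right way to make the paper's ``follows trivially'' rigorous.
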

%\begin{proof}
% \vspace{-0.1in}
By removing the $\bias_f$ term, maximizing $\E_{\tZ \sim \tilde{\PP}}[ g(\tZ)] - \E_{\tZ \sim\tilde{\QQ}}[\f(g(\tZ))]$ becomes the same with maximizing the divergence defined on the clean distribution $(1-\sum_{j= 1}^{K}e_j) \cdot \vd_f(h,g)$. The Corollary follows trivially from this fact. The calculation of the $\bias$ terms will require the inputs of noise rates. Our work does not intend to particularly focus on noise rate estimation. But rather, we can leverage the existing results in performing efficient noise rate estimation. 
There are existing literature on estimating noise rates (noise transition matrix) which can be implemented without the need of ground truth labels. For interested readers, please refer to \citep{Ltl_2015_reweighting, menon2015learning, Scott_kernel_embedding, patrini2017making, LNM_loss_correction,yao2020dual,zhu2021clusterability}. We will test the effectiveness of this bias correction step in Section \ref{sec:exp}.

\section{Experiments}\label{sec:exp}
In this section, we validate our analysis of $D_f$ measures' robustness via a set of empirical evaluations on 5 datasets: MNIST (\cite{lecun1998gradient}), Fashion-MNIST (\cite{xiao2017fashionmnist}), CIFAR-10 and CIFAR-100 (\cite{krizhevsky2009learning}), and Clothing1M (\cite{xiao2015learning}). Omitted experiment details are available in the appendix.

\vspace{-0.1in}
\paragraph{Baselines} We compare our approach with five baseline methods: \textbf{Cross-Entropy (CE)}, \textbf{Backward (BLC) and Forward Loss Correction (FLC)} methods as introduced in \citep{patrini2017making}, the \textbf{determinant-based mutual information (DMI)} method introduced in \citep{xu2019l_dmi} and \textbf{Peer-Loss (PL)} functions in \citep{liu2019peer}. BLC and FLC methods require estimating the noise transition matrix. DMI  and  PL are approaches that do not require such estimation.

\vspace{-0.1in}
\paragraph{Noise model} We test three types of noise transition models: uniform noise, sparse noise, and random noise. All details of the noise are in the Appendix. Here we briefly overview them. The uniform and sparse noise are as specified at the end of Section 3 for which our theoretical analyses mainly focus on. The noise rates of low-level uniform noise and sparse noise are both approximately 0.2 (the average probability of a label being wrong). 
The high-levels are about 0.55 and 0.4 respectively. 
In the random noise setting, each class randomly flips to one of 10 classes with probability $p$ (Random $p$). For CIFAR-100, the noise rate of uniform noise is about 0.25. The sparse label noise is generated by randomly dividing 100 classes into 50 pairs, and the noise rate is about 0.4. 

\vspace{-0.02in}
\paragraph{Optimizing $D_f(\tilde{\PP}_{h \times \nY}||\tilde{\QQ}_{h \times \nY})$ using noisy samples}
With the noisy training dataset $\{x_n,\tilde{y}_n\}_{n=1}^N$, we optimize $D_f(\tilde{\PP}_{h \times \nY}||\tilde{\QQ}_{h \times \nY})$ using gradient ascent of its variational form. Sketch is given in Algorithm \ref{alg:main1}. For the bias correction version of our algorithm, the gradient will simply include the $\nabla \bias_f(h,g)$. The variational function $\tilde{g}^*$ can be updated progressively or can be fixed beforehand using an approximate activation function for each $f$ (see e.g., \citep{nowozin2016f}). 

\begin{algorithm}
\caption{Maximizing $D_f$ measures: one step gradient}\label{alg:main1}
\begin{algorithmic}[1]
\STATE \textbf{Inputs}: Training data $\{(x_n,\tilde{y}_n)\}_{n=1}^N$, $f$, variational function $\tilde{g}^*$, conjugate $f^*$, classifier $h_t$. 
\STATE Randomly sample three mini-batches $\{(x_n,\tilde{y}_n)\}_{n=1}^B$, $\{(x^{\dag}_{n},\tilde{y}^{\dag}_{n})\}_{n = 1}^B$, $\{(x^{\diamond}_{n},\tilde{y}^{\diamond}_{n})\}_{n=1}^B$ from $\{(x_n,\tilde{y}_n)\}_{n=1}^N$. $\{(x_n,\tilde{y}_n)\}_{n=1}^B$: simulate samples $\sim \tilde \PP$;   
$\{(x^{\dag}_{n},\tilde{y}^{\diamond}_{n})\}_{n=1}^B$ to simulate $\tilde \QQ$. 
\STATE Use $h_{t,x_n}[\tilde{y}_n]$ to denote model prediction on $x_n$ for label $\tilde{y}_n$,  $\widetilde{\E}_{\{(x_n,\tilde{y}_n)\}_{n=1}^B}, \widetilde{\E}_{\{(x^{\dag}_{n},\tilde{y}^{\diamond}_{n})\}_{n=1}^B}$ to denote the empirical sample mean calculated using the mini-batch data.
\STATE At step $t$, update $h_t$ by ascending its stochastic gradient with learning rate $\eta_t$: 
 \begin{align*}
    h_{t+1} := h_t + \eta_t \cdot \nabla_{h_t} \Big[\widetilde{\E}_{\{(x_n,\tilde{y}_n)\}_{n=1}^B} [ \tilde{g}^*\left(h_{t,x_n}[\tilde{y}_n]\right)] - \widetilde{\E}_{\{(x^{\dag}_{n},\tilde{y}^{\diamond}_{n})\}_{n=1}^B}[ \f\left(\tilde{g}^*(h_{t, x^{\dag}_{n}}[\tilde{y}^{\diamond}_{n}])\right)]\Big].
 \end{align*}
Tips: In practice, we suggest (also implemented in our experiments) using the fixed form of $\tilde{g}^*$ which appears as $g_f(v)$ in Table \ref{table:f_div-full} (appendix).
\end{algorithmic}
\end{algorithm}

\subsection{How good is $h_f^*$ on clean data}
As a supplementary of Section \ref{sec:hf}, we validate the quality of $h_f^*$ on clean dataset of MNIST, Fashion MNIST, CIFAR-10 and CIFAR-100. In experiments, since the estimation of product noisy distribution are unstable when trained on CIFAR-100 training dataset, we use CE as a warm-up (120 epochs) and then switch to train with $D_f$ measures. For other datasets, we train with $D_f$ measures without the warm-up stage. Results in Table \ref{Tab:hg} demonstrate that optimizing $f-$divergence on clean dataset returns a high-quality $h_f^*$ by referring to the performance of CE. Even though $D_f$ measures can't outperform CE on clean dataset, we do observe that the gap between CE and $D_f$ measures are negligible, for example, the largest gap of Total-Variation (TV) is only $0.81\%$ among four datasets.

\begin{table*}[!ht]
\scriptsize
% \small % we still have enough space with this setting
\centering
\begin{threeparttable}
\begin{tabular}{c|c|c|c|c|c|c|c}
\hline
Dataset &  CE &  \textbf{TV} & Gap & \textbf{J-S} & Gap & \textbf{KL} & Gap \\ \hline\hline
\multirow{1}{*}{MNIST}
& 99.39(99.38$\pm$0.01) & 99.37(99.34$\pm$0.02) &{\color{blue}\textbf{-0.04}}  & 99.35(99.31$\pm$0.04) & {\color{blue}\textbf{-0.07}} & 99.31(99.21$\pm$0.06) & {\color{blue}\textbf{-0.17}}
\\ 
\hline
\multirow{1}{*}{Fashion MNIST}
& 90.44(90.34$\pm$0.12) & 89.98(89.94$\pm$0.06) & {\color{blue}\textbf{-0.40 }}& 90.40(90.17$\pm$0.24)  & {\color{blue}\textbf{-0.17}} & 90.19(89.96$\pm$0.14) & {\color{blue}\textbf{-0.38}}
\\ 
\hline
\multirow{1}{*}{CIFAR-10}
& 93.58(93.47$\pm$0.08) & 92.80(92.66$\pm$0.13) &{\color{blue}\textbf{-0.81}} & 92.35(92.23$\pm$0.07) & -1.24 & 90.55(90.38$\pm$0.15) & -3.09
\\ 
\hline
\multirow{1}{*}{CIFAR-100}
& 73.47(73.39$\pm$0.05) & 73.43(73.39$\pm$0.06) & {\color{blue}\textbf{0.00}} & 73.47(73.26$\pm$0.17)& {\color{blue}\textbf{-0.13}} & 73.33(73.16$\pm$0.10) & {\color{blue}\textbf{-0.23}}
\\ 
\hline
\end{tabular}
\end{threeparttable}
% \vspace{5pt}
\caption{Experiment results comparison on clean datasets: We report the maximum accuracy of CE and each $D_f$ measures along with (mean $\pm$ standard deviation); Gap: mean performance comparison w.r.t. CE. Numbers highlighted in \color{blue}\textbf{{blue}} \color{black} indicate the gap is less than 1\%.
% \vspace{-20pt}
}
\label{Tab:hg}
\end{table*}

\subsection{Robustness of $D_f$ measures}
\begin{figure}[ht]
\vspace{-0.2in}
    \centering
    {\includegraphics[width=.6\textwidth]{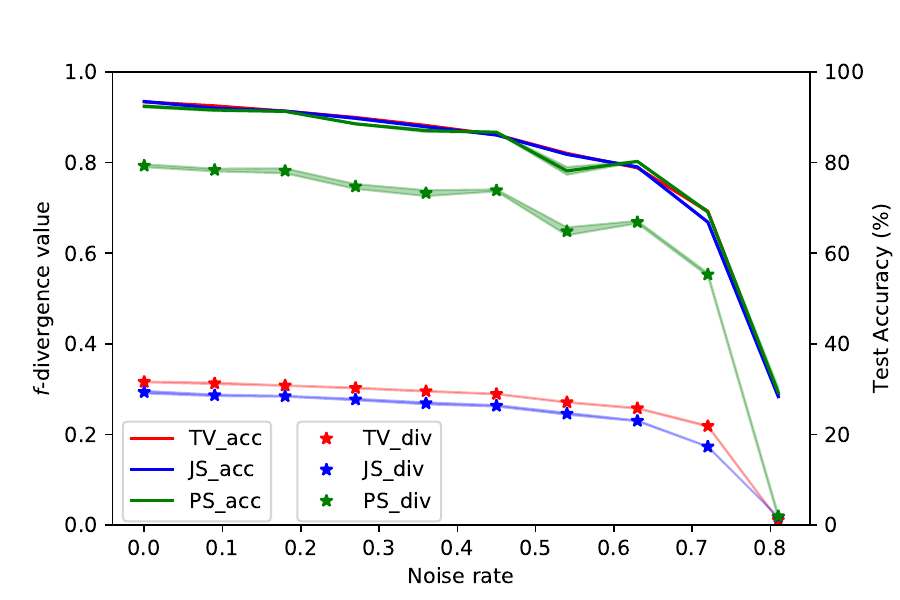}
    }
        \vspace{-5pt}
        \caption{Robustness of TV, JS, PS divergences. 
        \vspace{-15pt}
    }
    \label{fig: robust_f_div}
\end{figure}
As a demonstration, we apply the uniform noise model to CIFAR-10 dataset to test the robustness of three $D_f$ measures: Total-Variation (TV), Jensen-Shannon (JS) and Pearson (PS). We trained models with $D_f$ measures using Algorithm \ref{alg:main1} on 10 noise settings with an increasing noise rate from 0\% to approximately 81\%. The visualization of the $D_f$ values and accuracy w.r.t. noise rates are shown in Figure \ref{fig: robust_f_div}. Both the $D_f$ values and test accuracy are calculated on the reserved clean test data. We observe that almost all $D_f$ measures are robust to noisy labels, especially when the percentage of noisy labels is not overwhelmingly large, e.g., $ \leq 70\%$. Note that the curves for other $f$-divergences are almost the same as the curve of total variation (TV), which is proved to be robust theoretically. This partially validates the analytical evidences we provided for the robustness of other $f$-divergences in Section \ref{sec:bias} and \ref{sec:robust}.

\subsection{Performance Evaluation and Comparison}

From Table \ref{Tab:Experiment_Results_no_bias}, several $D_f$ measures arise as competitive solutions in a variety of noise scenarios. Among the proposed $f$-divergences, Total Variation (TV) has been consistently ranked as one of the top performing method. This aligns also with our analyses that TV is inherently robust. For most settings, the presented $f$-divergences outperformed the baselines we compare to, while they fell short to DMI (once) and Peer Loss (5 times) on several cases, particularly when the noise is sparse and high. 
The sparse high noise setting tends to be a challenging setting for all methods. We conjecture this is because sparse high noise setting creates a highly imbalanced dataset, model training is more likely to converge to a ``sub-optimal" early in the training process. It is also possible that with sparse noise, the impact of $\bias$ terms becomes non-negligible.  We do observe better performances with very careful and intensive hyper-parameter tuning, but the results are not confident and we chose to not report it. Fully understanding the limitation of our approach in this setting remains an interesting on-going investigation.

In Table \ref{Tab:Experiment_Results_bias} (full details on MNIST and Fashion MNIST can be found in Appendix), we use noise transition estimation method in (\cite{patrini2017making}) to estimate the noise rate. The estimates help us define the bias term and perform bias correction for $D_f$ measures. We observe that while adding bias correction can further improve the performance of several divergence functions (Gap being positive), the improvement or difference is not significant. This partially justified our analysis of the bias term, especially when the noise is dense and high (uniform and random high).

\begin{table*}[!ht]
\centering
\tiny
\begin{threeparttable}
\begin{tabular}{c|c|c|c|c|c|c|c|c|c}
\hline
Dataset & Noise & CE & BLC & FLC & DMI& PL & \textbf{TV} & \textbf{J-S} & \textbf{KL} \\ \hline\hline

\multirow{5}{*}{MNIST}             
& Sparse, Low   & 97.21  & 95.23 & 97.37 & 97.76  & 98.59 & {\color{blue}\textbf{99.23(99.11$\pm$0.08)}} & {\color{blue}\textbf{99.15(99.03$\pm$0.09)}} & {\color{blue}\textbf{99.21(99.15$\pm$0.05)}}  \\ \cline{2-10} 
& Sparse, High  & 48.55 & 55.86 & 49.67 &  49.61 & {\color{blue}\textbf{60.27}} & 58.27(54.72$\pm$4.36) & 58.93(55.80$\pm$1.93) & 49.24(49.17$\pm$0.06)  \\ \cline{2-10} 
& Uniform, Low  & 97.14 & 94.27 & 95.51 & 97.72  & 99.06 & {\color{blue}\textbf{99.23(99.17$\pm$0.05)}} & {\color{blue}\textbf{99.1(99.08$\pm$0.04)}}& {\color{blue}\textbf{99.13(99.06$\pm$0.07)}}    \\ \cline{2-10} 
& Uniform, High & 93.25 & 85.92 & 87.75 & 95.50  & 97.77& {\color{blue}\textbf{98.09(97.96$\pm$0.13)}} & {\color{black}\text{97.86(97.71$\pm$0.10)}} &{\color{blue}\textbf{98.14(97.88$\pm$0.18)}} \\\cline{2-10}  
& Random (0.2) & 98.26 & 97.46 & 97.61 & 98.82  & 99.25 & 99.26(99.19$\pm$0.05) & {\color{blue}\textbf{99.29(99.27$\pm$0.02)}} & 99.26(99.19$\pm$0.06) \\ \cline{2-10} 
& Random (0.7) & 97.00 & 93.52 & 87.74 & 95.47  & 98.52 & {\color{blue}\textbf{98.81(98.73$\pm$0.06)}} & {\color{blue}\textbf{98.72(98.63$\pm$0.08)}} & {\color{blue}\textbf{98.76(98.65$\pm$0.10)}} \\ \cline{2-10} 
\hline\hline

\multirow{5}{*}{\begin{tabular}[c]{@{}l@{}}Fashion\\ MNIST\end{tabular}} 
& Sparse, Low   & 84.36 & 86.02 & 88.15 & 85.65  & 88.32 & {\color{blue}\textbf{89.74(89.34$\pm$0.33)}} & {\color{blue}\textbf{88.80(88.79$\pm$0.01)}} &  {\color{blue}\textbf{89.77(89.42$\pm$0.34)}}\\ \cline{2-10} 
& Sparse, High  & 43.33 & 46.97 & 47.63 & 47.16 & {\color{blue}\textbf{51.92}} & 45.66(45.22$\pm$0.26) & 47.46(46.39$\pm$0.70) & 38.96(38.90$\pm$0.06)\\ \cline{2-10} 
& Uniform, Low  & 82.98 & 84.48 & 86.58 &  83.69  & {\color{blue}\textbf{89.31}} & 89.00(88.75$\pm$0.16) & 88.58(88.46$\pm$0.18) &   88.32(88.16$\pm$0.11)  \\ \cline{2-10} 
& Uniform, High & 79.52 & 78.10 & 82.41 & 77.94 & 84.69 & {\color{blue}\textbf{85.58(85.07$\pm$0.31)}} & {\color{blue}\textbf{85.62(85.39$\pm$ 0.33)}} &  {\color{blue}\textbf{85.69(85.43$\pm$0.30)}} \\\cline{2-10}  
  
& Random (0.2) & 85.47 & 83.40 & 77.61 & 86.21  & 89.78 & {\color{blue}\textbf{90.22(90.09$\pm$0.19)}} & 89.73(89.43$\pm$0.24) & 89.24(89.05$\pm$0.14) \\ \cline{2-10} 
& Random (0.7) & 82.05 & 78.41 & 73.42 & 80.89  & 87.22 & 86.69(86.49$\pm$0.16) & {\color{blue}\textbf{87.79(87.33$\pm$0.29)}} & 87.06(87.00$\pm$0.06) \\ \cline{2-10}

\hline\hline

\multirow{5}{*}{CIFAR-10}             
& Sparse, Low   & 87.20    & 72.96    & 76.17   & {\color{blue}\textbf{92.32}}  &   91.35   &  91.81(91.56$\pm$0.16) & 91.49 (91.43$\pm$0.08)  &   91.62(91.32$\pm$0.31) \\ \cline{2-10} 
& Sparse, High  & 61.81  & 56.30    & 66.12   & 27.94 & {\color{blue} \textbf{69.70}}        & 63.96(62.25$\pm$1.00) & 67.33(65.27$\pm$1.34) & 46.55(46.43$\pm$0.08)  \\\cline{2-10}
& Uniform, Low  & 85.68 & 72.73 & 77.12 & 90.39 & 91.70 & {\color{blue}\textbf{92.10(92.01$\pm$0.09)}} & 91.52(91.47$\pm$0.08) & {\color{blue}\textbf{92.26(92.08$\pm$0.12)}}   \\ \cline{2-10} 
& Uniform, High & 71.38  & 54.41 & 64.22 & 82.68 & 83.42 & {\color{blue}\textbf{85.56(85.44$\pm$0.08)}}  & {\color{blue}\textbf{84.49(84.35$\pm$0.13)}} &  {\color{blue}\textbf{84.36(84.19$\pm$0.13)}}  \\ \cline{2-10}  
& Random (0.5) & 78.40 & 59.31 & 68.97 & 85.06 & 86.47  & {\color{blue}\textbf{87.28(87.03$\pm$0.17)}} & {\color{blue}\textbf{86.92 (86.80$\pm$0.10)}}  &   {\color{blue}\textbf{86.93(86.85$\pm$0.11)}}  \\ 
\cline{2-10}  
& Random (0.7) & 68.26 & 38.59  &  54.39 &  77.91  & 57.81 &  {\color{blue}\textbf{80.59(80.45$\pm$0.10)}} & {\color{blue}\textbf{80.50(80.27$\pm$0.15)}} &  {\color{blue}\textbf{78.93(78.59$\pm$0.30)}}   \\ \hline\hline
\multirow{5}{*}{CIFAR-100}
& Uniform & 63.87 &  51.40& 60.04 & 64.39  & 67.94 & {\color{blue}\textbf{69.15(68.90$\pm$0.17)}} & {\color{blue}\textbf{69.13(68.80$\pm$0.21)}} &   {\color{blue}\textbf{68.79(68.60$\pm$0.11)}}\\ \cline{2-10}
& Sparse & 40.45  & 36.57 & 43.39 & 40.53  & {\color{blue}\textbf{44.25}} & 42.45(38.06$\pm$2.82) & 38.09(38.00$\pm$0.08) & 37.74(37.63$\pm$0.08) \\ \cline{2-10}
& Random (0.2) & 65.84  & 61.21 & 61.52 & 66.23  & 62.92 & {\color{blue}\textbf{70.43(70.22$\pm$0.13)}} & {\color{blue}\textbf{70.40(70.12$\pm$0.21)}} &  {\color{blue}\textbf{70.28(70.06$\pm$0.14)}}  \\ \cline{2-10}
& Random (0.5) & 56.92  & 22.21 & 55.88 & 56.06  & 49.62  & {\color{blue}\textbf{62.14(61.89$\pm$0.18)}} & {\color{blue}\textbf{61.58(61.15$\pm$0.27)}} &   {\color{blue}\textbf{61.68(61.49$\pm$0.13)}}  \\ \hline

\end{tabular}
\end{threeparttable}
% \vspace{5pt}
\caption{Experiment results comparison (w/o bias correction): The best performance in each setting is highlighted in {\color{blue}\textbf{{blue}}}. \color{black}We report the maximum accuracy of each $D_f$ measures along with (mean $\pm$ standard deviation). All $f$-divergences will be highlighted if their mean performances are better (or no worse) than all baselines we compare to. A supplementary table including Pearson $\X^2$ and Jeffrey (JF) is attached in Table \ref{Tab:Experiment_Results_no_bias_full} (Appendix).
% \vspace{-10pt}
}
\label{Tab:Experiment_Results_no_bias}
\end{table*}

\begin{table*}[!ht]
\tiny
%\small % we still have enough space with this setting
\centering
\begin{threeparttable}
\begin{tabular}{c|c|c|c|c|c|c|c|c}
\hline
Noise &  \textbf{J-S} & Gap & \textbf{PS} & Gap & \textbf{KL} & Gap & \textbf{JF} & Gap \\ \hline\hline
Sparse, Low   & 91.23(90.93$\pm$0.34)  & -0.26& {\color{black} \text{91.48(91.12$\pm$0.42)}} & {\color{red}\textbf{+0.08}}&  {\color{black} \text{91.73(91.57$\pm$0.18)}}& {\color{red}\textbf{+0.11}}& {\color{black} \text{91.45(91.18$\pm$0.21)}}  & -0.10\\ \cline{1-9} 
Sparse, High   & 46.45(46.31$\pm$0.14) &-20.88 & 46.31(45.90$\pm$0.44) & -0.05 & 46.59(46.52$\pm$0.05) & {\color{red}\textbf{+0.04}} & 46.25(45.77$\pm$0.50) & {\color{red}\textbf{+0.04}}\\ \cline{1-9}
Uniform, Low    & {\color{blue} \textbf{92.16(92.09$\pm$0.09)}} & {\color{red}\textbf{+0.64}}& {\color{blue} \textbf{92.25(92.13$\pm$0.09)}} & -0.12&  {\color{black} \text{90.92(90.84$\pm$0.10)}} & -1.34& {\color{blue} \textbf{92.19(92.10$\pm$0.08)}} & {\color{red}\textbf{+0.02}}\\ \cline{1-9}
Uniform, High   & {\color{blue} \textbf{84.31(84.13$\pm$0.10)}} & -0.18& {\color{blue} \textbf{83.79(83.61$\pm$0.12)}}  & {\color{red}\textbf{+0.18}}& {\color{blue} \textbf{83.98(83.79$\pm$0.12)}} & -0.38& {\color{blue} \textbf{83.93(83.62$\pm$0.22)}} & {\color{red}\textbf{+0.13}}\\ \cline{1-9}
 \hline
\end{tabular}
\end{threeparttable}
% \vspace{5pt}
\caption{$D_f$ measures with bias correction on CIFAR-10: Numbers highlighted in \color{blue}\textbf{{blue}} \color{black} indicate better than all baseline methods; Gap: relative performance w.r.t. their version w/o bias correction (Table \ref{Tab:Experiment_Results_no_bias}); those in \color{red}\textbf{{red}} \color{black} indicate better than w/o bias correction.  
% \vspace{-5pt}
}
\label{Tab:Experiment_Results_bias}
\end{table*}

\paragraph{Clothing1M}
Clothing1M is a large-scale clothes dataset with comprehensive annotations and can be categorized as a feature-dependent human-level noise dataset. Although this noise setting does not exactly follow our assumption, we are interested in testing the robustness of our $f$-divergence approaches. Experiment results in Table \ref{Tab:Experiment_Results_C1M} demonstrate the robustness of the $D_f$ measures. TV and KL divergences have outperformed other baseline methods.

\begin{table*}[!ht]
\scriptsize
% \small % we still have enough space with this setting
\centering
\begin{threeparttable}
\begin{tabular}{c|c|c|c|c|c|c|c|c|c|c|c}
\hline
Dataset & Noise & CE & BLC & FLC & DMI& PL & T-V & J-S & Pear  & KL & Jeffrey \\ \hline\hline
\multirow{1}{*}{Clothing1M}
& Human Noise & 68.94 & 69.13 & 69.84 & 72.46  & 72.60  & {\color{blue}\textbf{73.09}} & 72.32 &  72.22 & {\color{blue}\textbf{72.65}}  & 72.46
\\ \cline{2-12}
\hline
\end{tabular}
\end{threeparttable}
% \vspace{5pt}
\caption{Experiment results comparison on Clothing1M dataset.
% \vspace{-20pt}
}
\label{Tab:Experiment_Results_C1M}
\end{table*}

\section{Conclusion}
In this paper, we explore the robustness of a properly defined $f$-divergence measure when used to train a classifier in the presence of label noise. We identified a set of nice robustness properties for a family of $f$-divergence functions. We also experimentally verified our findings. Our work primarily contributed to the problem of learning with noisy labels without requiring the knowledge of noise rate. Beyond this noisy learning problem, the derivation and analysis might be useful for understanding the robustness of $f$-divergences for other learning tasks. 
\paragraph{Acknowledgement} This work is partially supported by the National Science Foundation (NSF) under grant IIS-2007951 and the Office of Naval Research under grant
N00014-20-1-22.

\newpage
\bibliographystyle{iclr2021_conference}
\bibliography{library,myref,noise_learning, estimate_noise_rate, f_div}

\newpage
\appendix

\section{Proof and additional theorems}
\subsection{Short notations}
Throughout the appendix, we will denote by $p:=\P(Y=+1)$ and $\tilde{p}:=\P(\ny=+1) = p\cdot (1-e_+) + (1-p) \cdot e_-$. We will also shorthand $\tilde{\PP},\tilde{\QQ}$ for $\tilde{\PP}_{h \times\nY},\tilde{\QQ}_{h \times\nY}$ and $\PP,\QQ$ for $\PP_{h \times Y},\QQ_{h \times Y}$.
\subsection{Full tables of Table \ref{table:f_div}}
In experiments, we adopt the output activation function $g_f(v)$ instead of optimal activation functions $g^*$ by referring to the implementation of f-GAN \citep{nowozin2016f}.
\begin{table*}[!ht]
\scriptsize
\begin{center}
\begin{tabular}{ l l l l l} 
 \hline
 Name  & $g_f(v)$ & $g^*$ & $\text{dom}_{f^*}$ & $f^*(u)$\\
 \hline
 Total Variation (\checkmark)  & $\dfrac{1}{2}\tanh(v)$ & $\dfrac{1}{2}\sign{\dfrac{p(z)}{q(z)}-1}$ & $u\in [-\dfrac{1}{2}, \dfrac{1}{2}]$ & $u$ \\
 Jenson-Shannon (\checkmark) & $\log{\dfrac{2}{1+e^{-v}}}$ &$\log{\dfrac{2p(z)}{p(z)+q(z)}}$ & $u<\log{2}$  & $-\log{(2-e^{u})}$\\
 Squared Hellinger (\xmark)  &$1-e^{v}$ & $1-\sqrt{\dfrac{q(z)}{p(z)}}$ & $u<1$ & $\dfrac{u}{1-u}$\\
 Pearson $\X^2$  (\checkmark) & $v$ & $2\left(\dfrac{p(z)}{q(z)}-1\right)$ & $\mathbb{R}$ & $\dfrac{1}{4}u^2+u$ \\
 Neyman  $\X^2$ (\xmark)  &$1-e^{v}$ &$1-\left(\dfrac{q(z)}{p(z)}\right)^2$ & $u<1$ & $2-2\sqrt{1-u}$\\
 KL (\checkmark)  & $v$ &$1+\log{\dfrac{p(z)}{q(z)}}$ &$\mathbb{R}$ & $e^{u-1}$\\
 Reverse KL (\xmark)  & $-e^{v}$ & $-\dfrac{q(z)}{p(z)}$ & $\mathbb{R}_-$  &$-1-\log{(-u)}$ \\
 Jeffrey (\checkmark)  &$v$ & $1+\log{\dfrac{p(z)}{q(z)}}-\dfrac{q(z)}{p(z)}$ & $\mathbb{R}$ & $W(e^{1-u})+\dfrac{1}{W(e^{1-u})}+u-2$\\
\hline
\end{tabular}
\end{center}
\caption{Exemplary output activation functions $g_f$ (used for approximating $g$, see e.g. \citep{nowozin2016f}), optimal activation functions, optimal conjugate functions (full table). $W$ is the Lambert$-W$ product log function. `$\checkmark$' indicates that the $f-$divergence function (in practice) is robust to label noise and `\xmark'  means non-robust.}
\vspace{-10pt}
\label{table:f_div-full}
\end{table*}

\subsection{Main Results: Proof of Theorem \ref{thm:variational}}

\begin{proof}

First note 
\begin{align*}
    &~~~~\E_{\tZ \sim \tilde{\PP}}\left[ g(\tZ)\right] \\
    &= p \cdot \E_{\tZ \sim \tilde{\PP}|Y=+1}\left[ g(\tZ)\right]+(1-p) \cdot \E_{\tZ \sim \tilde{\PP}|Y=-1}\left[ g(\tZ)\right]\\
    &= p \cdot \E_{X|Y=+1}\left[(1-e_+) \cdot g(h(X),+1)+e_+ \cdot g(h(X),-1)\right]\\
    &~~~~+(1-p) \cdot \E_{X|Y=-1}\left[(1-e_-) \cdot g(h(X),-1)+e_- \cdot g(h(X),+1)\right]\\
    &= p \cdot \E_{X|Y=+1}\left[(1-e_+-e_-) \cdot g(h(X),+1)+e_+ \cdot g(h(X),-1) + e_- \cdot g(h(X),+1)\right]\\
    &~~~~~+(1-p) \cdot \E_{X|Y=-1}\left[(1-e_+-e_-) \cdot g(h(X),-1)+e_+ \cdot g(h(X),-1)+e_- \cdot g(h(X),+1)\right]\\
    &=(1-e_+-e_-) \cdot \E_{Z \sim \PP}[g(Z)] + \E_X[e_+ \cdot g(h(X),-1) + e_- \cdot g(h(X),+1)]
\end{align*}
The second term in the variational difference derives as:
\begin{align*}
    &~~~~\E_{\tZ \sim\tilde{\QQ}}\left[\f(g(\tZ))\right] \\
    &= \tilde{p} \cdot \E_{X}\left[\f(g(h(X),+1))\right] + (1-\tilde{p}) \cdot \E_{X}\left[\f(g(h(X),-1))\right]\\
    &=p \cdot (1-e_+-e_-) \cdot \E_{X}[\f(g(h(X),+1))] +  e_- \cdot \E_{X}[\f(g(h(X),+1))] \\
    &~~~~+(1-p) \cdot (1-e_+-e_-) \cdot \E_{X}[\f(g(h(X),-1))] +  e_+ \cdot \E_{X}[\f(g(h(X),-1))]\\
    &=(1-e_+-e_-) \cdot \E_{Z \sim \QQ}[\f(g(Z))] +  \E_{X}[e_- \cdot \f(g(h(X),+1)) +  e_+ \cdot\f(g(h(X),-1))]
\end{align*}
Combining $\E_{\tZ \sim \tilde{\PP}}\left[ g(\tZ)\right]$ and $   \E_{\tZ \sim\tilde{\QQ}}\left[\f(g(\tZ))\right]$: the leading terms combine into
\begin{align*}
    (1-e_+-e_-) \cdot \E_{Z \sim \PP}[g(Z)] - (1-e_+-e_-) \cdot \E_{Z \sim \QQ}[\f(g(Z))]   = (1-e_+-e_-) \cdot \vd_f(h,g)
    \end{align*}
and the rest:
\begin{align*}
    &\E_X[e_+ \cdot g(h(X),-1) + e_- \cdot g(h(X),+1)] -\E_{X}[e_- \cdot \f(g(h(X),+1)) +  e_+ \cdot\f(g(h(X),-1))]\\
    &=e_+ \cdot \Delta^{-1}_{f}(h,g) + e_- \cdot \Delta^{+1}_{f}(h,g)\\
    &= \bias_f(h,g)
\end{align*}
we proved the claim.
\end{proof}

\subsection{Proof of Theorem \ref{thm:multi1}: Multi-class extension I of Theorem \ref{thm:variational}}
\begin{proof}
Denote $p_i=\P(Y=i)$, $\tilde{p}_i:=\P(\tilde{Y}=i)$, $\tilde{p}_i=(1-\sum_{j\neq i}e_j)\cdot p_i+e_i\cdot \sum_{j\neq i}p_j$. 
We have:
\begin{align*}
    &~~~~\E_{\tZ \sim \tilde{\PP}}\left[ g(\tZ)\right] \\
    &= \sum_{i=1}^{K} p_i \cdot \E_{\tZ \sim \tilde{\PP}|Y=i}\left[ g(\tZ)\right]=\sum_{i=1}^{K} p_i \cdot \E_{X|Y=i}\left[ \sum_{j=1}^{K}T_{i,j}\cdot g(h(X),\tilde{Y}=j) \right]\\
    &=\sum_{i=1}^{K} p_i \cdot \E_{X|Y=i}\left[ (1-\sum_{j\neq i}e_j)\cdot g(h(X),\tilde{Y}=i)+\sum_{j\neq i}e_j\cdot g(h(X),\tilde{Y}=j) \right]\\
    &=\sum_{i=1}^{K} p_i \cdot \E_{X|Y=i}\left[ (1-\sum_{j= 1}^{K}e_j)\cdot g(h(X),\tilde{Y}=i)+\sum_{j= 1}^{K}e_j\cdot g(h(X),\tilde{Y}=j) \right]\\
    &=(1-\sum_{j= 1}^{K}e_j) \cdot \E_{Z \sim \PP}[g(Z)] + \sum_{j= 1}^{K}e_j\cdot \E_X[ g(h(X),j)]
\end{align*}

\begin{align*}
   &~~~~ \E_{\tZ \sim\tilde{\QQ}}\left[\f(g(\tZ))\right]\\
   &= \sum_{i=1}^K \tilde{p}_i \cdot \E_{X}\left[\f(g(h(X),\tilde{Y}=i))\right]\\
    &= \sum_{i=1}^K [(1-\sum_{j=1}^{K}e_j)\cdot p_i+e_i\cdot \sum_{j=1}^{K}p_j] \cdot \E_{X}\left[\f(g(h(X),\tilde{Y}=i))\right]\\
    &= (1-\sum_{j=1}^{K}e_j)\cdot \E_{Z \sim \QQ}[\f(g(Z))]     +\sum_{j=1}^{K}e_j\cdot \E_X\left[\f(g(h(X),j))\right]
\end{align*}
Similar to the binary case, combining $\E_{\tZ \sim \tilde{\PP}}\left[ g(\tZ)\right]$ and $   \E_{\tZ \sim\tilde{\QQ}}\left[\f(g(\tZ))\right]$ we proved the claim.
\end{proof}

\subsection{Multi-class extension II of Theorem \ref{thm:variational}: sparse case}
For sparse transition matrix, assume $K$ is an even number, sparse noise model specifies $\frac{K}{2}$ disjoint pairs of classes $(i_{c}, j_{c})$ where $c\in [\frac{K}{2}]$ and $i_c < j_c$. The diagonal entry $T_{i_c,i_c}$ becomes $1-T_{i_c, j_c}$. Suppose $\forall c \in [\frac{K}{2}], T_{i_c, j_c}=e_{p_1}, T_{j_c, i_c}=e_{p_2}, e_{p_1}+e_{p_2} < 1$.
\begin{theorem}\label{thm:multi2}
[Multi-class extension II] In the scenario of sparse noise transition model, the variational difference between the noisy distributions $\tilde{\PP}$ and $\tilde{\QQ}$ relates to the one defined over the clean distributions in the following way:
% \begin{tcolorbox}[colback=blue!5!white,colframe=blue!5!white]
\begin{align}
\widetilde{\vd}_f(h,g) = &(1-e_{p_1}-e_{p_2}) \cdot \vd_f(h,g)\nonumber+  \sum_{(i_c, j_c)}  \Big[e_{p_1} \cdot \Delta^{j_c}_f(g,h)+e_{p_2} \cdot \Delta^{i_c}_f(g,h)\Big].
\end{align}
% \end{tcolorbox}
\end{theorem}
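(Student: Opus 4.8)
The plan is to follow the template of the binary decoupling in Theorem~\ref{thm:variational} and its uniform-off-diagonal generalization in Theorem~\ref{thm:multi1}, now exploiting the block structure of the sparse transition matrix. Writing $\alpha := e_{p_1}+e_{p_2}$, the starting point is the decomposition $T = (1-\alpha)\,I + R$, where $R$ is the ``within-pair redistribution'' matrix: $R$ is zero across distinct pairs, and on each pair $(i_c,j_c)$ both rows of its $2\times 2$ block equal $(e_{p_2},\,e_{p_1})$, i.e.\ $R_{i_c,i_c}=R_{j_c,i_c}=e_{p_2}$ and $R_{i_c,j_c}=R_{j_c,j_c}=e_{p_1}$. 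The purpose of this split is that, within each pair, the columns of $R$ do not depend on the source index --- which is precisely the feature that drove the proof of Theorem~\ref{thm:multi1}, where the off-diagonal entry $e_j$ was independent of the source class.

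First I would expand $\widetilde{\vd}_f(h,g)=\E_{\tZ\sim\tilde{\PP}}[g(\tZ)]-\E_{\tZ\sim\tilde{\QQ}}[\f(g(\tZ))]$ using $\P(h(X)=y,\nY=y')=\sum_{y''}\P(h(X)=y,Y=y'')\,T_{y'',y'}$ for the joint distribution and, for the product distribution, the facts that the $h(X)$-marginal is unchanged and $\P(\nY=y')=\sum_{y''}\P(Y=y'')\,T_{y'',y'}$; then substitute $T=(1-\alpha)I+R$. The $(1-\alpha)I$ term reproduces $(1-\alpha)$ times the clean joint and product distributions, hence contributes exactly $(1-\alpha)\,\vd_f(h,g)$.

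It remains to show the $R$-term equals $\sum_{c}\big[e_{p_1}\,\Delta^{j_c}_f(h,g)+e_{p_2}\,\Delta^{i_c}_f(h,g)\big]$. Because each block of $R$ has source-independent columns, right-multiplication by $R$ collapses, pair by pair, the two $Y$-coordinates $i_c,j_c$: the pair-$c$ contribution to $\E_{\tZ\sim\tilde{\PP}}[g]$ becomes a single mass term weighted by $e_{p_2}\,g(\cdot,i_c)+e_{p_1}\,g(\cdot,j_c)$, and similarly the pair-$c$ contribution to $\E_{\tZ\sim\tilde{\QQ}}[\f(g)]$ is weighted by $e_{p_2}\,\f(g(\cdot,i_c))+e_{p_1}\,\f(g(\cdot,j_c))$. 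Grouping, for each pair, the $g$-terms coming from $\tilde{\PP}$ against the $\f(g)$-terms coming from $\tilde{\QQ}$ and recognizing $\E_X[g(h(X),y)]-\E_X[\f(g(h(X),y))]=\Delta^y_f(h,g)$ produces the claimed sum; specializing to $K=2$ (a single pair, relabelled to $\{+1,-1\}$ so that $e_{p_1}=e_+,\ e_{p_2}=e_-$) recovers Theorem~\ref{thm:variational} as a sanity check.

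The step I expect to be the main obstacle is precisely this last accounting: after applying $R$, one must verify that the mass multiplying $g(\cdot,i_c)$ (resp.\ $\f(g(\cdot,i_c))$) marginalizes over the feature/label distribution exactly into the marginal quantities appearing in $\Delta^{i_c}_f$, and likewise for $j_c$. This is the same kind of bookkeeping as in Theorem~\ref{thm:multi1}, but it must now be carried out separately inside each of the $\tfrac{K}{2}$ blocks and then recombined --- using that the pairs partition $\{1,\dots,K\}$ --- rather than globally in one shot; the surrounding manipulations are routine algebra.
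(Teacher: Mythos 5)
Your decomposition $T=(1-\alpha)I+R$ is exactly the algebraic move in the paper's proof (there it appears as splitting $1-e_{p_1}=(1-e_{p_1}-e_{p_2})+e_{p_2}$ and $1-e_{p_2}=(1-e_{p_1}-e_{p_2})+e_{p_1}$ inside each pair), so the route is the same, and the identity part of the computation, yielding $(1-\alpha)\vd_f(h,g)$, is fine.

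The step you single out as the main obstacle, however, is a genuine gap, and your sketch does not close it (nor, for what it is worth, does the paper's own write-up). After applying $R$, the pair-$c$ residual in $\E_{\tZ\sim\tilde{\PP}}[g(\tZ)]$ is $p_{i_c}\E_{X|Y=i_c}[\,\cdot\,]+p_{j_c}\E_{X|Y=j_c}[\,\cdot\,]$ applied to $e_{p_2}g(h(X),i_c)+e_{p_1}g(h(X),j_c)$, i.e.\ an expectation restricted to the event $Y\in\{i_c,j_c\}$ carrying total mass $p_{i_c}+p_{j_c}$; on the product side, $\P(\nY=i_c)=(1-\alpha)p_{i_c}+e_{p_2}(p_{i_c}+p_{j_c})$, so the pair-$c$ residual in $\E_{\tZ\sim\tilde{\QQ}}[\f(g(\tZ))]$ is $(p_{i_c}+p_{j_c})\bigl(e_{p_2}\E_X[\f(g(h(X),i_c))]+e_{p_1}\E_X[\f(g(h(X),j_c))]\bigr)$. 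Neither of these is the unconditional quantity $\E_X[g(h(X),i_c)]$ or $\E_X[\f(g(h(X),i_c))]$ appearing in $\Delta^{i_c}_f$. The reason the analogous collapse works in Theorem \ref{thm:multi1} is that the residual column there is identical for \emph{every} source class, so the weights $p_i\E_{X|Y=i}$ are summed over all $K$ classes and reconstitute $\E_X$ by the law of total expectation; in the sparse model the sum runs only over the two classes of the pair, and $\sum_{i\in\{i_c,j_c\}}p_i\E_{X|Y=i}[\,\cdot\,]=\E_X[\,\cdot\,]$ holds only when $p_{i_c}+p_{j_c}=1$, i.e.\ when $K=2$ --- which is why your binary sanity check passes but does not certify the general case. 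To make the argument close for $K>2$ you must either restate the bias with the pair-restricted expectation and the extra factor $p_{i_c}+p_{j_c}$ (equivalently, redefine $\Delta^{i_c}_f,\Delta^{j_c}_f$ accordingly), or add a hypothesis under which the restricted and unrestricted expectations coincide.
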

\begin{proof}
\begin{align*}
    &~~~~\E_{\tZ \sim \tilde{\PP}}\left[ g(\tZ)\right] \\
    &= \sum_{i=1}^{K} p_i \cdot \E_{\tZ \sim \tilde{\PP}|Y=i}\left[ g(\tZ)\right]=\sum_{i=1}^{K} p_i \cdot \E_{X|Y=i}\left[ \sum_{j=1}^{K}T_{i,j}\cdot g(h(X),\tilde{Y}=j) \right]\\
    &=\sum_{i_c} p_{i_c} \cdot \E_{X|Y=i_c}\left[ (1-e_{p_1})\cdot g(h(X),\tilde{Y}=i_c)+e_{p_1}\cdot g(h(X),\tilde{Y}=j_c) \right]\\
    & ~~~~+ \sum_{j_c} p_{j_c} \cdot \E_{X|Y=j_c}\left[ (1-e_{p_2})\cdot g(h(X),\tilde{Y}=j_c)+e_{p_2}\cdot g(h(X),\tilde{Y}=i_c) \right]\\
    &=\sum_{i=1}^{K} p_i \cdot \E_{X|Y=i}\Big[ (1-e_{p_1}-e_{p_2})\cdot g(h(X),\tilde{Y}=i)\Big]\\
    &~~~~+\sum_{(i_c, j_c)} (e_{p_1}\cdot g(h(X),\tilde{Y}=j_c)+e_{p_2}\cdot g(h(X),\tilde{Y}=i_c))\\
    &=(1-e_{p_1}-e_{p_2}) \cdot \E_{Z \sim \PP}[g(Z)]+ \sum_{(i_c, j_c)} \E_X \Big[e_{p_1}\cdot g(h(X),\tilde{Y}=j_c)+e_{p_2}\cdot g(h(X),\tilde{Y}=i_c)\Big]\nonumber
\end{align*}
Similarly, we have:
\begin{align*}
    &~~~~\E_{\tZ \sim\tilde{\QQ}}\left[\f(g(\tZ))\right] \\
    &= \sum_{i=1}^K \tilde{p}_i \cdot \E_{X}\left[\f(g(h(X),\tilde{Y}=i))\right]\\
    &= \sum_{i=1}^K \Big[(1-\sum_{j=1}^{K}e_j)\cdot p_i+e_i\cdot \sum_{j=1}^{K}p_j\Big] \cdot \E_{X}\left[\f(g(h(X),\tilde{Y}=i))\right]\\
    &=(1-e_{p_1}-e_{p_2}) \cdot \E_{Z \sim \QQ}[f^*(g(Z))] \\
    &~~~~+  \sum_{(i_c, j_c)} \E_X \Big[e_{p_1}\cdot f^*(g(h(X),\tilde{Y}=j_c))+e_{p_2}\cdot f^*(g(h(X),\tilde{Y}=i_c))\Big]
\end{align*}
Combining $\E_{\tZ \sim \tilde{\PP}}\left[ g(\tZ)\right]$ and $   \E_{\tZ \sim\tilde{\QQ}}\left[\f(g(\tZ))\right]$ we proved the claim.
\end{proof}

\subsection{Proof of Theorem \ref{thm:TV}: total-variation generates Bayes optimal}

\begin{proof}
For total variation, we have
\begin{align*}
    D_f(\PP_{h \times Y}\|\QQ_{h \times Y}) = \frac{1}{2} \sum_{y,y' \in \{-1,+1\}} \left|\P(h(X)=y,Y=y')-\P(h(X)=y) \cdot \P(Y=y')\right|
\end{align*}
We again present the main proof for the binary classification setting.

First note the following fact that
\begin{align*}
    \P(h(X)=y,Y=y)-\P(h(X)=y) \cdot \P(Y=y)
    \end{align*}
    and
    \begin{align*}
        \P(h(X)=y,Y=-y)-\P(h(X)=y) \cdot \P(Y=-y)
\end{align*}
have opposite signs. This is simply because 
\[
\P(h(X)=y,Y=y)-\P(h(X)=y) \cdot \P(Y=y) + \P(h(X)=y,Y=-y)-\P(h(X)=y) \cdot \P(Y=-y) = 0
\]
Because of the above, there are four possible combinations of cases:
\paragraph{Case 1} $\P(h(X)=+1,Y=+1)-\P(h(X)=+1) \cdot \P(Y=+1) > 0,\P(h(X)=-1,Y=-1)-\P(h(X)=-1) \cdot \P(Y=-1) > 0$:
\begin{align*}
    &\sum_{y,y' \in \{-1,+1\}} \left|\P(h(X)=y,Y=y')-\P(h(X)=y) \cdot \P(Y=y')\right|\\
    &=\P(h(X)=+1,Y=+1)-\P(h(X)=+1) \cdot \P(Y=+1) \\
    &\quad -\P(h(X)=+1,Y=-1)+\P(h(X)=+1) \cdot \P(Y=-1)\\
    &\quad+\P(h(X)=-1,Y=-1)-\P(h(X)=-1) \cdot \P(Y=-1) \\
    &\quad-\P(h(X)=-1,Y=+1)-\P(h(X)=-1) \cdot \P(Y=+1) \\
    &=\P(h(X) = Y) - \P(h(X) \neq Y)\\
    &=2\P(h(X)=Y) - 1
\end{align*}
Therefore, maximizing $D_f$ total variation returns the Bayes optimal classifier $h^*$, and the optimal value arrives at $\P(h^*(X)=Y) - \frac{1}{2}$.

\paragraph{Case 2} $\P(h(X)=+1,Y=+1)-\P(h(X)=+1) \cdot \P(Y=+1) < 0,\P(h(X)=-1,Y=-1)-\P(h(X)=-1) \cdot \P(Y=-1) > 0$: 
\begin{align*}
    &\sum_{y,y' \in \{-1,+1\}} \left|\P(h(X)=y,Y=y')-\P(h(X)=y) \cdot \P(Y=y')\right|\\
    &=-\P(h(X)=+1,Y=+1)+\P(h(X)=+1) \cdot \P(Y=+1) \\
    &\quad+\P(h(X)=+1,Y=-1)-\P(h(X)=+1) \cdot \P(Y=-1)\\
    &\quad+\P(h(X)=-1,Y=-1)-\P(h(X)=-1) \cdot \P(Y=-1) \\
    &\quad-\P(h(X)=-1,Y=+1)-\P(h(X)=-1) \cdot \P(Y=+1) \\
    &=\P(Y = -1) - \P( Y = +1)\\
    &=0
\end{align*}
    
\paragraph{Case 3}
$\P(h(X)=+1,Y=+1)-\P(h(X)=+1) \cdot \P(Y=+1) > 0,\P(h(X)=-1,Y=-1)-\P(h(X)=-1) \cdot \P(Y=-1) < 0$: This case is symmetrical to Case 2. 

\paragraph{Case 4}
This is symmetrical to Case 1:
\[
   D_f(\PP_{h \times Y}\|\QQ_{h \times Y}) = \P(h(X) \neq Y) - \frac{1}{2}
   \]
 The optimal classifier is then the opposite of $h^*$, but
 \[
 \P(h^*(X)=Y) - \frac{1}{2} > \P(h^*(X) \neq Y) - \frac{1}{2}
 \]
 so the maximizer returns a smaller value compared to Case 1.

 \paragraph{Multi-class extension}
 
We provide arguments for the multi-class generalization. First note that
 \begin{align*}
      &\sum_{y,y' \in \Y} \left|\P(h(X)=y,Y=y')-\P(h(X)=y) \cdot \P(Y=y')\right|\\
      =&  \sum_{y,y' \in \Y} \left|\P(h(X)=y|Y=y')-\P(h(X)=y) \right| \cdot \P(Y=y')\\
      =&\frac{1}{K} \sum_{y'} \sum_y  \left|\P(h(X)=y|Y=y')-\P(h(X)=y) \right| 
 \end{align*}
For any classifier $h$ and for each $y$, one of the following terms 
 \[
 \P(h(X)=y,Y=1)-\P(h(X)=y) \cdot \P(Y=1), ...., %\P(h(X)=y,Y=2)-\P(h(X)=y) \cdot \P(Y=2),..., 
 \P(h(X)=y,Y=K)-\P(h(X)=y) \cdot \P(Y=K)
 \]
 must be non-negative as:
 $
 \sum_{y'} \P(h(X)=y,Y=y')-\P(h(X)=y) \cdot \P(Y=y') = 0.
 $
 
Our following derivation focuses on confident classifiers:
 \begin{definition}
 We call a classifier confident if for each label class $y$, only one class $y_k \in \Y$ returns positive correlation:
 \begin{align*}
     &\P(h(X)=y_k,Y=k)-\P(h(X)=y_k) \cdot \P(Y=k)\geq 0
     %\\
     %=&\P(Y=k) \cdot \left( \P(h(X)=y_k|Y=k) - \P(h(X)=y_k) \right) > 0 
 \end{align*}
 while for all other $y' \neq y_k$, we have
$
 \P(h(X)=y',Y=k)-\P(h(X)=y') \cdot \P(Y=k)\leq 0
$
 \end{definition}
 This above definition is saying the classifier $h$ is ``dominantly" confident in predicting one class for the each true label class.
For a given class $k$, if all other classes $k'\neq y_k$ are negative in $
 \P(h(X)=k',Y=k)-\P(h(X)=k') \cdot \P(Y=k)$ , the total variation becomes:
\begin{align*}
    & \sum_{k'} |\P(h(X)=k',Y=k)-\P(h(X)=k')\cdot \P(Y=k)|\\
    =&\P(h(X)=y_k,Y=k)-\P(h(X)=y_k)\cdot \P(Y=k) \\
    &+ \sum_{k' \neq y_k}(\P(h(X)=k')\cdot \P(Y=k) - \P(h(X)=k',Y=k))\\
    =&\P(h(X)=y_k,Y=k)-\P(h(X)=y_k)\cdot \P(Y=k) \\
    &+ \P(Y=k) (1-\P(h(X)=y_k) - (1-\P(h(X)=y_k,Y=k))\\
    =&2(\P(h(X)=y_k,Y=k)-\P(h(X)=y_k)\cdot \P(Y=k)).
    %\Leftrightarrow  2\cdot \P(h(X)=Y)-1
\end{align*}
Summing up, for a confident classifier, the total variation becomes (ignoring constant $2$):
\begin{align*}
    & \sum_{k} \P(h(X)=y_k,Y=k)-\P(h(X)=y_k)\cdot \P(Y=k)\\
    =&\frac{1}{K} \sum_{k} \P(h(X)=y_k|Y=k)-\P(h(X)=y_k) \\
=&\frac{1}{K} \sum_{k} \P(h(X)=y_k|Y=k)-1\\
=&\frac{1}{K} \sum_{k} \frac{\P(Y=k|h(X)=y_k)\P(h(X)=y_k)}{\P(Y=k)}-1\\
=& \sum_k \int_X f_X(x) \cdot  \P(Y=k|h(x) = y_k) \P(h(x)=y_k) ~dx - 1\\
=& \sum_k \int_X f_X(x) \cdot \P(Y=k|X=x) \cdot 1(h(x)=y_k)\cdot \P(h(x)=y_k)~dx - 1\\
=& \int_X  f_X(x) \cdot \sum_k \P(Y=k|X=x) \cdot 1(h(x)=y_k)\cdot \P(h(x)=y_k)~dx - 1\\
\leq & \int_X f_X(x) \cdot \sum_k \P(Y=k|X=x) \cdot 1(h^*(X)=k)\cdot \P(h^*(X)=k)~dx - 1 \\
=&\sum_{k} \P(h^*(X)=k,Y=k)-\P(h^*(X)=k)\cdot \P(Y=k).
\end{align*}
where the last inequality is due to the fact that the Bayes optimal classifier selects the highest $\P(Y=k|X) $ for each $x$.

\end{proof}

\subsection{Proof of Theorem \ref{thm:bayes}}
\begin{proof}
By definition of $D_f$:
\begin{align*}
    D_f(\PP_{h \times Y^*}\|\QQ_{h \times Y^*}) = \sum_{y,y'} \P(h(X) = y,Y^*=y') \cdot f\left( \frac{\P(h(X) = y,Y^*=y')}{\P(h(X) = y) \cdot \P(Y^*=y')} \right)
\end{align*}
First we want to prove
\[
\left|\frac{\P(h^*(X) = y,Y^*=y)}{\P(h^*(X) = y) \cdot \P(Y^*=y)}-1\right| > 
\left|\frac{\P(h(X) = y',Y^*=y)}{\P(h(X) = y') \cdot \P(Y^*=y)} -1\right|, \forall h \neq h^*
\]
This is because:
\begin{align*}
    \frac{\P(h^*(X) = y,Y^*=y)}{\P(h^*(X) = y) \cdot \P(Y^*=y)} = \frac{\P(h^*(X) = y|Y^*=y)}{\P(h^*(X) = y)} = \frac{1}{\P(Y^*=y)}
\end{align*}

On the other hand
\begin{align*}
&~~~~\frac{\P(h(X) = y',Y^*=y)}{\P(h(X) = y') \cdot \P(Y^*=y)}\\
&= \frac{\P(h(X) = y'|Y^*=y)}{\P(h(X)=y'|Y^*=y)\P(Y^*=y)+\P(h(X)=y'|Y^*=-y)\P(Y^*=-y)} \\
&=\frac{1}{\P(Y^*=y)+\frac{\P(h(X)=y'|Y^*=-y)}{\P(h(X) = y'|Y^*=y)}\P(Y^*=-y)} 
\end{align*}
When $\frac{\P(h(X)=y'|Y^*=-y)}{\P(h(X) = y'|Y^*=y)} < 1$, $\P(Y^*=y)+\frac{\P(h(X)=y'|Y^*=-y)}{\P(h(X) = y'|Y^*=y)}\P(Y^*=-y) < \P(Y^*=y)+\P(Y^*=-y)=1$, therefore $\frac{\P(h(X) = y',Y^*=y)}{\P(h(X) = y') \cdot \P(Y^*=y)} > 1$. Further
\begin{align*}
&\frac{\P(h(X) = y',Y^*=y)}{\P(h(X) = y') \cdot \P(Y^*=y)}< \frac{1}{\P(Y^*=y)}= \frac{\P(h^*(X) = y,Y^*=y)}{\P(h^*(X) = y) \cdot \P(Y^*=y)}
\end{align*}
When $\frac{\P(h(X)=y'|Y^*=-y)}{\P(h(X) = y'|Y^*=y)} > 1$, denote $\alpha:=\frac{\P(h(X)=y'|Y^*=-y)}{\P(h(X) = y'|Y^*=y)} > 1$. We have
\begin{align*}
   &1- \frac{1}{\P(Y^*=y)+\alpha \cdot \P(Y^*=-y)} \\
   =& \frac{(\alpha-1)\P(Y^*=-y)}{\P(Y^*=y)+\alpha \cdot \P(Y^*=-y)} \\
   =& \frac{\P(Y^*=-y)}{\P(Y^*=y)} \cdot \frac{\alpha-1}{\alpha+1}\\ <&\frac{\P(Y^*=-y)}{\P(Y^*=y)} \\
   =& \frac{1}{\P(Y^*=y)} - 1. 
\end{align*}
Therefore we proved
\[
\left|\frac{\P(h^*(X) = y,Y^*=y)}{\P(h^*(X) = y) \cdot \P(Y^*=y)}-1\right| > 
\left|\frac{\P(h(X) = y',Y^*=y)}{\P(h(X) = y') \cdot \P(Y^*=y)} -1\right|, \forall h \neq h^*
\]
Because $f(v)$ is monotonically increasing in $|v-1|$, we proved that
\begin{align*}
        &~~~~D_f(\PP_{h \times Y^*}\|\QQ_{h \times Y^*}) \\
        & = \sum_{y,y'} \P(h(X) = y,Y^*=y') \cdot f\left( \frac{\P(h(X) = y,Y^*=y')}{\P(h(X) = y) \cdot \P(Y^*=y')} \right) \\
        &<\sum_{y,y'} \P(h(X) = y,Y^*=y')f\left( \frac{\P(h^*(X) = y',Y^*=y')}{\P(h^*(X) = y') \cdot \P(Y^*=y')} \right)\\
        &=\sum_y \P(Y^*=y)f\left( \frac{\P(h^*(X) = y,Y^*=y)}{\P(h^*(X) = y) \cdot \P(Y^*=y)} \right)\\
        &= D_f(\PP_{h^* \times Y^*}\|\QQ_{h^* \times Y^*})
\end{align*}
The last equality is because $h^*$ always agrees with $Y^*$, so $\P(h^*(X) = y,Y^*=y) = \P(Y^*=y)$ and $\P(h^*(X) = -y,Y^*=y) = 0$.

% which is simply because 
\end{proof}

\subsection{Proof of Theorem \ref{thm:main}: $\mathcal H$-robust}
\begin{proof}
The proofs for the multi-class case under uniform diagonal and sparse noise setting are entirely symmetrical due to Theorem \ref{thm:multi1} and \ref{thm:multi2}. We deliver the main idea for the binary case.

The proof for condition (I) is easy to see:
\begin{align*}
&\argmax_{h \in \mathcal H} D_f (\tilde{\PP}_{h \times \nY}\|\tilde{\QQ}_{h \times \nY})\\
=& \argmax_{h \in \mathcal H}~ \sup_g ~~\E_{\tZ \sim \tilde{\PP}}\left[ g(\tZ)\right] - \E_{\tZ \sim\tilde{\QQ}}\left[\f(g(\tZ))\right] \nonumber \\
=& \argmax_{h \in \mathcal H}~ \sup_g ~~ (1-e_+ - e_-)\Big[
\E_{Z \sim \PP}\left[ g(Z)\right] - \E_{Z \sim \QQ}\left[\f(g(Z)\right]\Big] + \bias_f(h,g)\\
=& \argmax_{h \in \mathcal H}~ \sup_g ~~ \E_{Z \sim \PP}\left[ g(Z)\right] - \E_{Z \sim \QQ}[\f(g(Z)]\\
=&\argmax_{h \in \mathcal H}  ~ D_f(\PP_{h \times Y}\|\QQ_{h \times Y})\\
=& h^*_f.
\end{align*}
Now we prove the robustness of $D_f$ under condition (II). Denote by $h'$ the classifier that maximizes $D_f (\tilde{\PP}_{h' \times \nY}\|\tilde{\QQ}_{h' \times \nY})$ and 
\[
D_f (\tilde{\PP}_{h' \times \nY}\|\tilde{\QQ}_{h' \times \nY}) > D_f (\tilde{\PP}_{h^*_f \times \nY}\|\tilde{\QQ}_{h^*_f \times \nY}) 
\]
But
\begin{align*}
&D_f (\tilde{\PP}_{h' \times \nY}\|\tilde{\QQ}_{h' \times \nY})\\
    = & (1-e_+ - e_-)\Big[
\E_{Z \sim \PP}\left[ \tilde{g}^*([h'(X),Y])\right] - \E_{Z \sim \QQ}\left[\f(\tilde{g}^*([h'(X),Y])\right]\Big] + \bias_f(h',\tilde{g}^*)\\
\leq & \max_{h \in \mathcal H} ~ (1-e_+ - e_-) \cdot \sup_g \Big[
\E_{Z \sim \PP}\left[ g([h(X),Y])\right]- \E_{Z \sim \QQ}\left[\f(g([h(X),Y])\right]\Big] + \bias_f(h^*_f,g^*) \tag{$\bias_f(h, \tilde{g}^*) \leq \bias_f(h^*_f,g^*)$}\\
= & \max_{h \in \mathcal H}~  (1-e_+ - e_-) \cdot D_f(\PP_{h \times Y}\|\QQ_{h \times Y}) + \bias_f(h^*_f,g^*) \tag{variational form of $D_f$}\\
=& (1-e_+ - e_-) \cdot D_f(\PP_{h^*_f \times Y}\|\QQ_{h^*_f \times Y}) + \bias_f(h^*_f,g^*)\\
\leq & \sup_g \Big[
\E_{Z = [h^*_f(X),Y] \sim \PP}\left[ g(Z)\right]- \E_{Z = [h^*_f(X), Y] \sim \QQ}\left[\f(g(Z))\right]\Big] + \bias_f(h^*_f,g)\\
=& D_f (\tilde{\PP}_{h^*_f \times \nY}\|\tilde{\QQ}_{h^*_f \times \nY}),
\end{align*}
which is a contradiction. 
\end{proof}

\subsection{Proof of Lemma \ref{lm: bias}: Impact of $\bias$ term for different $f$-divergences}
Denote $p_y = \P(Y=y), \tilde p_y := \P(\nY = y)$, $\dfrac{\tilde{\PP}(y,y')}{\tilde{\QQ}(y,y')}:=\frac{\P(h(x) = y,\tilde{Y}=y')}{\P(h(x) = y) \cdot \P(\tilde{Y}=y')}$.
We first prove that
$\left| \dfrac{\tilde{\PP}(y,y')}{\tilde{\QQ}(y,y')} -1\right|$ approaches to 0 as a function of $1-e_+-e_-$:
\begin{proposition}
\label{prop:pq}
When $e_+,e_- < 0.5$, 
$
   \left| \dfrac{\tilde{\PP}(y,y')}{\tilde{\QQ}(y,y')} -1\right| = O\left((1-e_+-e_-) \right).
$
\end{proposition}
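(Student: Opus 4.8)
The plan is to write everything in terms of the clean quantities and the quantity $1 - e_+ - e_-$, then show the deviation from $1$ is linear in that quantity. First I would express the noisy joint $\P(h(X)=y,\nY=y')$ and the noisy marginal $\tilde p_{y'} = \P(\nY=y')$ in terms of the clean conditionals $\P(h(X)=y\mid Y=\pm 1)$ and the transition probabilities. For $y' = +1$, this gives $\P(h(X)=y,\nY=+1) = (1-e_+)\,\P(h(X)=y,Y=+1) + e_-\,\P(h(X)=y,Y=-1)$, and similarly for $y'=-1$; the marginal is $\tilde p_{+1} = p(1-e_+) + (1-p)e_-$. So the ratio $\tilde{\PP}(y,y')/\tilde{\QQ}(y,y')$ is an explicit rational function of the clean probabilities, the class prior $p$, and $e_+,e_-$.

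Next I would algebraically manipulate $\tilde{\PP}(y,y')/\tilde{\QQ}(y,y') - 1 = \big(\tilde{\PP}(y,y') - \tilde{\QQ}(y,y')\big)/\tilde{\QQ}(y,y')$ and compute the numerator. The key observation is that the numerator $\tilde{\PP}(y,y') - \tilde{\QQ}(y,y') = \P(h(X)=y,\nY=y') - \P(h(X)=y)\P(\nY=y')$ is a covariance-type term; substituting the transition-matrix expressions and using $\sum_{y''} \big(\P(h(X)=y,Y=y'') - \P(h(X)=y)\P(Y=y'')\big) = 0$, one finds the cross terms telescope so that the numerator factors as $(1-e_+-e_-)$ times a clean covariance term, e.g. for $y'=+1$ it becomes $(1-e_+-e_-)\big(\P(h(X)=y,Y=+1)-\P(h(X)=y)\,p\big)$. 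The denominator $\tilde{\QQ}(y,y') = \P(h(X)=y)\,\tilde p_{y'}$ stays bounded away from $0$ as long as $e_+,e_-<1/2$ keeps $\tilde p_{y'}$ bounded below (since $\tilde p_{+1} \ge \min(1-e_+,e_-) \cdot \text{(stuff)}$, or more simply $\tilde p_{+1}$ is a convex combination of $1-e_+>1/2$ and $e_-$, so it is bounded below by a positive constant depending on $p$). Hence the whole ratio minus $1$ is $(1-e_+-e_-)$ times a bounded factor, which is exactly $O(1-e_+-e_-)$.

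The main obstacle I anticipate is bookkeeping with the denominator: to claim an honest $O(\cdot)$ bound one needs $\tilde{\QQ}(y,y')$ bounded away from zero, which requires a mild nondegeneracy assumption (the hypothesis space only contains classifiers with $\P(h(X)=y)$ bounded below, and the class priors are nondegenerate) — this is presumably implicit in the paper's setup, and I would state it or absorb constants into the $O(\cdot)$ accordingly. A secondary bit of care is that the sign/identity $\sum_{y''}(\tilde{\PP}(y,y'') - \tilde{\QQ}(y,y'')) = 0$ is what makes the cross-terms cancel cleanly, so I would highlight that step. Once the numerator is shown to carry the factor $(1-e_+-e_-)$ and the denominator is a bounded positive quantity, the conclusion is immediate, and the same computation handles all four $(y,y')$ pairs symmetrically.
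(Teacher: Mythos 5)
Your proposal is correct and follows essentially the same route as the paper: both express the noisy joint and marginal through the transition matrix, use the sum-to-zero identity to make the cross terms cancel so that the numerator factors as $(1-e_+-e_-)$ times a clean covariance term, and then bound the denominator using $e_+,e_-<1/2$ to keep $\tilde{p}_{y'}$ away from zero (the paper routes the same computation through conditional probabilities and posterior differences rather than joints, but the mechanism is identical). One small simplification you could note: your worry about needing $\P(h(X)=y)$ bounded below is unnecessary, since the clean covariance $\P(h(X)=y,Y=y')-\P(h(X)=y)\P(Y=y')$ itself carries a factor of $\P(h(X)=y)$ that cancels against the same factor in $\tilde{\QQ}(y,y')$, leaving only the requirement that $\tilde{p}_{y'}$ be bounded below.
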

\begin{proof}
\begin{align*}
    &\left| \dfrac{\tilde{\PP}(y,y')}{\tilde{\QQ}(y,y')} -1\right|=\left|\frac{\P(h(x) = y|\tilde{Y}=y')}{\P(h(x) = y)}-1\right|\\
    &=\left|\frac{\P(h(x) = y|\nY=y')-\tilde{p}_{y'} \P(h(x) = y|\nY=y') - (1-\tilde{p}_{y'}) \P(h(x) = y|\nY=-y') }{\P(h(x) = y)}\right|\\
    &=\frac{1-\tilde{p}_{y'}}{\P(h(x) = y)} \cdot |\P(h(x) = y|\nY=y')-\P(h(x) = y|\nY=-y')|
\end{align*}
$ |\P(h(x) = y|\nY=y')-\P(h(x) = y|\nY=-y')|$ derives as
\begin{align*}
     &~~~~|\P(h(x) = y|\nY=y')-\P(h(x) = y|\nY=-y')|\\
     &=\Big|p_{y'} \P(h(x) = y|Y=y') \cdot \P(Y=y'|\nY=y')+(1-p_{y'}) \P(h(x) = y|Y=-y') \cdot \P(Y=-y'|\nY=y')\\
     &~~~~-p_{y'} \P(h(x) = y|Y=y) \cdot \P(Y=y’|\nY=-y')\\
     &~~~~-(1-p_{y'}) \P(h(x) = y|Y=-y') \cdot \P(Y=-y'|\nY=-y')\Big|\\
     &=\Big|p_{y'} \P(h(x) = y|Y=y') \cdot (\P(Y=y'|\nY=y')-\P(Y=y'|\nY=-y'))\\
     &~~~~+(1-p_{y'}) \P(h(x) = y|Y=-y') \cdot (\P(Y=-y'|\nY=y') - \P(Y=-y'|\nY=-y'))\Big|\\
     &=\Big|p_{y'} \P(h(x) = y|Y=y')-(1-p_{y'}) \P(h(x) = y|Y=-y')| \cdot |\P(Y=y'|\nY=y')-\P(Y=y'|\nY=-y')\Big|
\end{align*}
The last equation is satisfied because $\forall y$: 
\begin{align*}
    & \P(Y=y|\nY=y)+\P(Y=-y|\nY=y)=\P(Y=-y|\nY=-y)+\P(Y=y|\nY=-y) \\
    \Longleftrightarrow &\P(Y=y|\nY=y)-\P(Y=y|\nY=-y)= \P(Y=-y|\nY=-y)-\P(Y=-y|\nY=y)
\end{align*}
Now focus on $|\P(Y=y'|\nY=y')-\P(Y=y'|\nY=-y')|$:
\begin{align*}
    &~~~~|\P(Y=y'|\nY=y')-\P(Y=y'|\nY=-y')| \\
    &=\left|\frac{\P(\nY=y'|Y=y') \cdot p_{y'}}{\P(\ny=y')}-\frac{\P(\nY=-y'|Y=y') \cdot p_{y'}}{\P(\ny=-y')}\right|\\
    &=\frac{p_{y'}}{\tilde{p}_{y'} \cdot (1-\tilde{p}_{y'})}|(1-e_{y'}) \cdot (1-\tilde{p}_{y'}) - e_{y'} \cdot \tilde{p}_{y'} |\\
    &=\frac{p_{y'}}{\tilde{p}_{y'} \cdot (1-\tilde{p}_{y'})}\left| (1-e_{y'}) \cdot (1-p_{y'}(1-e_{y'})-(1-p_{y'})e_{-y'}) - e_{y'} \cdot (p_{y'}(1-e_{y'})+ (1-p_{y'})e_{-y'})\right|\\
    &=\frac{p_{y'}}{\tilde{p}_{y'} \cdot (1-\tilde{p}_{y'})}|(1-p_{y'}) (1-e_+-e_-)|\\
    &=\frac{p_{y'} \cdot (1-p_{y'}) }{\tilde{p}_{y'} \cdot (1-\tilde{p}_{y'})}|1-e_+-e_-|
\end{align*}
Putting everything up together:
\begin{align*}
    \left|\frac{\tilde{P}(y,y')}{Q(y,y')}-1\right| = p_{y'}(1-p_{y'}) \left(p_{y'} \P(h(X) = y|Y=y')-(1-p_{y'}) \P(h(X) = y|Y=-y')\right)\frac{|1-e_+-e_-|}{\tilde{p}_{y'}}
\end{align*}
When $e_+,e_- < 0.5$, we have
\begin{align*}
    \tilde{p}_{y'} = p_{y'} (1-e_{y'}) + p_{-y'} e_{-y'} \geq 0.5 \min\{p, 1-p\}.
\end{align*}
Therefore
\begin{align*}
     & p_{y'}(1-p_{y'}) \left(p_{y'} \P(h(X) = y|Y=y')-(1-p_{y'}) \P(h(X) = y|Y=-y')\right)\frac{|1-e_+-e_-|}{\tilde{p}_{y'}} \\
     &\leq 2\max\{p, 1-p\} (p_{y'} \P(h(X) = y|Y=y')-(1-p_{y'}) \P(h(X) = y|Y=-y')) \cdot |1-e_+-e_-|
\end{align*}
\end{proof}

Next, we prove Lemma \ref{lm: bias}:
\begin{proof}
    %Since $e_-+e_+<1$, 
    Shorthand $\tilde{P}(y,y'):=\P(h(x) = y,\tilde{Y}=y'), \tilde{Q}(y,y'):=\P(h(x) = y) \cdot \P(\tilde{Y}=y')$.
    Denote $x:= \frac{\P(h(x) = y,\tilde{Y}=y')}{\P(h(x) = y) \cdot \P(\tilde{Y}=y')} -1= \frac{\tilde{P}(y,y')}{\tilde{Q}(y,y')} -1$. Next we prove  $\bias_f(h,\tilde{g}^*) = O(x^2)$ for different $f$-divergences.
    
Because $\bias_f(h,\tilde{g}^*):=\sum_{j=1}^K  e_j \cdot \Delta^j_f(h,\tilde{g}^*) =\sum_{j=1}^K  e_j \cdot \E_X[\tilde{g}^*(h(X),y) -  \f(\tilde{g}^*(h(X),y))]$, we analyze each of the term in expectation: $\tilde{g}^*(y,y') -  \f(\tilde{g}^*(y,y'))$.
\paragraph{Jenson-Shannon} For Jenson-Shannon divergence, we have:
\begin{align*}
    &~~~~\tilde{g}^*(y,y') -  \f(\tilde{g}^*(y,y')) = \tilde{g}^*(y,y')  + \log (2-e^{\tilde{g}^*(y,y')}) \\
    &=\log \frac{2\tilde{P}(y,y')}{\tilde{P}(y,y')+\tilde{Q}(y,y')} + \log \left(2-e^{\log \frac{2\tilde{P}(y,y')}{\tilde{P}(y,y')+\tilde{Q}(y,y')} }\right)\\
    &=\log \frac{2\tilde{P}(y,y')}{\tilde{P}(y,y')+\tilde{Q}(y,y')} + \log \frac{2\tilde{Q}(y,y')}{\tilde{P}(y,y')+\tilde{Q}(y,y')}\\
    &=\log \frac{2}{1+\frac{\tilde{Q}(y,y')}{\tilde{P}(y,y')}} + \log \frac{2}{1+\frac{\tilde{P}(y,y')}{\tilde{Q}(y,y')}}\\
    &=    \log \frac{2}{1+\frac{1}{x+1}}+\log \frac{2}{1+x+1} \\
    &=\log \frac{4}{2+x+1+\frac{1}{x+1}}
\end{align*}
Using Taylor expansion we know
\begin{align*}
    \log \frac{4}{2+x+1+\frac{1}{x+1}} = 0 + x \cdot \left(-\frac{4\cdot (1-\frac{1}{(x+1)^2})}{2+x+1+\frac{1}{x+1}} \right)\Bigg|_{x=0}+O(x^2) = O(x^2).
\end{align*}

\paragraph{Squared Hellinger}
\begin{align*}
    &\tilde{g}^*(y,y') -  \f(\tilde{g}^*(y,y')) = \tilde{g}^*(y,y') - \dfrac{\tilde{g}^*(y,y')}{1-\tilde{g}^*(y,y')}=\dfrac{\tilde{g}^{*2}(y',y)}{1-\tilde{g}^*(y,y')}\\
    =&\sqrt{\frac{\tilde{P}(h(x) = y,\tilde{Y}=y')}{\tilde{Q}(h(x) = y,\tilde{Y}=y')}}+\sqrt{\frac{\tilde{Q}(h(x) = y,\tilde{Y}=y')}{\tilde{P}(h(x) = y,\tilde{Y}=y')}}-2\\
    =&\sqrt{1+x}+\sqrt{\dfrac{1}{1+x}}-2.
\end{align*}

\begin{align*}
    \sqrt{1+x}+\sqrt{\dfrac{1}{1+x}}-2=\left[1+\dfrac{1}{2\cdot \sqrt{1+x}}+1-\dfrac{1}{2}(1+x)^{-1.5} \right]\Bigg|_{x=0}-2+O(x^2)=O(x^2).
\end{align*}

\paragraph{Pearson $\X^2$} 
\begin{align*}
    &~~~~\tilde{g}^*(y,y') -  \f(\tilde{g}^*(y,y')) = \tilde{g}^*(y,y')  - \tilde{g}^*(y,y')  - \frac{1}{4}(\tilde{g}^*(y,y'))^2 \\
    &=-\left(\frac{\tilde{P}(y,y')}{\tilde{Q}(y,y')}-1\right)^2 = -x^2=O(x^2).
\end{align*}

% \yl{reorder according to table}

\paragraph{Neyman $\X^2$} 
\begin{align*}
    &~~~~\tilde{g}^*(y,y') -  \f(\tilde{g}^*(y,y')) = \tilde{g}^*(y,y') -2  -2 \sqrt{1-\tilde{g}^*(y,y')} \\
    &=-\left(\frac{\tilde{Q}(y,y')}{\tilde{P}(y,y')}-1\right)^2=-\left(\dfrac{1}{1+x}-1\right)^2=O(x^2).
\end{align*}

\paragraph{KL}
\begin{align*}
    &\tilde{g}^*(y,y') -  \f(\tilde{g}^*(y,y')) = \tilde{g}^*(y,y') - e^{\tilde{g}^*(y,y')-1}\\
    =& 1+\log{\dfrac{\tilde{P}(h(x) = y,\tilde{Y}=y')}{\tilde{Q}(h(x) = y,\tilde{Y}=y')}}-\dfrac{\tilde{P}(h(x) = y,\tilde{Y}=y')}{\tilde{Q}(h(x) = y,\tilde{Y}=y')}
\end{align*}
\begin{align*}
1+\log{(1+x)}-(1+x)=(\dfrac{1}{1+x}-1)\bigg|_{x=0}+O(x^2)=O(x^2).
\end{align*}
\paragraph{Reverse KL}
\begin{align*}
    &\tilde{g}^*(y,y') -  \f(\tilde{g}^*(y,y')) = \tilde{g}^*(y,y') +1 +\log{(-\tilde{g}^*(y,y'))} \\
    =&1-\dfrac{\tilde Q(h(x) = y,\tilde{Y}=y')}{\tilde P(h(x) = y,\tilde{Y}=y')}+\log{\dfrac{\tilde Q(h(x) = y,\tilde{Y}=y')}{\tilde P(h(x) = y,\tilde{Y}=y')}}
\end{align*}

\begin{align*}
1+\log{(1+x)^{-1}}-(1+x)^{-1}=1+\Big[-\dfrac{1}{1+x}-(1-\dfrac{1}{(1+x)^2})\Big]\Big|_{x=0}+O(x^2)=O(x^2)
\end{align*}

\paragraph{Jeffrey}
\begin{align*}
    1-\tilde{g}^*(y,y')=\dfrac{\tilde Q(h(x) = y,\tilde{Y}=y')}{\tilde P(h(x) = y,\tilde{Y}=y')}-\log{\dfrac{\tilde P(h(x) = y,\tilde{Y}=y')}{\tilde Q(h(x) = y,\tilde{Y}=y')}}=\dfrac{1}{1+x}-\log{(1+x)}
\end{align*}
And
\begin{align*}
    \tilde{g}^*(y,y')=\dfrac{x}{1+x}+\log{(1+x)}
\end{align*}
\begin{align*}
&\tilde{g}^*(y,y') -  \f(\tilde{g}^*(y,y')) = \tilde{g}^*(y,y') - W(e^{1-\tilde{g}^*(y,y')})-\dfrac{1}{W(e^{1-\tilde{g}^*(y,y')})}-\tilde{g}^*(y,y')+2\\
=&2-W(e^{1-\tilde{g}^*(y,y')})-\dfrac{1}{W(e^{1-\tilde{g}^*(y,y')})}\\
=& 2-W(e^{1-\tilde{g}^*(y,y')})\Big|_{x=0}-W'(e^{1-\tilde{g}^*(y,y')})\Big|_{x=0}\cdot x -\dfrac{1}{W(e^{1-\tilde{g}^*(y,y')})}\Big|_{x=0}\\
&-\left[\dfrac{1}{W(e^{1-\tilde{g}^*(y,y')})}\right]'\Big|_{x=0}\cdot x+O(x^2)\\
=& -W'(e^{1-\tilde{g}^*(y,y')})\Big|_{x=0}\cdot x - \Big[\dfrac{1}{W(e^{1-\tilde{g}^*(y,y')})}\Big]'\Big|_{x=0}\cdot x+O(x^2)\\
=&O(x^2)
\end{align*}

\end{proof}

\subsection{Proof of Theorem \ref{thm:tv}: $\mathcal H$-robustness of total-variation}
\begin{proof}
We present the binary derivation but it extends easily to the multi-class case.

For TV, since  $f(v) = \frac{1}{2}|v-1|$, $f^*(u) = u$, we immediately have $\forall y'$
$
g^*(h=y',y) -  \f(g(h=y',y)) = 0
$ and therefore \[
\Delta^y_{f}(h,g)=\E_X[ g(h(X),y)]-\E_X\left[\f\left(g(h(X),y)\right) \right] \equiv 0, \forall y
\]
and further for the binary case
\[
\bias_f(h,g):= e_+ \cdot \Delta^{-1}_{f}(h,g) + e_- \cdot \Delta^{+1}_{f}(h,g) = 0
\]
and for the multi-class case
\[
\bias_f(h,g)=\sum_j e_j \Delta^j_{f}(h,g) = 0. 
\]
Therefore $\bias_f(h,g) \equiv 0$. We then know TV is $\mathcal H$-robust for an arbitrary $\mathcal H$ using Theorem \ref{thm:main}.

TV's robustness can also be derived straightforwardly for binary classification:
\begin{align*}
& ~~~~~\sup_{g} \E_{\tZ \sim \tilde{\PP}}\left[ g(\tZ)\right] - \E_{\tZ \sim\tilde{\QQ}}\left[\f(g(\tZ))\right] \\
&= \sup_{|g| \leq 1/2} \biggl\{(1-e_+ - e_-)\left[
\E_{Z \sim \PP}\left[ g(Z)\right] - \E_{Z \sim \QQ}\left[\f(g(Z))\right] \right]\\
&\quad+ \E\left[e_+ \cdot g(h(X),-1) + e_- \cdot g(h(X),+1)\right]\\
&\quad-\E\left[e_+ \cdot \f\left(g(h(X),-1)\right) + e_- \cdot \f\left(g(h(X),+1)\right)\right] \biggr\}\\
&=\sup_{|g| \leq 1/2} \biggl\{(1-e_+ - e_-)\left[
\E_{Z \sim \PP}\left[ g(Z)\right] - \E_{Z \sim \QQ}\left[\f(g(Z))\right] \right]\\
&\quad+ \E\left[e_+ \cdot g(h(X),-1) + e_- \cdot g(h(X),+1)\right]\\
&\quad-\E\left[e_+ \cdot g(h(X),-1) + e_- \cdot g(h(X),+1)\right] \biggr\}\\
&= \sup_{|g| \leq 1/2} (1-e_+ - e_-)\left[
\E_{Z \sim \PP}\left[ g(Z)\right] - \E_{Z \sim \QQ}\left[\f(g(Z))\right] \right]\\
&=(1-e_+ - e_-) D_f(\PP_{h \times Y} \| \QQ_{h \times Y})
\end{align*}
That is for total variation, minimizing the $f$-divergence between $h(X)$ and $\ny$ is the same as minimizing the $f$-divergence between $h(X)$ and the clean distribution $Y$.
The above proof generalizes to multi-class easily. %by observing that $f^*(u) = u$ for total variation. 
For example for the uniform diagonal noise, we have
\begin{align*}
   \Delta^j_{f}(h,g) = \sum_{j= 1}^{K}e_j\cdot \Big[\E_X[ g(h(X),\tilde{Y}=j)]- \E_X\left[\f(g(h(X),\tilde{Y}=j))\right]\Big] = 0.
\end{align*}
Similar argument holds for sparse noise too.

\end{proof}

\subsection{When $f$-divergence measure is $\mathcal H^*$-robust?}
\begin{theorem}
\label{thm:bias}
For binary classification, suppose $\Delta^y_f(h,g)$ has the following form:
\begin{align}
\Delta^y_f(h,\tilde{g}^*) &= w_h \cdot t(\IM(h = y,\nY =  y))+ (1-w_h) \cdot t(\IM(h = -y,\nY = y))\label{eqn:deltah}\\
\Delta^y_f(h^*_f,g^*) &= w_{h^*_f} \cdot t(\IM(h^*_f = y,Y =  y))+ (1-w_{h^*_f}) \cdot t(\IM(h^*_f = -y,Y = y)) \label{eqn:delta*}
\end{align}
where $w_h,w_{h^*_f} \in [0,1]$. When $t(x)$ is monotonically decreasing as a function of $|x-1|$ on both sides of $[1,\infty]$ and $(-\infty,1)$, then $\bias_f(h^*_f,g^*) \geq \max_{h \in \mathcal H^*}\bias_f(h,\tilde{g}^*)$. Further, according to Theorem \ref{thm:main}, the corresponding $f$-divergence measure is $\mathcal H^*$-robust.
\end{theorem}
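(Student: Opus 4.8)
The plan is to reduce the claimed inequality to a comparison of two two-term sums and then quote condition~(II) of Theorem~\ref{thm:main}. First I would record two consequences of the standing assumptions of this subsection, $\P(Y=+1)=\P(Y=-1)$ and $e_+=e_-=:e$ (with $e<\tfrac12$ since $e_++e_-<1$). (i) The noisy label is balanced too, $\tilde p=p(1-e)+(1-p)e=\tfrac12$; hence for $R\in\{Y,\nY\}$ and either $y$, $\IM(h=y,R=+1)+\IM(h=y,R=-1)=2$, so $\IM(h=-y,R=y)=2-\IM(h=-y,R=-y)$. (ii) Because $e_+=e_-=e$, $\bias_f(h,g)=e\big(\Delta^{+1}_f(h,g)+\Delta^{-1}_f(h,g)\big)$, so it suffices to show $\Delta^{+1}_f(h^*_f,g^*)+\Delta^{-1}_f(h^*_f,g^*)\ \geq\ \Delta^{+1}_f(h,\tilde g^*)+\Delta^{-1}_f(h,\tilde g^*)$ for every $h\in\mathcal H^*$ with $h\neq h^*_f$ (this is exactly what condition~(II) demands, and such an $h$ lies in the first branch of the union defining $\mathcal H^*$).

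Next I would unpack the hypothesis on $t$: ``$t(x)$ monotonically decreasing as a function of $|x-1|$'' means $t(x)=\phi(|x-1|)$ for a decreasing $\phi$, so $t$ is maximized at $x=1$, is decreasing on $[1,\infty)$, and obeys the reflection identity $t(2-x)=t(x)$. Substituting (i) and this identity into (\ref{eqn:deltah}) and summing over $y\in\{+1,-1\}$, the weights $w_h$ and $1-w_h$ cancel, leaving
\[
\Delta^{+1}_f(h,\tilde g^*)+\Delta^{-1}_f(h,\tilde g^*)=t\big(\IM(h=+1,\nY=+1)\big)+t\big(\IM(h=-1,\nY=-1)\big),
\]
and similarly, from (\ref{eqn:delta*}), $\Delta^{+1}_f(h^*_f,g^*)+\Delta^{-1}_f(h^*_f,g^*)=t(\IM(h^*_f=+1,Y=+1))+t(\IM(h^*_f=-1,Y=-1))$. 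So the theorem reduces to comparing these two sums.

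For the comparison set $c^*:=\max_y\IM(h^*_f=y,Y=y)$, which is $\geq1$ by the definition of $\mathcal H^*$. For $h\in\mathcal H^*\setminus\{h^*_f\}$, that definition also forces $\IM(h=y,\nY=y)\geq c^*\geq1$ for both $y$, so monotonicity of $t$ on $[1,\infty)$ gives $t(\IM(h=y,\nY=y))\leq t(c^*)$ and hence the $h$-sum is $\leq 2t(c^*)$. On the $h^*_f$ side one of the two arguments equals $c^*$; the point I need is that the other is also in $[1,c^*]$, i.e.\ that $\min_y\IM(h^*_f=y,Y=y)\geq1$. I get this from (i): a one-line computation with balanced clean labels shows $\IM(h^*_f=+1,Y=+1)\geq1\iff\P(h^*_f(X)=+1\mid Y=+1)\geq\P(h^*_f(X)=+1\mid Y=-1)\iff\IM(h^*_f=-1,Y=-1)\geq1$, so $\max\geq1$ forces both $\geq1$. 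Then $t$ decreasing on $[1,c^*]$ gives $t$ of the smaller argument $\geq t(c^*)$, so the $h^*_f$-sum is $\geq2t(c^*)$. Chaining the two bounds yields $\bias_f(h^*_f,g^*)\geq\bias_f(h,\tilde g^*)$ for all such $h$; since $h^*_f\in\mathcal H^*$ and $h^*_f$ is still the clean maximizer over $\mathcal H^*$, condition~(II) of Theorem~\ref{thm:main} (with $\mathcal H$ replaced by $\mathcal H^*$) then gives $\mathcal H^*$-robustness.

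The step I expect to be the genuine obstacle is this last upgrade, $\max_y\IM(h^*_f=y,Y=y)\geq1\Rightarrow\min_y\IM(h^*_f=y,Y=y)\geq1$: without it $h^*_f$ could carry an $\IM$ value well below $1$, where $t$ is small, and then the clean bias need not dominate. Everything else --- the reflection identity, the cancellation of $w_h$, and the two monotone bounds --- is routine once the balanced and symmetric setup is in place; the only other point to be careful about is that the reflection identity $t(2-x)=t(x)$ is exactly what the $|x-1|$-monotonicity hypothesis provides and exactly what is needed to remove the $w_h$-dependence.
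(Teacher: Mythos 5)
Your overall architecture matches the paper's: reduce to condition (II) of Theorem \ref{thm:main}, use $\P(\nY=+1)=\P(\nY=-1)=\tfrac12$ to get the reflection identity $\IM(h=y,R=+1)+\IM(h=y,R=-1)=2$ on the \emph{arguments}, and compare everything to $c^*=\max_y\IM(h^*_f=y,Y=y)$ using the definition of $\mathcal H^*$; your observation that balancedness upgrades $\max_y\IM(h^*_f=y,Y=y)\geq 1$ to $\min_y\IM(h^*_f=y,Y=y)\geq 1$ is correct and is genuinely needed. The gap is the step where you push the reflection through $t$: you read the hypothesis as $t(x)=\phi(|x-1|)$ for a single decreasing $\phi$, which gives you $t(2-x)=t(x)$ and lets the cross-label terms $t(\IM(h=-y,\nY=y))$ collapse onto same-label terms so that the weights cancel. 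That symmetry is \emph{not} what the hypothesis (``monotonically decreasing as a function of $|x-1|$ on both sides of $[1,\infty]$ and $(-\infty,1)$'') is meant to assert, and it fails for almost every divergence this theorem is applied to in Theorem \ref{thm:robust}: e.g.\ $t(x)=1+\log x-x$ (KL), $t(x)=\log\frac{4x}{(1+x)^2}$ (JS), $t(x)=2-\sqrt{x}-1/\sqrt{x}$ (Squared Hellinger) are all unimodal at $x=1$ but not symmetric about it (only Pearson's $t(x)=-(x-1)^2$ is). Under your strong reading the theorem would be vacuous for its intended targets, and under the intended weak reading your cancellation step is false.

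The repair is local and brings you back to what the paper actually does: do not collapse the cross-label terms. Writing $u=\IM(h=+1,\nY=+1)$, $v=\IM(h=-1,\nY=-1)$ (both $\geq 1$ on $\mathcal H^*$) and using $\IM(h=\mp1,\nY=\pm1)=2-v,\,2-u$, the summed bias is a positive combination (total weight $2$) of $t(u)+t(2-u)$ and $t(v)+t(2-v)$. The map $s(x):=t(x)+t(2-x)$ is decreasing on $[1,\infty)$ using \emph{both} one-sided monotonicity assumptions separately ($t(x)$ decreases since $x\geq1$ moves away from $1$; $t(2-x)$ decreases since $2-x\leq 1$ also moves away from $1$), so $s(u),s(v)\leq s(c^*)$ for $h\in\mathcal H^*$, while for $h^*_f$ both arguments lie in $[1,c^*]$ and $s(u^*),s(v^*)\geq s(c^*)$. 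Chaining through $s(c^*)$ in place of $2t(c^*)$ recovers the inequality $\bias_f(h^*_f,g^*)\geq\max_{h\in\mathcal H^*}\bias_f(h,\tilde g^*)$; this is essentially the paper's argument, which bounds the same-label terms via monotonicity on $[1,\infty)$ and the cross-label terms via monotonicity on $(-\infty,1)$ rather than identifying the two.
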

\begin{proof}
For binary case, when $\P(Y=+1)=\P(Y=-1)$ and $e_+=e_-$, we have $\P(\nY=+1)=\P(\nY=-1)$.

Denote $\mathcal H_-^*:= \{h \in \mathcal H: \max_{y'} \IM(h(X)=-y',\tilde{Y}=y') \leq \min_{y} \IM(h^*_f=-y,Y=y)\}$, we first prove $\mathcal H_-^* \subseteq \mathcal H^*$. It is equivalent to prove, $\forall h\in \mathcal H_-^*$, $ h\in \mathcal H^*$:
\begin{align*}
    & \mathcal H_-^*= \{h \in \mathcal H: \max_{y'} \IM(h(X)=-y',\tilde{Y}=y') \leq \min_{y} \IM(h^*_f=-y,Y=y)\}\\
    \subseteq & \Big\{h \in \mathcal H: \max_{y'} \dfrac{\P(h(X)=-y'|\nY=y')}{\P(h(X)=-y')} \leq \min_{y} \dfrac{\P(h_f^*(X)=-y|Y=y)}{\P(h_f^*(X)=-y)}\Big\}\cup \{h_f^*\}\\
   \subseteq & \Big\{h \in \mathcal H: \max_{y'} \dfrac{\P(h(X)=y'|\nY=-y')}{\P(h(X)=y')} \leq \min_{y} \dfrac{\P(h_f^*(X)=y|Y=-y)}{\P(h_f^*(X)=y)}\Big\}\cup \{h_f^*\}\\
   \subseteq &  \Big\{h \in \mathcal H: \max_{y'} \dfrac{\P(h(X)=y'|\nY=-y')-\P(h(X)=y'|\nY=y')}{\P(h(X)=y')}\\
   &\leq \min_{y} \dfrac{\P(h_f^*(X)=y|Y=-y)-\P(h_f^*(X)=y|Y=y)}{\P(h_f^*(X)=y)}\Big\}\cup \{h_f^*\}\\
   \subseteq & \Big\{h \in \mathcal H: 1-\min_{y'} \dfrac{\P(h(X)=y'|\nY=y')}{\P(h(X)=y')} \leq 1-\max_{y} \dfrac{\P(h_f^*(X)=y|Y=y)}{\P(h_f^*(X)=y)}\Big\}\cup \{h_f^*\}\\
   \subseteq & \Big\{h \in \mathcal H: \min_{y'} \dfrac{\P(h(X)=y'|\nY=y')}{\P(h(X)=y')} \geq \max_{y} \dfrac{\P(h_f^*(X)=y|Y=y)}{\P(h_f^*(X)=y)}\Big\} \cup \{h_f^*\}\\
   \subseteq & \mathcal H^*= \Big\{h \in \mathcal H: \min_{y'} \IM(h(X)=y',\tilde{Y}=y') \geq \max_{y} \IM(h^*_f=y,Y=y)\Big\} \cup \{h_f^*\}
\end{align*}
Since $t(x)$ is monotonically decreasing as a function of $|x-1|$, for $h\in \mathcal H^*$,

\begin{align*}
    \bias_f(h^*_f,&g^*) = e_+\cdot \sum_{y}\P(h_f^*(X)=y)\cdot  t(\IM(h_f^*(X)=y,Y=+1))\\
    &+e_-\cdot \sum_{y}\P(h_f^*(X)=y)\cdot  t(\IM(h_f^*(X)=y,Y=-1))\\
    =& e_+\cdot \sum_{y}\P(h_f^*(X)=y)\cdot  t(\IM(h_f^*(X)=y,Y=y))\cdot \P(Y=+1)\\
    &+ e_+\cdot \sum_{y}\P(h_f^*(X)=y)\cdot  t(\IM(h_f^*(X)=y,Y=-y)) \cdot \P(Y=-1)
    \\&+e_-\cdot \sum_{y}\P(h_f^*(X)=y)\cdot  t(\IM(h_f^*(X)=y,Y=y)) \cdot \P(Y=-1)
    \\&+e_-\cdot \sum_{y}\P(h_f^*(X)=y)\cdot  t(\IM(h_f^*(X)=y,Y=-y)) \cdot \P(Y=+1)\\
     =& \dfrac{e_++e_-}{2}\cdot \sum_{y}\P(h_f^*(X)=y)\cdot  \Big[t(\IM(h_f^*(X)=y,Y=y))+  t(\IM(h_f^*(X)=y,Y=-y))\Big]\\
    \geq & \max_{h \in \mathcal H^*}~ \dfrac{e_++e_-}{2}\cdot  \Big[ t(\max_{y} \IM(h^*_f(X)=y,Y=y))+ t(\min_{y} \IM(h^*_f(X)=y,Y=-y))\Big]
\end{align*}
\begin{align*}
    \max_{h \in \mathcal H^*}~\bias_f(h,& \tilde{g}^*)  = \max_{h \in \mathcal H^*}~ e_+\cdot \sum_{y}\P(h(X)=y)\cdot  t(\IM(h(X)=y,\nY=+1))\\
    &+e_-\cdot \sum_{y}\P(h(X)=y)\cdot  t(\IM(h(X)=y,\nY=-1))\\
     = & \max_{h \in \mathcal H^*}~ [e_+\cdot \P(\tilde{Y}=+1)+e_-\cdot \P(\tilde{Y}=-1)]\cdot \sum_{\tilde{y}}\P(h(X)=\tilde{y})\cdot  t(\IM(h(X)=\tilde{y},\nY=\tilde{y}))\\
    &+[e_+\cdot \P(\tilde{Y}=-1)+e_-\cdot \P(\tilde{Y}=+1)]\cdot \sum_{\tilde{y}}\P(h(X)=\tilde{y})\cdot  t(\IM(h(X)=\tilde{y},\nY=-\tilde{y}))\\
    \leq & \max_{h \in \mathcal H^*}~ \dfrac{e_++e_-}{2}\cdot  \Big[t(\max_{y}\IM(h(X)=y,\nY=y))+t(\min_{y}\IM(h(X)=y,\nY=-y))\Big]
\end{align*}
Thus, $\bias_f(h^*_f,g^*) \geq \max_{h \in \mathcal H^*}\bias_f(h,\tilde{g}^*)$. According to Theorem \ref{thm:main}, the corresponding $f$-divergence measure is $\mathcal H^*$-robust.

\end{proof}

\subsection{Proof of Theorem \ref{thm:robust}: Robustness of $D_f$}

\begin{proof}
Earlier we proved Theorem
\ref{thm:bias}, next we show presented conditions in Eqn. (\ref{eqn:deltah}) and (\ref{eqn:delta*}) and $t(x)$ can be satisfied by the listed divergences:
% \yl{summarize your other derivations of $D_f$ here}

The proof for $ \Delta_f^{y}(h^*_f,g^*) $ can be viewed as a special case of $ \Delta_f^{y}(h,\tilde{g}^*) $. The following derivations will therefore focus on $ \Delta_f^{y}(h,\tilde{g}^*) $ and will not repeat for $ \Delta_f^{y}(h^*_f,g^*) $. 

\item

\paragraph{Jenson-Shannon}
For Jenson-Shannon, we have $f^*(u) = -\log{(2-e^u)}$, and 
\begin{align*}
\tilde{g}^*(y,y') &= \log \dfrac{2\cdot \P(h(X)=y, \tilde{Y}=y')}{\P(h(X)=y, \tilde{Y}=y')+\Q(h(X)=y, \tilde{Y}=y')}\\
&=\log \dfrac{2\cdot \P(h(X)=y| \tilde{Y}=y')}{\P(h(X)=y| \tilde{Y}=y')+\P(h(X)=y)}
\end{align*}

Therefore,  
\begin{align*}
    \Delta_f^{y}(h,g) =&\P(h(X)=-y)\cdot \log{\dfrac{4\cdot \IM(h(X)=-y,\tilde{Y}=y)}{(1+\IM(h(X)=-y,\tilde{Y}=y))^2}}\\
    &+\P(h(X)=y)\cdot \log{\dfrac{4\cdot \IM(h(X)=y,\tilde{Y}=y)}{(1+\IM(h(X)=y,\tilde{Y}=y))^2}}
\end{align*}
$t(x)=\log{\dfrac{4x}{(1+x)^2}}$ satisfies the requirement specified in Theorem \ref{thm:bias}.

\paragraph{Squared-Hellinger}

For Squared-Hellinger, we have $f^*(u) = \dfrac{u}{1-u}$, and 
\[
\tilde{g}^*(y,y') = 1-\sqrt{\dfrac{\P(h(X)=y
)}{\P(h(X)=y|\tilde{Y}=y)}}
\]

Therefore 
\begin{align*}
    \Delta_f^{y}(h,g) =&\P(h(X)=-y)\cdot \Bigg[2-\sqrt{\IM(h(X)=-y,\tilde{Y}=y)}-\dfrac{1}{\sqrt{\IM(h(X)=-y,\tilde{Y}=y)}}\Bigg]\\
    &+\P(h(X)=y)\cdot \Bigg[2-\sqrt{\IM(h(X)=y,\tilde{Y}=y)}-\dfrac{1}{\sqrt{\IM(h(X)=y,\tilde{Y}=y)}}\Bigg]
\end{align*}
Clearly $t(x)=2-\sqrt{x}-\dfrac{1}{\sqrt{x}}$ satisfies the requirement specified in Theorem \ref{thm:bias}.

\paragraph{Pearson $\X^2$}
\begin{align*}
 \Delta_f^{y}(h,\tilde{g}^*) & = \E[\tilde{g}^*(h(X),y) -  \f(\tilde{g}^*(h(X),y))] \\
 & = \E[\tilde{g}^*(h(X),y)  - \tilde{g}^*(h(X),y)  - \frac{1}{4}(\tilde{g}^*(h(X),y))^2 ]\\
 &=- \frac{1}{4}\E[ (\tilde{g}^*(h(X),y))^2 ]
\end{align*}

\begin{align*}
    \Delta_f^{y}(h,\tilde{g}^*)=& -\P(h(X)=y) \cdot  \left(\frac{\P(h(X)=y,\nY=y)}{\P(h(X)=y) \cdot \P(\nY=y)}-1\right)^2\\
    &-\P(h(X)=-y) \cdot \left(\frac{\P(h(X)=-y,\nY=y)}{\P(h(X)=-y) \cdot \P(\nY=y)}-1\right)^2\\
    =&-\P(h(X)=y) \cdot (\IM(h(X)=y,\nY=y)-1)^2\\
    &- \P(h(X)=-y) \cdot (\IM(h(X)=-y,\ny=y)-1)^2
\end{align*}

Correspondingly $t(x)=-(x-1)^2$, which satisfies the requirement specified in Theorem \ref{thm:bias}. 
\paragraph{Neyman $\X^2$}
 
For Neyman $\X^2$ we have $f^*(u) = 2-2\sqrt{1-u}$, and 

\begin{align*}
 \Delta_f^{y}(h,\tilde{g}^*) & = \E[\tilde{g}^*(h(X),y) -  \f(\tilde{g}^*(h(X),y))] \\
 & = \E[\tilde{g}^*(h(X),y)  + 2\sqrt{1-\tilde{g}^*(h(X),y)}  -2]
 \end{align*}
Since
\begin{align*}
\tilde{g}^*(y,y') = 1 - \left(\frac{\QQ(h(X)=y,\nY = y')}{\PP(h(X)=y,\nY=y')} \right)^2 =1 - \left(\frac{\P(h(X)=y)}{\P(h(X)=y|\nY=y')} \right)^2
\end{align*}
Therefore 
\begin{align*}
    \tilde{g}^*(h(X)=y,y')  + 2\sqrt{1-\tilde{g}^*(h(X)=y,y')}  -2 = -\left(\frac{\P(h(X)=y)}{\P(h(X)=y|\nY=y')} -1\right)^2
\end{align*}
And further we have
\begin{align*}
 &\Delta_f^{y}(h,\tilde{g}^*)\\
 =& - \P(h(X)=y) \cdot \left(\frac{\P(h(X)=y) }{\P(h(X)=y|\nY=y)}-1\right)^2\\
    &- \P(h(X)=-y)  \cdot \left(\frac{\P(h(X)=-y) }{\P(h(X)=-y|\nY=y)}-1\right)^2\\
    =&-\P(h(X)=y) \cdot  (\IM(h(X)=y,\nY=y)^{-1}-1)^2\\
    &- \P(h(X)=-y) \cdot (\IM(h(X)=-y,\ny=y)^{-1}-1)^2
\end{align*}
$t(x)=-(x^{-1}-1)^2$ satisfies the requirement specified in Theorem \ref{thm:bias}.

\paragraph{KL}

For KL, we have $f^*(u) = e^{u-1}$, and 
\[
\tilde{g}^*(y,y') = 1+\log{\dfrac{\P(h(X)=y|\tilde{Y}=y')}{\P(h(X)=y')}}
\]

Therefore 
\begin{align*}
    \Delta_f^{y}(h,g) =&\P(h(X)=-y)\cdot [1+\log{\IM(h(X)=-y,\tilde{Y}=y)}-\IM(h(X)=-y,\tilde{Y}=y)]\\
    &+\P(h(X)=y)\cdot [1+\log{\IM(h(X)=y,\tilde{Y}=y)}-\IM(h(X)=y,\tilde{Y}=y)]
\end{align*}
Clearly $t(x)=1+\log{x}-x$ satisfies the requirement specified in Theorem \ref{thm:bias}.

\paragraph{Reverse-KL}

For Reverse-KL, we have $f^*(u) = -1-\log{(-u)}$, and 
\[
\tilde{g}^*(y,y') = -\log{\dfrac{\P(h(X)=y)}{\P(h(X)=y|\tilde{Y}=y)}}
\]

Therefore 
\begin{align*}
    \Delta_f^{y}(h,g) =&\P(h(X)=-y)\cdot \Big[1-\log{\IM(h(X)=-y,\tilde{Y}=y)}-\dfrac{1}{\IM(h(X)=-y,\tilde{Y}=y)}\Big]\\
    &+\P(h(X)=y)\cdot \Big[1-\log{\IM(h(X)=y,\tilde{Y}=y)}-\dfrac{1}{\IM(h(X)=y,\tilde{Y}=y)}\Big]
\end{align*}
Clearly $t(x)=1-\log{x}-\dfrac{1}{x}$ satisfies the requirement specified in Theorem \ref{thm:bias}.

\end{proof}

% \section*{Appendix}
\section{Supplementary experiment results}
In our experiment settings, $D_f$ measures fail to work well on almost all sparse high noise setting. This is largely due to the super unbalanced noisy labels, e.g., for each pair, the ratio of samples between the two classes is in the range of $[\dfrac{1}{3}, \dfrac{1}{2}]$. 
\subsection{Supplementary table of Table \ref{Tab:Experiment_Results_no_bias}: Methods comparison without bias correction}
In Table \ref{Tab:Experiment_Results_no_bias_full}, the performance of Pearson $\X^2$, Jeffrey divergence on MNIST, Fashion MNIST, CIFAR-10 and CIFAR-100 are included in Table \ref{Tab:Experiment_Results_no_bias_full}.

\begin{table*}[!ht]
\centering
\scriptsize
\begin{threeparttable}
\begin{tabular}{c|c|c|c|c|c|c|c|c}
\hline
Dataset & Noise & CE & BLC & FLC & DMI& PL  & \textbf{Pearson} & \textbf{Jeffrey} \\ \hline\hline

\multirow{5}{*}{MNIST}             
& Sparse, Low   & 97.21  & 95.23 & 97.37 & 97.76  & 98.59 &  {\color{blue}\textbf{99.24(99.07$\pm$0.16)}}&  {\color{blue}\textbf{99.24(99.11$\pm$0.08)}}\\ \cline{2-9} 
& Sparse, High  & 48.55 & 55.86 & 49.67 &  49.61 & {\color{blue}\textbf{60.27}} &  58.63(58.58$\pm$0.05) & 49.21(49.17$\pm$0.04) \\ \cline{2-9} 
& Uniform, Low  & 97.14 & 94.27 & 95.51 & 97.72  & 99.06 & {\color{black}\text{99.13(99.03$\pm$0.09)}}  & {\color{blue}\textbf{99.14(99.06$\pm$0.05)}} \\ \cline{2-9} 
& Uniform, High & 93.25 & 85.92 & 87.75 & 95.50  & 97.77& {\color{black}\text{97.89(97.76$\pm$0.10)}}   & {\color{blue}\textbf{97.94(97.80$\pm$0.12)}}  \\\cline{2-9} 
& Random (0.2) & 98.26 & 97.46 & 97.61 & 98.82  & 99.25 & {\color{blue}\textbf{99.28(99.27$\pm$0.02)}} & 99.29(99.22$\pm$0.11) \\ \cline{2-9} 
& Random (0.7) & 97.00 & 93.52 & 87.74 & 95.47  & 98.52 & {\color{blue}\textbf{98.70(98.54$\pm$0.10)}} & {\color{blue}\textbf{98.67(98.53$\pm$0.15)}} \\ \cline{2-9} 
\hline\hline

\multirow{5}{*}{\begin{tabular}[c]{@{}l@{}}Fashion\\ MNIST\end{tabular}} 
& Sparse, Low   & 84.36 & 86.02 & 88.15 & 85.65  & 88.32& {\color{blue}\textbf{88.93(88.81$\pm$0.11)}}  & {\color{blue}\textbf{88.96(88.68$\pm$0.21)}} \\ \cline{2-9} 
& Sparse, High  & 43.33 & 46.97 & 47.63 & 47.16 & {\color{blue}\textbf{51.92}} & 44.62(44.36$\pm$0.21)  &  45.57(45.26$\pm$ 0.28) \\ \cline{2-9} 
& Uniform, Low  & 82.98 & 84.48 & 86.58 &  83.69  & {\color{blue}\textbf{89.31}} & 87.27(87.15$\pm$0.09) &88.13(87.85$\pm$0.18)  \\ \cline{2-9} 
& Uniform, High & 79.52 & 78.10 & 82.41 & 77.94 & 84.69 & {\color{blue} \textbf{85.30(85.26$\pm$0.06)}} & 84.92(84.63$\pm$0.26) \\\cline{2-9}
& Random (0.2) & 85.47 & 83.40 & 77.61 & 86.21  & {\color{blue} \textbf{89.78}} &  89.65(89.44$\pm$0.21)  & 89.74(89.33+0.29) \\ \cline{2-9} 
& Random (0.7) & 82.05 & 78.41 & 73.42 & 80.89  & {\color{blue} \textbf{87.22}} &86.72(86.29$\pm$0.32)  & 87.21(87.19$\pm$0.04)  \\ \cline{2-9} 
\hline\hline
\multirow{5}{*}{CIFAR-10}             
& Sparse, Low   & 87.20    & 72.96    & 76.17   & {\color{blue}\textbf{92.32}}  &   91.35   & 91.40(91.24$\pm$0.27)& 91.55(91.24$\pm$0.15) \\ \cline{2-9} 
& Sparse, High  & 61.81  & 56.30    & 66.12   & 27.94 & {\color{blue}\textbf{69.70}}          & 46.36(46.27$\pm$0.07)  & 46.21(45.78$\pm$0.27)  \\\cline{2-9}
& Uniform, Low  & 85.68 & 72.73 & 77.12 & 90.39 & 91.70  & {\color{blue}\textbf{92.37(92.27$\pm$0.07)}} & {\color{blue}\textbf{92.17(92.02$\pm$0.08)}}  \\ \cline{2-9} 
& Uniform, High & 71.38  & 54.41 & 64.22 & 82.68 & 83.42  & {\color{black}\text{83.61(83.09$\pm$0.38)}}  & {\color{blue}\textbf{83.80(83.73$\pm$0.05)}} \\ \cline{2-9}  
& Random (0.5) & 78.40 & 59.31 & 68.97 & 85.06 & 86.47  & 86.03(85.56$\pm$0.32) & 86.04(85.75$\pm$0.19) \\ 
\cline{2-9}  
& Random (0.7) & 68.26 & 38.59  &  54.39 &  77.91  & 57.81 & 76.92 (76.82$\pm$0.07)& {\color{blue}\textbf{79.46(79.08$\pm$0.25)}}

\\ \hline\hline
\multirow{5}{*}{CIFAR-100}
& Uniform & 63.87 &  51.40& 60.04 & 64.39  & 67.94 & {\color{blue}\textbf{68.42(68.16$\pm$0.16)}} & {\color{blue}\textbf{68.86(68.63$\pm$0.18)}}\\ \cline{2-9}
& Sparse & 40.45  & 36.57 & 43.39 & 40.53  & {\color{blue}\textbf{44.25}}  &  37.54(37.50$\pm$0.07)&  37.43(37.06$\pm$0.25) \\ \cline{2-9}
& Random (0.2) & 65.84  & 61.21 & 61.52 & 66.23  & 62.92  & {\color{blue}\textbf{69.90(69.72$\pm$0.15)}}& {\color{blue}\textbf{69.56(69.42$\pm$0.14)}} \\ \cline{2-9}
& Random (0.5) & 56.92  & 22.21 & 55.88 & 56.06  & 49.62& {\color{blue}\textbf{60.81(60.36$\pm$0.26)}}& {\color{blue}\textbf{60.95(60.73$\pm$0.15)}} 
 \\ \hline

\end{tabular}
\end{threeparttable}
% \vspace{5pt}
\caption{Experiment results comparison (w/o bias correction): The best performance in each setting (row) is highlighted in {\color{blue}\textbf{{blue}}}. All $f$-divergences will be highlighted if they are better than the baselines we compare to. \color{black}We report the maximum accuracy of each $D_f$ measures along with (mean $\pm$ standard deviation).
\vspace{-7pt}
}
\label{Tab:Experiment_Results_no_bias_full}
\end{table*}

\subsection{$D_f$ measures with bias correction on MNIST} 
In Table \ref{Tab:Experiment_Results_bias_mnist}, we test the impact of bias correction on MNIST with 4 noise settings. Except for the spare high noise setting which is a huge challenge for all implemented methods, experiment results of other 3 noise settings further demonstrate the negligible effect of bias term in the optimization of $D_f$ measures.
\begin{table*}[!ht]
\vspace{2pt}
\tiny
%\small % we still have enough space with this setting
\centering
\begin{threeparttable}
\begin{tabular}{c|c|c|c|c|c|c|c|c}
\hline
Noise &  J-S & Gap & PS & Gap & KL & Gap & Jeffrey & Gap \\ \hline\hline

Sparse, Low   & {\color{blue}\textbf{98.88(98.82$\pm$0.06)}} &-0.27 & {\color{blue}\textbf{99.05(98.98$\pm$0.05)}} & -0.19 & {\color{blue}\textbf{99.29(99.19$\pm$0.09)}}  & {\color{red}\textbf{+0.08}} & {\color{blue}\textbf{99.13(99.06$\pm$0.06)}} & -0.11 \\ \cline{1-9} 
 Sparse, High   & 21.39(21.36$\pm$0.03)  & -37.54 & 49.22(49.17$\pm$0.05) &-9.41 & 49.07(49.05$\pm$0.02) & -0.07 & 49.14(49.06$\pm$0.09) & -0.07\\ \cline{1-9}
 Uniform, Low    & {\color{blue}\textbf{99.18(99.10$\pm$0.05)}} & {\color{red}\textbf{+0.05}}& 99.13(99.01$\pm$0.10) & {\color{red}\textbf{+0.00}}&  {\color{blue}\textbf{99.30(99.24$\pm$0.08)}} & {\color{red}\textbf{+0.20}}& {\color{blue}\textbf{99.20(99.12$\pm$0.09)}} & {\color{red}\textbf{+0.06}}\\ \cline{1-9}
Uniform, High   & 97.76(97.68$\pm$0.07)& -0.10& 97.72(97.65$\pm$0.06) &-0.17& 97.91(97.74$\pm$0.14)& -0.23& {\color{blue}\textbf{98.16(97.98$\pm$0.14)}} &{\color{red}\textbf{+0.22}}\\ \cline{1-9}
 \hline
\end{tabular}
\end{threeparttable}
% \vspace{5pt}
\caption{$D_f$ measures with bias correction on MNIST: Digits highlighted in \color{blue}\textbf{{blue}} \color{black} means better than baseline methods, in \color{red}\textbf{{red}} \color{black} means better than without bias correction. PS: Pearson.  
\vspace{-5pt}
}
\label{Tab:Experiment_Results_bias_mnist}
\end{table*}

\subsection{$D_f$ measures with bias correction on Fashion MNIST} 
In Table \ref{Tab:Experiment_Results_bias_fashion_mnist}, we test the impact of bias correction on Fashion MNIST with 4 noise settings. We can reach the same conclusion on bias correction as MNIST.
\begin{table*}[!ht]
\vspace{2pt}
\tiny
%\small % we still have enough space with this setting
\centering
\begin{threeparttable}
\begin{tabular}{c|c|c|c|c|c|c|c|c}
\hline
Noise &  J-S & Gap & PS & Gap & KL & Gap & Jeffrey & Gap \\ \hline\hline
Sparse, Low    & {\color{blue}\textbf{89.37(88.83$\pm$0.34)}} & {\color{red}\textbf{+0.57}}&  87.99(87.90$\pm$0.13) & -0.94 & 82.29(82.03$\pm$0.22) & -7.48 & 82.04(81.65$\pm$0.26)  & -6.92 \\ \cline{1-9} 
Sparse, High   & \xmark & \xmark & 39.02(38.85$\pm$0.10)   & -5.60 &46.98(46.23$\pm$0.62) & {\color{red}\textbf{+8.02}} & 38.94(38.75$\pm$0.15) & -6.63  \\ \cline{1-9} 
Uniform, Low   & 88.98(88.61$\pm$0.26)  & {\color{red}\textbf{+0.40}}& 87.72(87.66$\pm$0.06) &{\color{red}\textbf{+0.45}} & 89.04(88.78$\pm$0.18)  & {\color{red}\textbf{+0.72}}& 89.05(88.87$\pm$0.15)   & {\color{red}\textbf{+0.92}}\\ \cline{1-9} 
Uniform, High  & {\color{blue}\textbf{85.56(85.33$\pm$ 0.19)}}& -0.06 & {\color{blue} \textbf{85.57(85.03$\pm$0.37)}}   & {\color{red}\textbf{+0.27}}& {\color{blue}\textbf{85.15(84.94$\pm$0.15)}}  &-0.54&  84.76(84.48$\pm$0.31)  & -0.16 \\\cline{1-9}  
 \hline
\end{tabular}
\end{threeparttable}
% \vspace{5pt}
\caption{$D_f$ measures with bias correction on Fashion MNIST: Digits highlighted in \color{blue}\textbf{{blue}} \color{black} means better than baseline methods, in \color{red}\textbf{{red}} \color{black} means better than without bias correction.  $\times$: experiment failed to stablize. PS: Pearson.
\vspace{-5pt}
}
\label{Tab:Experiment_Results_bias_fashion_mnist}
\end{table*}

\section{Experiment details}

\subsection{Noise transition matrix for Section 5.2: robustness of $D_f$ measures}
We use CIFAR-10 dataset together with the uniform noise transition matrix to flip the noisy labels. In the following noise transition matrix, $e$ is in $[0.00, 0.01, 0.02,  ..., 0.09]$ and the noise rate of each set of noisy labels is $9*e$.

{\tiny{\[
\begin{bmatrix}
1-9*e & e & e & e & e & e & e & e & e &
       e \\
e & 1-9*e & e & e & e & e & e & e & e &
       e \\
e & e & 1-9*e & e & e & e & e & e & e &
       e \\
e & e & e & 1-9*e & e & e & e & e & e &
       e \\
e & e & e & e & 1-9*e & e & e & e & e &
       e \\
e & e & e & e & e & 1-9*e & e & e & e &
       e \\
e & e & e & e & e & e & 1-9*e & e & e &
       e \\
e & e & e & e & e & e & e & 1-9*e & e &
       e \\
e & e & e & e & e & e & e & e & 1-9*e &
       e \\
e & e & e & e & e & e & e & e & e &
       1-9*e 
\end{bmatrix}\]}}

\subsection{Noise transition matrix for MNIST and Fashion MNIST dataset}

Sparse-low noise matrix:
% \label{Append:noise}
{\tiny{\[
\begin{bmatrix}
0.7& 0.3& 0. & 0. & 0. & 0. & 0. & 0. & 0. & 0. \\
0.2& 0.8& 0. & 0. & 0. & 0. & 0. & 0. & 0. & 0. \\
0. & 0. & 0.7& 0.3& 0. & 0. & 0. & 0. & 0. & 0. \\
0. & 0. & 0.2& 0.8& 0. & 0. & 0. & 0. & 0. & 0. \\
0. & 0. & 0. & 0. & 0.7& 0.3& 0. & 0. & 0. & 0. \\
0. & 0. & 0. & 0. & 0.2& 0.8& 0. & 0. & 0. & 0. \\
0. & 0. & 0. & 0. & 0. & 0. & 0.7& 0.3& 0. & 0. \\
0. & 0. & 0. & 0. & 0. & 0. & 0.2& 0.8& 0. & 0. \\
0. & 0. & 0. & 0. & 0. & 0. & 0. & 0. & 0.7& 0.3\\
0. & 0. & 0. & 0. & 0. & 0. & 0. & 0. & 0.2& 0.8
\end{bmatrix}\]}}

\noindent Sparse-high noise matrix:
% \label{Append:noise}
{\tiny{\[
\begin{bmatrix}
0.3& 0.7& 0. & 0. & 0. & 0. & 0. & 0. & 0. & 0. \\
0.2& 0.8& 0. & 0. & 0. & 0. & 0. & 0. & 0. & 0. \\
0. & 0. & 0.3& 0.7& 0. & 0. & 0. & 0. & 0. & 0. \\
0. & 0. & 0.2& 0.8& 0. & 0. & 0. & 0. & 0. & 0. \\
0. & 0. & 0. & 0. & 0.3& 0.7& 0. & 0. & 0. & 0. \\
0. & 0. & 0. & 0. & 0.2& 0.8& 0. & 0. & 0. & 0. \\
0. & 0. & 0. & 0. & 0. & 0. & 0.3& 0.7& 0. & 0. \\
0. & 0. & 0. & 0. & 0. & 0. & 0.2& 0.8& 0. & 0. \\
0. & 0. & 0. & 0. & 0. & 0. & 0. & 0. & 0.3& 0.7\\
0. & 0. & 0. & 0. & 0. & 0. & 0. & 0. & 0.2& 0.8
\end{bmatrix}\]}}

\noindent Uniform-low noise matrix:
% \label{Append:noise}
{\tiny{\[
\begin{bmatrix}
0.258 & 0.075 & 0.09 & 0.085 & 0.07 & 0.082 & 0.077 & 0.091 & 0.092 &
       0.08 \\
0.08 & 0.253& 0.09& 0.085& 0.07& 0.082& 0.077 & 0.091& 0.092&
       0.08\\
0.08 & 0.075 & 0.268& 0.085 & 0.07 & 0.082 & 0.077 & 0.091 & 0.092 &
       0.08 \\
0.08& 0.075& 0.09 & 0.263& 0.07& 0.082& 0.077& 0.091& 0.092&
       0.08\\
0.08  & 0.075 & 0.09 & 0.085 & 0.248& 0.082 & 0.77 & 0.91 & 0.92 &
       0.08 \\
0.08& 0.075& 0.09& 0.085& 0.07& 0.26& 0.077& 0.091& 0.092&
        0.08\\
0.08& 0.075& 0.09& 0.085& 0.07& 0.082& 0.255& 0.091& 0.092&
        0.08\\
0.08& 0.075& 0.09& 0.085& 0.07& 0.082& 0.077& 0.269& 0.092&
        0.08\\
0.08& 0.075& 0.09& 0.085& 0.07& 0.082& 0.077& 0.091& 0.27&
        0.08\\
0.08 & 0.075& 0.09& 0.085& 0.07& 0.082& 0.077& 0.091& 0.092&
        0.258
\end{bmatrix}\]}}

\noindent Uniform-high noise matrix:
% \label{Append:noise}
{\tiny{\[
\begin{bmatrix}
0.58 & 0.045& 0.047& 0.055& 0.053& 0.022& 0.068& 0.054& 0.056&
        0.02 \\
0.05 & 0.0575& 0.047& 0.055& 0.053& 0.022& 0.068& 0.054& 0.056&
        0.02\\
0.05 & 0.045& 0.577& 0.055& 0.053& 0.022& 0.068& 0.054& 0.056&
        0.02\\
0.05 & 0.045& 0.047& 0.585& 0.053& 0.022& 0.068& 0.054& 0.056&
        0.02\\
0.05 & 0.045& 0.047& 0.055& 0.583& 0.022& 0.068& 0.054& 0.056&
        0.02 \\
0.05 & 0.045& 0.047& 0.055& 0.053& 0.552& 0.068& 0.054& 0.056&
        0.02\\
0.05 & 0.045& 0.047& 0.055& 0.053& 0.022& 0.598& 0.054& 0.056&
        0.02\\
0.05 & 0.045& 0.047& 0.055& 0.053& 0.022& 0.068& 0.584& 0.056&
        0.02\\
0.05 & 0.045& 0.047& 0.055& 0.053& 0.022& 0.068& 0.054& 0.586&
        0.02\\
0.05 & 0.045& 0.047& 0.055& 0.053& 0.022& 0.068& 0.054& 0.056&
        0.55
\end{bmatrix}\]}}

\noindent Random 0.2 noise matrix:
{\tiny{\[\begin{bmatrix}
0.82 &0.02& 0.02 &0.02 & 0.02 & 0.02& 0.02& 0.02& 0.02 &0.02\\
0.02 & 0.82& 0.02& 0.02 & 0.02& 0.02& 0.02& 0.02& 0.02& 0.02\\
0.02 &0.02& 0.82& 0.02 & 0.02& 0.02& 0.02& 0.02& 0.02& 0.02\\
0.02 &0.02& 0.02& 0.82 & 0.02& 0.02& 0.02& 0.02& 0.02& 0.02\\
0.02 &0.02& 0.02 &0.02& 0.82& 0.02& 0.02& 0.02& 0.02& 0.02\\
0.02 &0.02& 0.02 &0.02 &0.02& 0.82& 0.02& 0.02& 0.02 &0.02\\
0.02 &0.02& 0.02 &0.02 &0.02& 0.02& 0.82& 0.02& 0.02& 0.02\\
0.02 &0.02& 0.02 &0.02 &0.02& 0.02& 0.02& 0.82& 0.02& 0.02\\
0.02 &0.02& 0.02 &0.02 &0.02& 0.02& 0.02& 0.02& 0.82& 0.02\\
0.02 &0.02& 0.02 &0.02 &0.02& 0.02& 0.02& 0.02& 0.02& 0.82
\end{bmatrix}\]}}

\noindent Random 0.7 noise matrix:
{\tiny{\[\begin{bmatrix}
0.36& 0.07& 0.08& 0.07& 0.08& 0.07& 0.07& 0.07& 0.07& 0.07\\
0.06& 0.39& 0.07 &0.06 &0.07 &0.07 &0.07 &0.07 &0.07&0.07\\
0.07& 0.07 &0.38& 0.08& 0.07& 0.07 &0.07& 0.07& 0.07& 0.07\\
0.08& 0.07& 0.07& 0.36& 0.07& 0.08 &0.07& 0.07& 0.07& 0.07\\
0.07& 0.07& 0.07& 0.07 &0.37 &0.07 &0.07& 0.07 &0.07 &0.07\\
0.07 &0.07 &0.07 &0.08& 0.06& 0.37& 0.07 &0.07 &0.07 &0.07\\
0.07& 0.07& 0.07 &0.07& 0.07 &0.07& 0.38& 0.07 &0.07& 0.07\\
0.07& 0.06& 0.07& 0.07& 0.07& 0.07& 0.07& 0.38& 0.07& 0.07\\
0.06& 0.07 &0.07& 0.07 &0.08 &0.07& 0.07 &0.07 &0.37 &0.07\\
0.07 &0.07& 0.06& 0.07& 0.07& 0.07& 0.08& 0.07& 0.07& 0.37
\end{bmatrix}\]}}

\subsection{Noise transition matrix for CIFAR-10 dataset}
Sparse-low noise matrix:
% \label{Append:noise}
{\tiny{\[
\begin{bmatrix}
0.7& 0.3& 0. & 0. & 0. & 0. & 0. & 0. & 0. & 0. \\
0.1& 0.9& 0. & 0. & 0. & 0. & 0. & 0. & 0. & 0. \\
0. & 0. & 0.7& 0.3& 0. & 0. & 0. & 0. & 0. & 0. \\
0. & 0. & 0.1& 0.9& 0. & 0. & 0. & 0. & 0. & 0. \\
0. & 0. & 0. & 0. & 0.7& 0.3& 0. & 0. & 0. & 0. \\
0. & 0. & 0. & 0. & 0.1& 0.9& 0. & 0. & 0. & 0. \\
0. & 0. & 0. & 0. & 0. & 0. & 0.7& 0.3& 0. & 0. \\
0. & 0. & 0. & 0. & 0. & 0. & 0.1& 0.9& 0. & 0. \\
0. & 0. & 0. & 0. & 0. & 0. & 0. & 0. & 0.7& 0.3\\
0. & 0. & 0. & 0. & 0. & 0. & 0. & 0. & 0.1& 0.9
\end{bmatrix}\]}}

\noindent Sparse-high noise matrix:
% \label{Append:noise}
{\tiny{\[
\begin{bmatrix}
0.4& 0.6& 0. & 0. & 0. & 0. & 0. & 0. & 0. & 0. \\
0.2& 0.8& 0. & 0. & 0. & 0. & 0. & 0. & 0. & 0. \\
0. & 0. & 0.4& 0.6& 0. & 0. & 0. & 0. & 0. & 0. \\
0. & 0. & 0.2& 0.8& 0. & 0. & 0. & 0. & 0. & 0. \\
0. & 0. & 0. & 0. & 0.4& 0.6& 0. & 0. & 0. & 0. \\
0. & 0. & 0. & 0. & 0.2& 0.8& 0. & 0. & 0. & 0. \\
0. & 0. & 0. & 0. & 0. & 0. & 0.4& 0.6& 0. & 0. \\
0. & 0. & 0. & 0. & 0. & 0. & 0.2& 0.8& 0. & 0. \\
0. & 0. & 0. & 0. & 0. & 0. & 0. & 0. & 0.4& 0.6\\
0. & 0. & 0. & 0. & 0. & 0. & 0. & 0. & 0.2& 0.8
\end{bmatrix}\]}}

\noindent Uniform-low noise matrix:
% \label{Append:noise}
{\tiny{\[
\begin{bmatrix}
0.82 & 0.03 & 0.01 & 0.023& 0.017& 0.022& 0.021& 0.018& 0.019&
        0.02  \\
0.02 & 0.83 & 0.01 & 0.023& 0.017& 0.022& 0.021& 0.018& 0.019&
        0.02\\
0.02 & 0.03 & 0.81 & 0.023& 0.017& 0.022& 0.021& 0.018& 0.019&
        0.02 \\
0.02 & 0.03 & 0.01 & 0.823& 0.017& 0.022& 0.021& 0.018& 0.019&
        0.02\\
0.02 & 0.03 & 0.01 & 0.023& 0.817& 0.022& 0.021& 0.018& 0.019&
        0.02 \\
0.02 & 0.03 & 0.01 & 0.023& 0.017& 0.822& 0.021& 0.018& 0.019&
        0.02\\
0.02 & 0.03 & 0.01 & 0.023& 0.017& 0.022& 0.821& 0.018& 0.019&
        0.02\\
0.02 & 0.03 & 0.01 & 0.023& 0.017& 0.022& 0.021& 0.818& 0.019&
        0.02\\
0.02 & 0.03 & 0.01 & 0.023& 0.017& 0.022& 0.021& 0.018& 0.819&
        0.02\\
0.02 & 0.03 & 0.01 & 0.023& 0.017& 0.022& 0.021& 0.018& 0.019&
        0.82
\end{bmatrix}\]}}

\noindent Uniform-high noise matrix:
% \label{Append:noise}
{\tiny{\[
\begin{bmatrix}
0.46& 0.07& 0.04& 0.05& 0.06& 0.04& 0.06& 0.07& 0.08& 0.07 \\
0.05& 0.48& 0.04& 0.05& 0.06& 0.04& 0.06& 0.07& 0.08& 0.07\\
0.05& 0.07& 0.45& 0.05& 0.06& 0.04& 0.06& 0.07& 0.08& 0.07\\
0.05& 0.07& 0.04& 0.46& 0.06& 0.04& 0.06& 0.07& 0.08& 0.07\\
0.05& 0.07& 0.04& 0.05& 0.47& 0.04& 0.06& 0.07& 0.08& 0.07 \\
0.05& 0.07& 0.04& 0.05& 0.06& 0.45& 0.06& 0.07& 0.08& 0.07\\
0.05& 0.07& 0.04& 0.05& 0.06& 0.04& 0.47& 0.07& 0.08& 0.07\\
0.05& 0.07& 0.04& 0.05& 0.06& 0.04& 0.06& 0.48& 0.08& 0.07\\
0.05& 0.07& 0.04& 0.05& 0.06& 0.04& 0.06& 0.07& 0.49& 0.07\\
0.05& 0.07& 0.04& 0.05& 0.06& 0.04& 0.06& 0.07& 0.08& 0.48
\end{bmatrix}\]}}

\noindent Random 0.5 noise matrix:
{\tiny{\[\begin{bmatrix}
0.55 &0.05& 0.05 &0.05 & 0.05 & 0.05& 0.05& 0.05& 0.05 &0.05\\
0.05 & 0.56& 0.05& 0.05 & 0.05& 0.05& 0.05& 0.05& 0.05& 0.05\\
0.05 &0.05& 0.55& 0.05& 0.05& 0.05& 0.05& 0.05& 0.05& 0.05\\
0.05& 0.05& 0.06& 0.54 &0.05& 0.05& 0.05& 0.04& 0.06& 0.06\\
0.05 &0.05 &0.05 &0.05& 0.56& 0.05& 0.05& 0.05& 0.04 &0.05\\
0.05& 0.05 &0.05 &0.05 &0.05& 0.54& 0.05& 0.05& 0.05 &0.05\\
0.04& 0.05& 0.05& 0.05& 0.05& 0.05& 0.55& 0.05& 0.05& 0.05\\
0.04& 0.04& 0.05 &0.05& 0.06& 0.05& 0.04& 0.56& 0.05& 0.05\\
0.06& 0.05 &0.05& 0.05& 0.05& 0.05& 0.05& 0.05& 0.55& 0.05\\
0.05& 0.05& 0.05 &0.05& 0.05 &0.05& 0.05& 0.05& 0.05& 0.55
\end{bmatrix}\]}}

\noindent Random 0.7 noise matrix:
{\tiny{\[\begin{bmatrix}
0.37& 0.07& 0.07& 0.07& 0.07& 0.07& 0.07& 0.07& 0.07& 0.07\\
0.06& 0.38& 0.07 &0.07 &0.07 &0.07 &0.07 &0.07 &0.08&0.07\\
0.07& 0.07 &0.36& 0.07& 0.07& 0.07 &0.07& 0.07& 0.07& 0.08\\
0.07& 0.07& 0.07& 0.37& 0.07& 0.07 &0.07& 0.07& 0.07& 0.07\\
0.07& 0.07& 0.08& 0.07 &0.37 &0.07 &0.07& 0.07 &0.07 &0.07\\
0.07 &0.08 &0.07 &0.07& 0.07& 0.36& 0.07 &0.07 &0.07 &0.06\\
0.07& 0.07& 0.07 &0.07& 0.07 &0.07& 0.37& 0.07 &0.07& 0.07\\
0.07& 0.06& 0.07& 0.07& 0.07& 0.07& 0.07& 0.37& 0.07& 0.07\\
0.07& 0.07 &0.07& 0.07 &0.07 &0.07& 0.07 &0.07 &0.38 &0.06\\
0.07 &0.07& 0.07& 0.08& 0.07& 0.07& 0.07& 0.07& 0.06& 0.37
\end{bmatrix}\]}}

\subsection{Noise transition matrix for CIFAR-100 dataset}
For sparse noise matrix, we randomly divide 100 classes into 50 disjoint pairs, the flipping probability $(T_{ji},T_{ij})$ in each pair is randomly chosen from $(0.05,0.75)$, $(0.1,0.70)$, $(0.15,0.65)$, $(0.2,0.6)$.

\subsection{Parameter settings  on noised dataset}

\paragraph{MNIST, Fashion MNIST, CIFAR-10}
For experiments on MNIST and Fashion-MNIST datasets, we use the convolutional neural network used in DMI for DMI, PL and $f$-divergences. All the experiments are performed with batch size 128. PL and $f$-divergences adopt two kinds of learning rate setting and trained for 80 epochs, either with initial learning rate 5e-4 or 1e-3, then decay 0.2, 0.5, 0.2 every 20 epochs. We choose the default learning rate setting for DMI, BLC and FLC. For DMI's convolutional neural network, Adam(~\cite{Kingma2014AdamAM}) with default parameters is used as the optimizer, while for loss-correction's fully-connected neural network case we use AdaGrad(~\cite{Duchi2010}) in order to be consistent with their works.

\paragraph{CIFAR-10 and CIFAR-100}
For all methods and both datasets, we unify the model to be an 18-layer PreAct Resnet (\cite{he2016identity}) and train it using SGD with a momentum of
0.9, a weight decay of 0.0005, and a batch size of 128. All methods firstly train with CE warm-up for 120 (CIFAR-10) or 240 (CIFAR-100) epochs on CIFAR-10 and CIFAR-100 respectively. For DMI, BLC and FLC, we use the default learning rate settings. For PL and f-divergences, we train 100 epochs after the warm-up with initial learning rate 0.01, and decays 0.1 every 30 epochs.

\paragraph{Clothing 1M}
For clothing 1M, we use pre-trained ResNet50, SGD optimizer with momentum 0.9 and weight decay 1e-3. The initial learning rate is 0.002. All mentioned $f$-divergences trained 40 epochs, after 10 epochs, the learning rate becomes 5e-5. Then it decays 0.2, 0.5 consequently for every 5 epochs. We compare with reported best result for all our baseline methods.

\subsection{Parameter settings on clean dataset}
We adopt the same setting (except for the number of epochs and the learning rate setting) as used in the noised dataset for each dataset.
\paragraph{MNIST, Fashion MNIST}
For CE, we trained the model for 40 epochs. The initial learning rate is 5e-4, and it decays 0.2 after 20 epochs. For $D_f$ measures, the learning rate setting is the same as that in the noised dataset.  

\paragraph{CIFAR-10}
For CE, we trained the model for 300 epochs. Learning rate is 0.1 for first 150 epochs. From 150-th epoch to 250-th epoch, the learning rate is 0.01. Then, 0.001 till the end. For $D_f$ measures, we trained the model for 240 epochs. The initial learning rate is 0.1, and it decays 0.1 for every 60 epochs.

\paragraph{CIFAR-100}
For CE, we trained the model for 200 epochs. Learning rate is 0.1 for first 60 epochs. From 61-th epoch to 120-th epoch, the learning rate is 0.02 (save the model at 120-th epoch as a warm-up model for $D_f$ measures). From 121-th epoch to 160-th epoch, the learning rate is 0.004. Then, 0.0008 till the end. For $D_f$ measures, we load pre-trained CE model and trained for another 100 epochs. The initial learning rate is 0.01, and it decays 0.1 for every 30 epochs.

\subsection{Computing infrastructure}
In our experiments, we use a GPU cluster (8 TITAN V GPUs and 16 GeForce GTX 1080 GPUs) for training and evaluation.

% \end{preview}
\end{document}